\newtheorem{theorem}{Theorem}[section]
\newtheorem{lemma}{Lemma}[section]
\newtheorem{corollary}{Corollary}[section]
\newcommand{\cbar}[1]{\tilde{#1}}
\title{Deep learning with Elastic Averaging SGD}
\author{
Sixin Zhang \\
Courant Institute, NYU \\
\texttt{zsx@cims.nyu.edu} \\
\And
Anna Choromanska \\
Courant Institute, NYU \\
\texttt{achoroma@cims.nyu.edu} \\
\And
Yann LeCun \\
Center for Data Science, NYU \&
Facebook AI Research \\
\texttt{yann@cims.nyu.edu } 
}
\newcommand{\redit}{\textcolor{black}}
\newcommand{\blueit}{\textcolor{black}}
\newenvironment{packed_item}{
\begin{itemize}
  \setlength{\itemsep}{1pt}
  \setlength{\parskip}{0pt}
  \setlength{\parsep}{0pt}
}{\end{itemize}}
\begin{document}

\Macro{L2}{}

\maketitle

\begin{abstract}
We study the problem of stochastic optimization for deep learning in the parallel computing environment under communication constraints. A new algorithm is proposed in this setting where the communication and coordination of work among concurrent processes (local workers), is based on an elastic force which links the parameters they compute with a center variable stored by the parameter server (master). The algorithm enables the local workers to perform more exploration, i.e. the algorithm allows the local variables to fluctuate further from the center variable by reducing the amount of communication between local workers and the master. We empirically demonstrate that in the deep learning setting, due to the existence of many local optima, allowing more exploration can lead to the improved performance. We propose synchronous and asynchronous variants of the new algorithm. We provide the stability analysis of the asynchronous variant in the round-robin scheme and compare it with the more common parallelized method \textit{ADMM}. We show that the stability of \textit{EASGD} is guaranteed when a simple stability condition is satisfied, which is not the case for \textit{ADMM}. We additionally propose the momentum-based version of our algorithm that can be applied in both synchronous and asynchronous settings. Asynchronous variant of the algorithm is applied to train convolutional neural networks for image classification on the \textit{CIFAR} and \textit{ImageNet} datasets. Experiments demonstrate that the new algorithm accelerates the training of deep architectures compared to \textit{DOWNPOUR} and other common baseline approaches and furthermore is very communication efficient.
\end{abstract}

\section{Introduction}
\label{sec:Introduction}

One of the most challenging problems in large-scale machine learning is how to parallelize the training of large models that use a form of stochastic gradient descent (\textit{SGD})~\cite{bottou-98x}. There have been attempts to parallelize \textit{SGD}-based training for
large-scale deep learning models on large number of CPUs, including
the Google's Distbelief system~\cite{2012distbelief}. But practical
image recognition systems consist of large-scale convolutional neural networks
trained on few GPU cards sitting in a
single computer~\cite{krizhevsky2012imagenet,2013arXiv1312.6229S}. The main challenge is to devise parallel \textit{SGD} algorithms to train
large-scale deep learning models that yield a significant speedup when run on multiple GPU cards. 
%  (or a small number of GPU cards) 
%  (particularly convolutional networks)
% (modern machines can host four, or even eight, high-end GPU cards).
% \redit{This optimization problem remains very open in deep learning and to the best of our knowledge there exists only one successful approach in the literature}~\cite{2012distbelief}.

In this paper we introduce the \textit{Elastic Averaging SGD} method (\textit{EASGD}) and its variants. \textit{EASGD} is motivated by quadratic penalty
method~\cite{nocedal2006numerical}, but is re-interpreted as a parallelized extension of the averaging \textit{SGD}
algorithm~\cite{polyak1992acceleration}. The
basic idea is to let each worker maintain its own local % , i.e. a GPU card, 
parameter, and the communication and coordination of work among the local workers is based on an elastic force which links the parameters they compute with a center variable stored by the master. The center variable is updated as a moving average where the average is taken in time and also in space over the parameters computed by local workers. The main contribution of this paper is a new algorithm that provides fast convergent minimization while outperforming \textit{DOWNPOUR} method~\cite{2012distbelief} and other baseline approaches in practice. Simultaneously it reduces the communication overhead between the master and the local workers while at the same time it maintains high-quality performance measured by the test error. The new algorithm applies to deep learning settings such as parallelized training of convolutional neural networks.

The article is organized as follows. Section~\ref{sec:problemset} explains the problem setting, Section~\ref{sec:syncEASGD} presents the synchronous \textit{EASGD} algorithm and its asynchronous and momentum-based variants, Section~\ref{sec:theory} provides stability analysis of \textit{EASGD} and \textit{ADMM} in the round-robin scheme, Section~\ref{sec:Experiments} shows experimental results and Section~\ref{sec:conclusion} concludes. The Supplement contains additional material including additional theoretical analysis. % ,~\ref{sec:asyncEASGD} and~\ref{sec:momentumEASGD}  (asynchronous variant in Section~\ref{sec:asyncEASGD} and momentum-based variant in Section~\ref{sec:momentumEASGD}). 

\section{Problem setting}
\label{sec:problemset}

Consider minimizing a function $F(x)$ in a parallel computing environment~\cite{Bertsekas1989} with $p \in \mathbb{N}$ workers and a master. In this paper we focus on the stochastic optimization problem of the following form %  (each worker uses a single GPU processor) 
\begin{equation}
\vspace{-0.02in}
\min_{x} F(x):=\mathbb{E} [f(x,\xi)],
\label{eq:original}
\end{equation}
where $x$ is the model parameter to be estimated and $\xi$ is a random variable that follows the probability distribution $\mathbb{P}$ over $\Omega$ such that $F(x) = \int_{\Omega}f(x,\xi) \mathbb{P} (d\xi)$. The optimization problem in \blueit{Equation}~\ref{eq:original} can be reformulated as follows
\begin{equation}
\vspace{-0.02in}
\label{eq:penalty}
\min_{x^1,\ldots,x^p,\tilde{x}} \sum_{i=1}^p \mathbb{E} [f(x^i,\xi^i)] + \frac{\rho}{2}\|x^i - \tilde{x}\|^2,
\end{equation}
where each $\xi^i$ follows \blueit{the same distribution} $\mathbb{P}$ (thus we assume each worker can sample the entire dataset). In the paper we refer to $x^i$'s as local variables and we refer to $\tilde{x}$ as a center variable. The problem of the equivalence of these two objectives is studied in the literature and is known as \blueit{the \textit{augmentability} or the \textit{global variable consensus} problem~\cite{hestenes1975optimization,Boyd2011}}. The quadratic penalty term $\rho$ in Equation~\ref{eq:penalty} is expected to ensure that local workers will not fall into different attractors that are far away from the center variable. %we do not \blueit{introduce Lagrangian multipliers}  and \footnote{It will be shown that despite not using Lagrangian multipliers the algorithm we obtain for minimizing this objective achieves asymptotically optimal convergence rate of the center variable in a quadratic case. It is furthermore unclear if this optimal rate could be achieved when using Lagrangians.} and we focus on how fast the center variable converges to the local optimum. As opposed to other methods used for solving the consensus problem, which split the data among the workers, we assume each worker has access to the entire dataset.
This paper focuses on the problem of reducing the parameter communication overhead between the master and local workers~\cite{NIPS2014_5386,2012distbelief,Yadan2013,Paine2013,onebitparallel}. The problem of data communication when the data is distributed among the workers~\cite{Bertsekas1989,Bekkerman:2011:SUM:2107736.2107740} is a more general problem and is not addressed in this work. We however emphasize that our problem setting is still highly \blueit{non-trivial} under the communication constraints due to the existence of many local optima~\cite{ChoromanskaHMAL15}. %was never addressed in the literature before (and thus as well a more general problem when the data is distributed). 
% \yellowit{(we explain the details of sampling the entire dataset in parallel in Section~\ref{sec:Experiments})}
%\yellowit{stochastic} 
%\redit{added citation of quantization}

\section{EASGD update rule}
\label{sec:syncEASGD}

%Note that we focus on a stochastic setting as shown in Equation~\ref{eq:penalty}. The standard methods for solving the consensus problem such as \textit{ADMM}~\cite{Boyd2011,icml2014c2_zhange14,wei2012distributed} therefore no longer apply. 
%Let us introduce the following notation
%\[\Phi(x^i,\tilde{x}) := \mathbb{E}[f(x^i,\xi^i)] + \frac{\rho}{2}\|x^i - \tilde{x}\|^2.
%\]
The \textit{EASGD} updates captured in resp. Equation~\ref{eq:update1} and~\ref{eq:update2} are obtained by taking the gradient descent step on the objective in Equation \ref{eq:penalty} with respect to resp. variable $x^i$ and $\tilde{x}$, % $\Phi(x^i,\tilde{x})$,
%\begin{equation}
%L_t^{\rho}(x^i,\tilde{x}) \!=\! \left[\!\langle g_t^i(x_t^i) \!+\! \rho(x_t^i \!-\! \tilde{x}_t),x^i \!\!-\!\! x_t^i\rangle \!+\! \frac{1}{2\eta}\|x^i \!\!-\!\! x_t^i\|^2\right] \!\!+\!\! \left[\rho\langle\tilde{x}_t \!-\! x_t^i,\tilde{x} \!-\! \tilde{x}_t\rangle \!+\! \frac{1}{2\eta}\|\tilde{x} \!-\! \tilde{x}_t\|^2\right],
%\label{eq:linear}
%\end{equation}
%where the first component of the summand captured in the first squared brackets in Equation~\ref{eq:linear} comes from linearizing $\Phi(x^i,\tilde{x})$ with respect to $x^i$ and the second one comes from linearizing $\Phi(x^i,\tilde{x})$ with respect to $\tilde{x}$. 
\begin{eqnarray}
x^i_{t+1} &=& x^i_t - \eta (g^i_t(x_t^i) + \rho (x^i_t - \tilde{x}_t) ) \label{eq:update1}\\
\tilde{x}_{t+1} &=& \tilde{x}_t + \eta \sum_{i=1}^p \rho (x^i_t - \tilde{x}_t), \label{eq:update2}
\end{eqnarray}
where $g_t^i(x_t^i)$ denotes the stochastic gradient of $F$ with respect to $x^i$ evaluated at iteration $t$, $x^i_t$ and $\tilde{x}_t$ denote respectively the value of variables $x^{i}$ and $\tilde{x}$ at iteration $t$, and $\eta$ is the learning rate. %Linearization directly leads to the update rule used by the \textit{EASGD} 
%algorithm of the following form
%The above update\footnote{In the literature the update for the master involves $x_{t+1}^i$ instead of $x_t^i$. However we obtained better experimental performance in the asynchronous case when having $x_t^i$ in the update rule for the master.} can be easily implemented in the parallel computing environment with one master and $p$ workers where the updates performed by the workers, each one having the form captured in Equation~\ref{eq:update1}, can be carried out independently, in parallel, and the master performs the update captured in Equation~\ref{eq:update2}. 

The update rule for the center variable $\tilde{x}$ takes the form of moving average where the average is taken over both space and time. Denote $\alpha = \eta\rho$ and $\beta=p\alpha$, then  Equation~\ref{eq:update1} and~\ref{eq:update2} become%  (in the paper we refer to $\alpha$ as a moving rate) 
\begin{eqnarray}
x^i_{t+1} &=& x^i_t - \eta g^i_t(x_t^i) - \alpha (x^i_t - \tilde{x}_t) \label{eq:movingaverage1}\\
\tilde{x}_{t+1} &=& (1 - \beta) \tilde{x}_t + \beta \left(\frac{1}{p} \sum_{i=1}^p x^i_t\right).
\label{eq:movingaverage2}
\end{eqnarray}% Section~\ref{sec:counterexample}

% The reason we update the master with $x_t^i$ is to minimize the staleness~\cite{NIPS2013_4894} of difference $x_t^i- \tilde{x}_t$.

%\textcolor{red}{As opposed to stochastic \textit{ADMM}~\cite{DBLP:conf/icml/AzadiS14,ouyang2013stochastic}, for which $\alpha=\eta \rho/(1+\eta \rho)$ and $\beta = 1$ (we show it in the Supplement), we also linearize the penalty term $\frac{\rho}{2}\|x^i - \tilde{x}\|^2$.} 
Note that choosing $\beta=p \alpha$ leads to an elastic symmetry in the update rule, i.e. there exists an symmetric force equal to $\alpha(x_t^i - \tilde{x}_t)$ between the update of each $x^i$ and $\tilde{x}$.
It has a crucial influence on the algorithm's stability as will be explained in Section~\ref{sec:theory}. 
Also in order to minimize the staleness~\cite{NIPS2013_4894} of the difference $x_t^i- \tilde{x}_t$ between the center and the local variable, the update for the master in Equation~\ref{eq:update2} involves $x_t^i$ instead of $x_{t+1}^i$.

Note also that $\alpha = \eta \rho$, where the magnitude of $\rho$  represents the amount of exploration we allow in the model. In particular, small $\rho$ allows for more exploration as it allows $x^i$'s to fluctuate further from the center $\tilde{x}$. 
%non-convex setting, e.g.
% (small $\rho$). % In particular we demonstrate in the experimental section that \textit{EASGD} and its momentum-based variant allow to increase the amount of exploration by reducing the communication between local workers and the master. 
%Note that standard approaches in the literature, like stochastic \textit{ADMM}~\cite{DBLP:conf/icml/AzadiS14}, typically explore large values of $\rho$ to heavily push local $x^{i}$'s to the center $\tilde{x}$ preventing exploration. 
%This is because these approaches care how fast the local variables converge to the local optimum. In our setting we instead focus on how fast the center variable converges. Thus our goal is the same as for the \textit{DOWNPOUR} method~\cite{2012distbelief}.
The distinctive idea of \textit{EASGD} is to allow the local workers to perform more exploration (small $\rho$) and the master to perform exploitation. This approach differs from other settings explored in the literature~\cite{2012distbelief,DBLP:conf/icml/AzadiS14,doi:10.1137/S0363012995282784,Nedic2001381,langford2009slow,NIPS2011_0574,DBLPRechtRWN11,zinkevich2010parallelized}, and focus on how fast the center variable converges. In this paper we show the merits of our approach in the deep learning setting.

%Finally note that we use the same learning rate $\eta$ when linearizing $\Phi(x^i,\tilde{x})$ with respect to $x^i$ and $\tilde{x}$. This leads to a simple algorithm where there exists an elastic force equal to $\alpha(x_t^i - \tilde{x}_t)$ between the update of each $x^i_t$ and $\tilde{x}$. Note that using different $\eta$'s in Equation~\ref{eq:linear} may lead to more complicated update rules with preconditioning of the gradients of $\Phi(x^i,\tilde{x})$. This is a more complicated approach that we do not analyze in this paper. We emphasize however that our algorithm can still achieve the asymptotic optimal convergence rate of the center variable as will be shown in Section~\ref{sec:theory}.

\subsection{Asynchronous EASGD}
\label{sec:asyncEASGD}

%In Section~\ref{sec:syncEASGD}
We discussed the synchronous update of \textit{EASGD} algorithm in the previous section. %, where the workers update the local variables in parallel such that the $i^{\text{th}}$ worker reads the current value of the center variable and use it to update local variable $x^i$ using Equation~\ref{eq:update1}. % All workers share the same global clock. The master has to wait for the $x^i$ updates from all $p$ workers before being allowed to update the value of the center variable $\tilde{x}$ according to Equation~\ref{eq:update2}.
In this section we propose its asynchronous variant. The local workers are still responsible for updating the local variables $x^i$'s, whereas the master is updating the center variable $\tilde{x}$. Each worker maintains its own clock $t^i$, which starts from $0$ and is incremented by $1$ after each stochastic gradient update of $x^{i}$ as shown in Algorithm~\ref{alg:async}. The master performs an update whenever the local workers finished $\tau$ steps of their gradient updates, where we refer to $\tau$ as the \textit{communication period}. As can be seen in Algorithm~\ref{alg:async}, whenever $\tau$ divides the local clock of the $i^{\text{th}}$ worker, the $i^{\text{th}}$ worker communicates with the master and requests the current value of the center variable $\tilde{x}$. The worker then waits until the master sends back the requested parameter value, and computes the elastic difference $\alpha (x - \tilde{x})$ (this entire procedure is captured in step a) in Algorithm~\ref{alg:async}). The elastic difference is then sent back to the master (step b) in Algorithm~\ref{alg:async}) who then updates $\tilde{x}$. 

The communication period $\tau$ controls the frequency of the communication between every local worker and the master, and thus the trade-off between exploration and exploitation.

\begin{minipage}{6.9cm}
\vspace{-0.3in}
\RestyleAlgo{boxruled}
\begin{algorithm}[H]
\caption{Asynchronous EASGD: \newline Processing by worker $i$ and the master}
\label{alg:async}
\begin{tabular}{l}
\textbf{Input:} learning rate $\eta$, moving rate $\alpha$,\\ $\:\:\:\:\:\:\:\:\:\:\:\:\:\:$communication period $\tau \in \mathbb{N}$\\
 \textbf{Initialize:} $\tilde{x}$ is initialized randomly, $x^i = \tilde{x}$,$\:\:\:$ \\$\:\:\:\:\:\:\:\:\:\:\:\:\:\:\:\:\:\:\:\:t^i = 0$\\
\hline
\textbf{Repeat}\\
\:\:\:\:\:$x \leftarrow x^i$\\
\:\:\:\:\:\textbf{if} ($\tau $ divides $t^i $) \textbf{then}\\
\:\:\:\:\:\:\:\:\:\:\:\textbf{a)}\:$x^i \leftarrow x^i - \alpha (x - \tilde{x}) $\\
\:\:\:\:\:\:\:\:\:\:\:\textbf{b)}\:$\tilde{x} \:\:\:\!\!\leftarrow \tilde{x} \:\:\!+ \alpha (x - \tilde{x}) $\\
\:\:\:\:\:\textbf{end}\\
\:\:\:\:\:$ x^i \leftarrow x^i - \eta g^i_{t^i}(x)$\\
\:\:\:\:\:$t^i\:\:\:\:\!\!\!\!\leftarrow \:\!t^i \:\:\!\!+ 1$\\
\textbf{Until forever}
\end{tabular}
\end{algorithm}
\end{minipage}
\hspace{0.05in}
\begin{minipage}{6.9cm}
\RestyleAlgo{boxruled}
\begin{algorithm}[H]
\caption{Asynchronous EAMSGD: \newline Processing by worker $i$ and the master}
\label{alg:asyncM}
\begin{tabular}{l}
\textbf{Input:} learning rate $\eta$, moving rate $\alpha$,\\  $\:\:\:\:\:\:\:\:\:\:\:\:\:\:$communication period $\tau \in \mathbb{N}$,\\ $\:\:\:\:\:\:\:\:\:\:\:\:\:\:$momentum term $\delta$\\
 \textbf{Initialize:} $\tilde{x}$ is initialized randomly, $x^i = \tilde{x}$,$\:\:\:$\\ $\:\:\:\:\:\:\:\:\:\:\:\:\:\:\:\:\:\:\:\:v^i = 0$, $t^i = 0$\\
\hline
\textbf{Repeat}\\
\:\:\:\:\:$x \leftarrow x^i$\\
\:\:\:\:\:\textbf{if} ($\tau $ divides $t^i $) \textbf{then}\\
\:\:\:\:\:\:\:\:\:\:\:\textbf{a)}\:$x^i \leftarrow x^i - \alpha (x - \tilde{x})$\\
\:\:\:\:\:\:\:\:\:\:\:\textbf{b)}\:$\tilde{x} \:\:\:\!\!\leftarrow \tilde{x} \:\:\!+ \alpha (x - \tilde{x}) $\\
\:\:\:\:\:\textbf{end}\\
\:\:\:\:\:$ v^i \leftarrow \delta v^i - \eta g^i_{t^i}(x + \delta v^i)$\\
\:\:\:\:\:$ x^i \leftarrow x^i + v^i$\\
\:\:\:\:\:$t^i\:\:\:\:\!\!\!\!\leftarrow \:\!t^i \:\:\!\!+ 1$\\
\textbf{Until forever}
\end{tabular}
\end{algorithm}
\end{minipage}

\vspace{-0.2in}
\subsection{Momentum EASGD}
\label{sec:momentumEASGD}

The momentum EASGD (\textit{EAMSGD}) is a variant of our Algorithm~\ref{alg:async} and is captured in Algorithm~\ref{alg:asyncM}. It is based on the Nesterov's momentum scheme~\cite{Nesterov:2005:SMN:1058099.1058103, lan2012optimal, icml2013_sutskever13}, where the update of the local worker of the form captured in Equation~\ref{eq:update1} is replaced by the following update
\begin{eqnarray}
v_{t+1}^i &=& \delta v_t^i - \eta g^i_t(x_t^i + \delta v_t^i )\\
x^i_{t+1} &=& x^i_t + v_{t+1}^i  - \eta\rho (x^i_t - \tilde{x}_t) \nonumber,
\end{eqnarray}
where $\delta$ is the momentum term. Note that when $\delta = 0$ we recover the original \textit{EASGD} algorithm.

As we are interested in reducing the communication overhead in the parallel computing environment where the parameter vector is very large, we will be exploring in the experimental section the asynchronous \textit{EASGD} algorithm and its momentum-based variant in the relatively large $\tau$ regime (less frequent communication). %achieve good performance with 

\section{Stability analysis of EASGD and ADMM in the round-robin scheme}
\label{sec:theory}

In this section we study the stability of the asynchronous \textit{EASGD} and \textit{ADMM} methods in the round-robin scheme~\cite{langford2009slow}.
We first state the updates of both algorithms in this setting, and then we study their stability. We will show that in the one-dimensional quadratic case, \textit{ADMM} algorithm can exhibit chaotic behavior,
leading to exponential divergence. The analytic condition for the \textit{ADMM} algorithm to be stable is still unknown, while for the \textit{EASGD} algorithm it is very simple\footnote{This condition resembles the stability condition for the synchronous \textit{EASGD} algorithm (Condition~\ref{eq:condition} for $p=1$) in the analysis in the Supplement.}.

The analysis of the synchronous \textit{EASGD} algorithm, including its convergence rate, and its averaging property, in the quadratic and strongly convex case, is deferred to the Supplement.

In our setting, the \textit{ADMM} method~\cite{Boyd2011,icml2014c2_zhange14,ouyang2013stochastic} involves solving the following minimax problem\footnote{The convergence analysis in~\cite{icml2014c2_zhange14} is based on the assumption that ``At any master iteration, updates from the workers have the same probability of arriving at the master.", which
is not satisfied in the round-robin scheme.},
\begin{equation}
\max_{\lambda^1,\ldots,\lambda^p} \min_{x^1,\ldots,x^p,\tilde{x}} 
\sum_{i=1}^p F(x^i) - \lambda^i (x^i-\tilde{x}) + \frac{\rho}{2}\|x^i - \tilde{x}\|^2,
\end{equation}
where $\lambda^{i}$'s are the Lagrangian multipliers. The resulting updates of the \textit{ADMM} algorithm in the round-robin scheme are given next. Let $t\ge0$ be a global clock. At each $t$, we linearize the function $F(x^i) $ with $ F(x^i_t)+\scalprod{L2}{\nabla{F}(x^i_t)}{x^i-x^i_t}+\frac{1}{2\eta} \norm{L2}{x^i-x^i_t}^2$ as in~\cite{ouyang2013stochastic}. The updates become 
\begin{eqnarray}
\lambda^{i}_{t+1} &=& 
\left\{ \begin{array}{ll}
\lambda^{i}_t -  (x^{i}_t- \tilde{x}_t) & \mbox{if $\mod(t,p) = i-1$};\\
        \lambda^i_t  & \mbox{if $\mod(t,p) \ne i-1$}.\end{array} \right.  \label{eq:admm_update1} \\
x^{i}_{t+1} &=& 
\left\{ \begin{array}{ll}
\frac{x^i_t - \eta \nabla{F}(x^{i}_t)+ \eta \rho (\lambda^i_{t+1} + \tilde{x}_t)}{1+\eta \rho}
& \mbox{if $\mod(t,p) = i-1$};\\
x^i_t  & \mbox{if $\mod(t,p) \ne i-1$}.\end{array} \right. \label{eq:admm_update2}\\
\tilde{x}_{t+1} &=& \frac{1}{p} \sum_{i=1}^p ( x^i_{t+1} - \lambda^i_{t+1}).  \label{eq:admm_update3}
\end{eqnarray}
Each local variable $x^i$ is periodically updated (with period $p$). First, the Lagrangian multiplier $\lambda^{i}$ is updated with the dual ascent update
as in Equation \ref{eq:admm_update1}. It is followed by the gradient descent update of the local variable 
as given in Equation \ref{eq:admm_update2}. Then the center variable $\tilde{x}$ is updated 
with the most recent values of all the local variables and Lagrangian multipliers as in Equation \ref{eq:admm_update3}.
Note that since the step size for the dual ascent update is chosen to be $\rho$ by convention~\cite{Boyd2011,icml2014c2_zhange14,ouyang2013stochastic}, we have 
re-parametrized the Lagrangian multiplier to be $\lambda^{i}_{t} \leftarrow \lambda^{i}_{t}/\rho$ in
the above updates.

The \textit{EASGD} algorithm in the round-robin scheme is defined similarly and is given below
\begin{eqnarray}
x^{i}_{t+1} &=& 
\left\{ \begin{array}{ll}
x^i_t - \eta \nabla{F}(x^{i}_t) - \alpha (x_t^i - \tilde{x}_t)
& \mbox{if $\mod(t,p) = i-1$};\\
x^i_t  & \mbox{if $\mod(t,p) \ne i-1$}.\end{array} \right. \label{eq:easgd_update1}\\
\tilde{x}_{t+1} &=&  \tilde{x}_{t} + \sum_{i: \mod(t,p) = i-1} \alpha ( x^i_{t}  -   \tilde{x}_{t}).  \label{eq:easgd_update2}
\end{eqnarray}
At time $t$, only the $i$-th local worker (whose index $i-1$ equals $t$ modulo $p$) is activated,
and performs the update in Equations~\ref{eq:easgd_update1} which is followed by the master update given in Equation~\ref{eq:easgd_update2}.

We will now focus on the one-dimensional quadratic case without noise, i.e. $F(x) = \frac{x^2}{2},x\in \mathbb{R}$.

%\begin{eqnarray}
%\lambda^{i}_{t+1} &=& 
%\lambda^{i}_t - (x^{i}_t- \tilde{x}_t)   \label{eq:admm_local1}  \\
%x^{i}_{t+1} &=& 
%\frac{x^i_t - \eta x^{i}_t + \eta \rho (\lambda^i_{t+1}  + \tilde{x}_t)}{1+\eta \rho}  \label{eq:admm_local2} \\
%\tilde{x}_{t+1} &=& \frac{1}{p} \sum_{i=1}^p ( x^i_{t+1} - \lambda^i_{t+1}). \label{eq:admm_local3}
%\end{eqnarray}
%Note that as the step size for the dual ascent update is chosen to be $\rho$ by convention, we have 
%reparametrized the Lagrangian multiplier to be $\lambda^{i}_{t} = \lambda^{i}_{t}/\rho$ to
%simplify the above equations. Now we can rewrite the updates as a linear map from $\mathbb{R}^{2p+1}$ to itself, as follows.
For the \textit{ADMM} algorithm, let the state of the (dynamical) system at time $t$ be $s_t=(\lambda_t^1,x_t^1,\ldots,\lambda_t^p,x_t^p,\tilde{x}_t) \in \mathbb{R}^{2p+1}$. The local worker $i$'s updates in Equations~\ref{eq:admm_update1},~\ref{eq:admm_update2}, and~\ref{eq:admm_update3} are composed of
three linear maps which can be written as $s_{t+1}=(F^i_3 \circ F^i_2 \circ  F^i_1)  (s_t)$. For simplicity,
we will only write them out below for the case when $i=1$ and $p=2$:
\small{\[ 
F_1^1\!\!=\!\! \left( \begin{array}{ccccc}
1 & -1 & 0 & 0 & 1 \\
0 & 1& 0 & 0 & 0  \\
0 & 0 & 1 &0  &0 \\
0 & 0 & 0 &1  &0 \\
0 & 0 & 0 &0  &1\\
\end{array} \right)\!\!,
F_2^1\!\!=\!\! \left( \begin{array}{ccccc}
1 & 0 & 0 & 0 & 0 \\
\frac{\eta \rho}{1+\eta \rho} & \frac{1-\eta}{1+\eta\rho} & 0 & 0 & \frac{\eta\rho}{ 1+\eta\rho} \\
0 & 0 & 1 &0  &0 \\
0 & 0 & 0 &1  &0 \\
0 & 0 & 0 &0  &1\\
\end{array} \right)\!\!,
F_3^1\!\!=\!\! \left( \begin{array}{ccccc}
1 & 0 & 0 & 0 & 0 \\
0 & 1 & 0 & 0 & 0  \\
0 & 0 & 1 &0  &0 \\
0 & 0 & 0 &1  &0 \\
-\frac{1}{p} & \frac{1}{p} & -\frac{1}{p} &\frac{1}{p}  &0\\
\end{array} \right)\!\!.
\]}

\normalsize
For each of the $p$ linear maps, it's possible to find a simple condition such that 
each map, where the $i^{\text{th}}$ map has the form $F^i_3 \circ F^i_2 \circ  F^i_1$, is stable (the absolute value of the eigenvalues of the map are smaller or equal to one). 
However, when these non-symmetric maps are composed one after another as follows $\mathcal{F}=F^p_3 \circ F^p_2 \circ  F^p_1 \circ \ldots \circ F^1_3 \circ F^1_2 \circ  F^1_1$,
the resulting map $\mathcal{F}$ can become unstable! 
(more precisely, some eigenvalues of the map can sit outside
the unit circle in the complex plane).

We now present the numerical conditions for which the \textit{ADMM} algorithm becomes unstable in the round-robin scheme for $p=3$ and $p=8$, by computing the largest absolute eigenvalue of the map $\mathcal{F}$. %We examine the stability of the algorithm by computing the largest absolute eigenvalue of the map $\mathcal{F}$. % =F^p_3 \circ F^p_2 \circ  F^p_1 \circ \ldots \circ F^1_3 \circ F^1_2 \circ  F^1_1
%We compute this value by 
%under various choice of $\eta$ and $\rho$
Figure \ref{fig:admm_fig} summarizes the obtained result.
% in Equations ~\ref{eq:admm_update1},~\ref{eq:admm_update2}, and~\ref{eq:admm_update3}
\begin{figure}[h]
\vspace{-0.15in}
  \center
\includegraphics[trim=0in 0in 0in 0in,clip,width = 2.5in]{./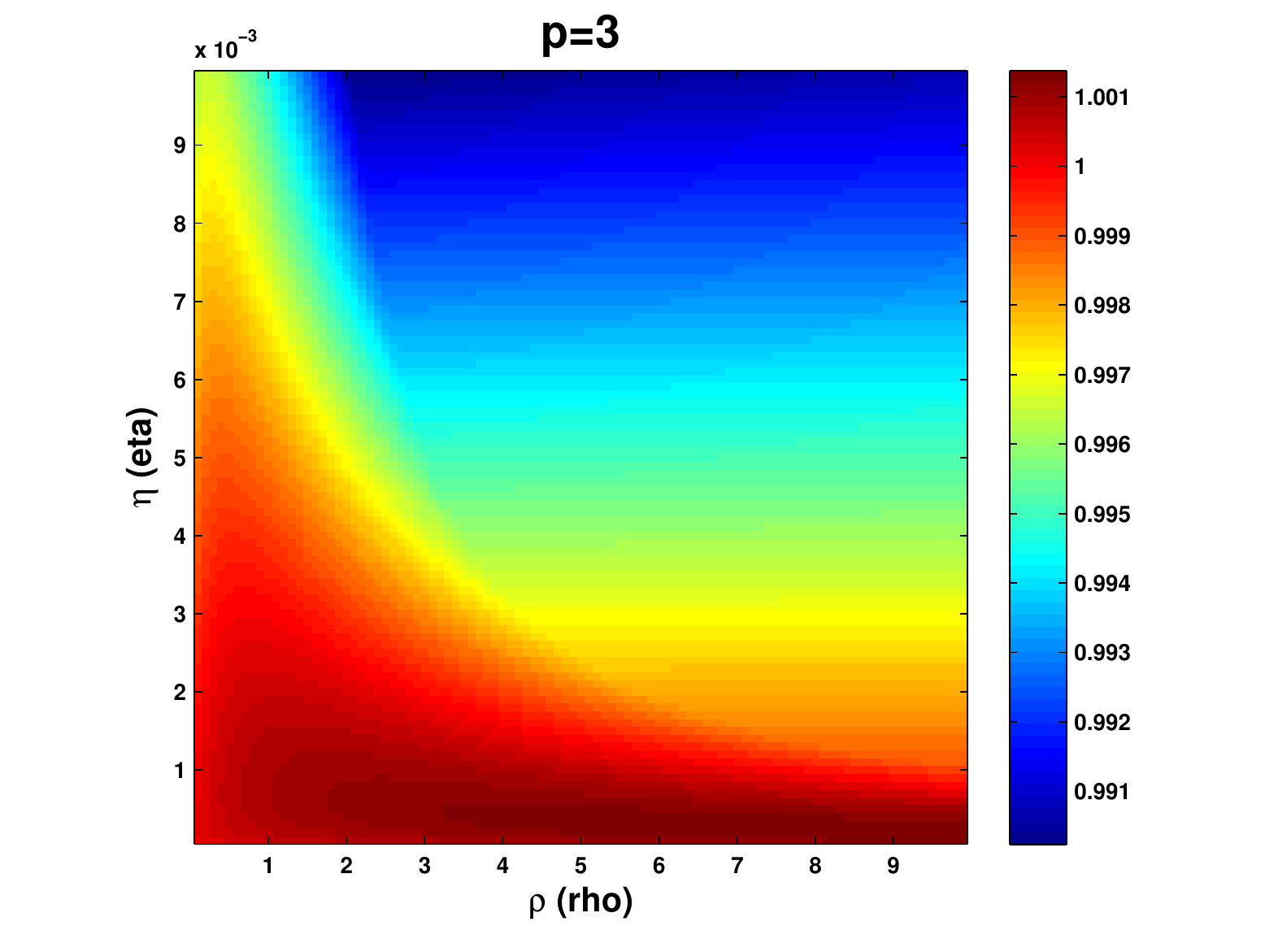}
\includegraphics[trim=0in 0in 0in 0in,clip,width = 2.5in]{./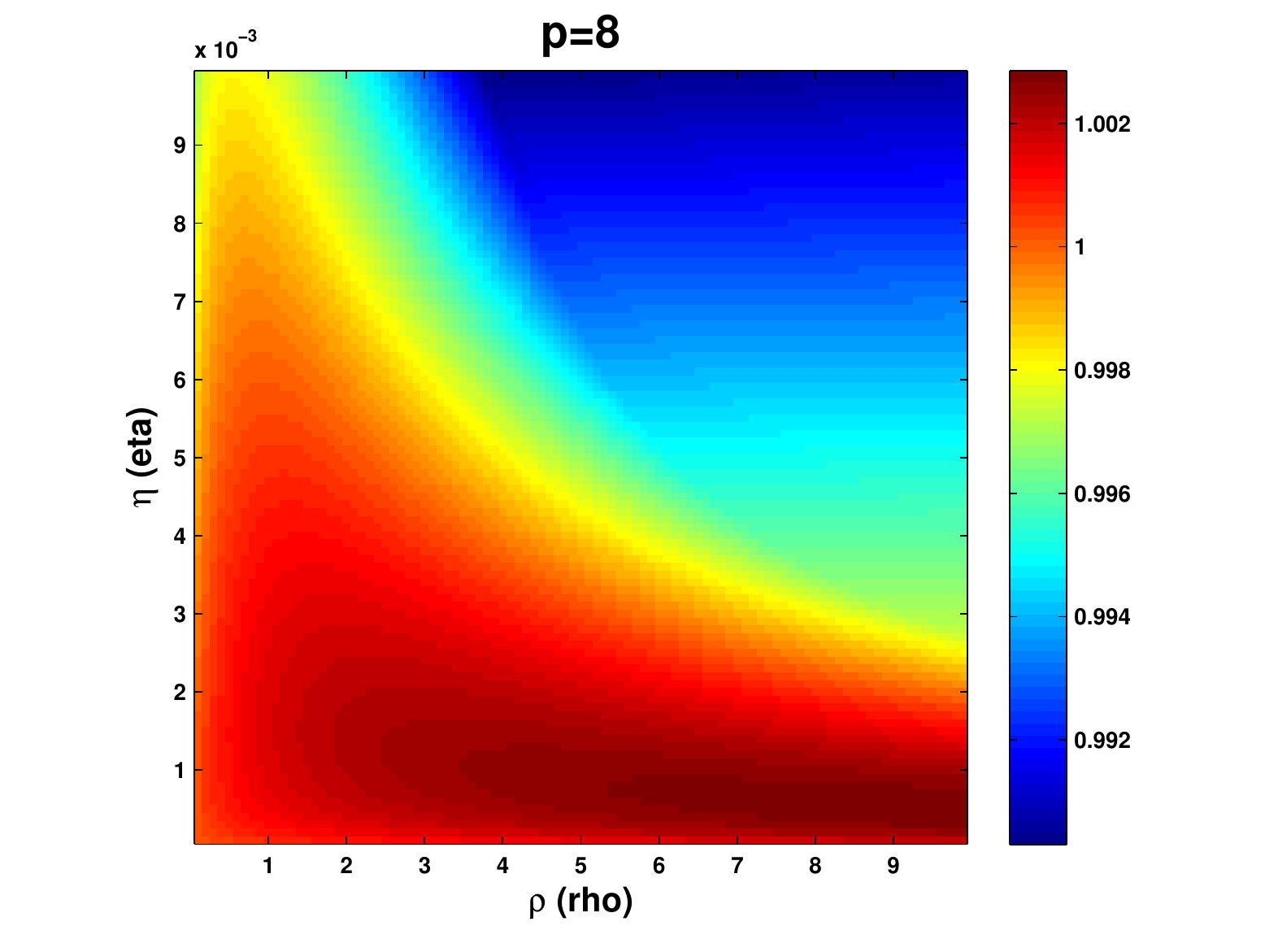}
\vspace{-0.15in}
\caption{The largest absolute eigenvalue of the linear map $\mathcal{F}=F^p_3 \circ F^p_2 \circ  F^p_1 \circ \ldots \circ F^1_3 \circ F^1_2 \circ  F^1_1$ as a function of $\eta \in (0,10^{-2})$ and $\rho \in (0,10)$ when $p=3$ and $p=8$. 
To simulate the chaotic behavior of the \textit{ADMM} algorithm, one may pick $\eta=0.001$ and $\rho=2.5$ 
and initialize the state $s_0$ either randomly or with $\lambda^i_0=0,x^i_0=\tilde{x}_0=1000, \forall i$.
\textit{Figure should be read in color}.}
\label{fig:admm_fig}
\vspace{-0.15in}
\end{figure}

On the other hand, the \textit{EASGD} algorithm involves composing only symmetric linear maps due to
the elasticity. Let the state of the (dynamical) system at time $t$ be $s_t=(x^1_t,\ldots,x^p_t,\tilde{x}_t) \in \mathbb{R}^{p+1}$. The activated local worker $i$'s update in Equation~\ref{eq:easgd_update1} and the master update in Equation~\ref{eq:easgd_update2} can be written as $s_{t+1}=F^i(s_t)$.
In case of $p=2$, the map $F^1$ and $F^2$ are defined as follows
\[
F^1\!\!=\!\! \left( \begin{array}{ccc}
1-\eta-\alpha & 0 & \alpha \\
0 & 1& 0 \\
\alpha & 0 &1-\alpha \\
\end{array} \right)\!\!,
F^2\!\!=\!\! \left( \begin{array}{ccc}
1 & 0 & 0 \\
0 & 1-\eta-\alpha& \alpha \\
0 & \alpha &1-\alpha \\
\end{array} \right)\!\!
\]
For the composite map $F^p \circ \ldots  \circ  F^1 $ to be stable, the condition that needs to be satisfied is actually the same for each $i$, and is furthermore independent of $p$ (since each linear map $F^i$ is symmetric). It essentially involves the stability of the $2\times 2$ matrix $ \small\left( \begin{array}{cc}
1-\eta-\alpha & \alpha \\
\alpha &1-\alpha 
\end{array} \right)$, whose two (real) eigenvalues $\lambda$ satisfy $(1-\eta-\alpha-\lambda)(1-\alpha-\lambda)=\alpha^2$. The resulting stability condition ($|\lambda| \le 1$) is simple and given as $0 \le \eta \le 2, 0 \le \alpha \le \frac{4-2 \eta}{4-\eta}$.

\section{Experiments}
\label{sec:Experiments}

In this section we compare the performance of \textit{EASGD} and \textit{EAMSGD} with the parallel method \textit{DOWNPOUR} and the sequential method \textit{SGD}, as well as their averaging and momentum variants. 

All the parallel comparator methods are listed below\footnote{We have \redit{compared} asynchronous \textit{ADMM}~\cite{icml2014c2_zhange14} with \textit{EASGD} in our setting as well, the performance is nearly the same. However, \textit{ADMM}'s momentum variant is not as stable for large communication periods.}:
\begin{packed_item}
\vspace{-0.05in}
\item \textit{DOWNPOUR}~\cite{2012distbelief}, the pseudo-code of the implementation of \textit{DOWNPOUR} used in this paper is enclosed in the Supplement.
\item \textit{Momentum DOWNPOUR} (\textit{MDOWNPOUR}), where the Nesterov's momentum scheme is applied to the master's update (note it is unclear how to apply it to the local workers or for the case when $\tau>1$). The pseudo-code is in the Supplement.
\item A method that we call \textit{ADOWNPOUR}, where we compute the average over time of the center variable $\tilde{x}$ as follows: $z_{t+1} = (1-\alpha_{t+1}) z_t + \alpha_{t+1} \tilde{x}_t$,
and $\alpha_{t+1} = \frac{1}{t+1}$ is a moving rate, and $z_0 = \tilde{x}_0$. $t$ denotes the master clock, which is initialized to $0$ and incremented every time the center variable $\tilde{x}$ is updated.
\item A method that we call \textit{MVADOWNPOUR}, where we compute the moving average of the center variable $\tilde{x}$ as follows: $z_{t+1} = (1-\alpha) z_t + \alpha \tilde{x}_t$,
and the moving rate $\alpha$ was chosen to be constant, and $z_0 = \tilde{x}_0$. $t$ denotes the master clock and is defined in the same way as for the \textit{ADOWNPOUR} method.
\end{packed_item}
All the sequential comparator methods ($p=1$) are listed below:
\begin{packed_item}
\vspace{-0.05in}
\item \textit{SGD}~\cite{bottou-98x} with constant learning rate $\eta$.
\item \textit{Momentum SGD} (\textit{MSGD})~\cite{icml2013_sutskever13} with constant momentum $\delta$.
\item \textit{ASGD}~\cite{polyak1992acceleration} with moving rate $\alpha_{t+1} = \frac{1}{t+1}$.
\item \textit{MVASGD}~\cite{polyak1992acceleration} with moving rate $\alpha$ set to a constant.
\end{packed_item}

We perform experiments in a deep learning setting on two benchmark datasets: CIFAR-10 (we refer to it as \textit{CIFAR}) \footnote{Downloaded from \url{http://www.cs.toronto.edu/~kriz/cifar.html}.} and ImageNet ILSVRC 2013 (we refer to it as \textit{ImageNet}) \footnote{Downloaded from \url{http://image-net.org/challenges/LSVRC/2013}.}. We focus on the image classification task with deep convolutional neural networks. We next explain the experimental setup. The details of the data preprocessing and prefetching are deferred to the Supplement.

\subsection{Experimental setup}
For all our experiments we use a GPU-cluster interconnected with InfiniBand. Each node has $4$ Titan GPU processors where each local worker corresponds to one GPU processor. The center variable of the master is stored and updated on the centralized parameter server~\cite{2012distbelief}\footnote{Our implementation is available at \url{https://github.com/sixin-zh/mpiT}.}.

To describe the architecture of the convolutional neural network, we will first introduce a notation. Let $(c,y)$ denotes the size of the input image to each layer, where $c$ is the number of color channels and $y$ is both the horizontal and the vertical dimension of the input. Let $C$ denotes the fully-connected convolutional operator and let $P$ denotes the max pooling operator, $D$ denotes the linear operator with dropout rate equal to $0.5$ and $S$ denotes the linear operator with softmax output non-linearity. We use the cross-entropy loss and all inner layers use rectified linear units.
For the \textit{ImageNet} experiment we use the similar approach to~\cite{2013arXiv1312.6229S} with the following $11$-layer convolutional neural network (3,221)C(96,108)P(96,36)C(256,32)P(256,16)C(384,14)
C(384,13)C(256,12)P(256,6)D(4096,1)D(4096,1)S(1000,1). 
For the \textit{CIFAR} experiment we use the similar approach to~\cite{DBLPWanZZLF13} with the following $7$-layer convolutional neural network (3,28)C(64,24)P(64,12)C(128,8)P(128,4)C(64,2)D(256,1)S(10,1).

In our experiments all the methods we run use the same initial parameter chosen randomly, except that we set all the biases to zero for \textit{CIFAR} case and to 0.1 for  \textit{ImageNet} case. This parameter is used to initialize the master and all the local workers\footnote{On the contrary, initializing the local workers and the master with different random seeds \redit{'traps'} the algorithm in the symmetry breaking phase.}. We add $l_2$-regularization $\frac{\lambda}{2}\norm{}{x}^2$ to the loss function $F(x)$. For \textit{ImageNet} we use $\lambda = 10^{-5}$ and for \textit{CIFAR} we use $\lambda = 10^{-4}$. We also compute the stochastic gradient using mini-batches of sample size $128$.

\subsection{Experimental results}

For all experiments in this section we use \textit{EASGD} with $\beta = 0.9$\footnote{Intuitively the 'effective $\beta$' is $\beta/\tau = p \alpha = p \eta \rho$ (thus $\rho = \frac{\beta}{\tau p \eta}$) in the asynchronous setting.}
, for all momentum-based methods we set the momentum term $\delta = 0.99$ and finally for \textit{MVADOWNPOUR} we set the moving rate to $\alpha = 0.001$. We start with the experiment on \textit{CIFAR} dataset with $p=4$ local workers running on a single computing node. For all the methods, we examined the communication periods from the following \redit{set} $\tau = \{1,4,16,64\}$. For comparison we also report the performance of \textit{MSGD} which outperformed \textit{SGD}, \textit{ASGD} and \textit{MVASGD} as shown in Figure~\ref{fig:CIFAR_supp} in the Supplement. For each method we examined a wide range of learning rates (the learning rates explored in all experiments are summarized in Table~\ref{tab:learningrate},~\ref{tab:learningrate2},~\ref{tab:learningrate3} in the Supplement). The \textit{CIFAR} experiment was run $3$ times independently from the same initialization and for each method we report its best performance measured by the smallest achievable test error.
\begin{figure}[h]
\vspace{-0.05in}
  \center
% $\tau = 1$\\
\includegraphics[trim=0cm 0cm 0cm 0cm,clip,width = 1.9in]{./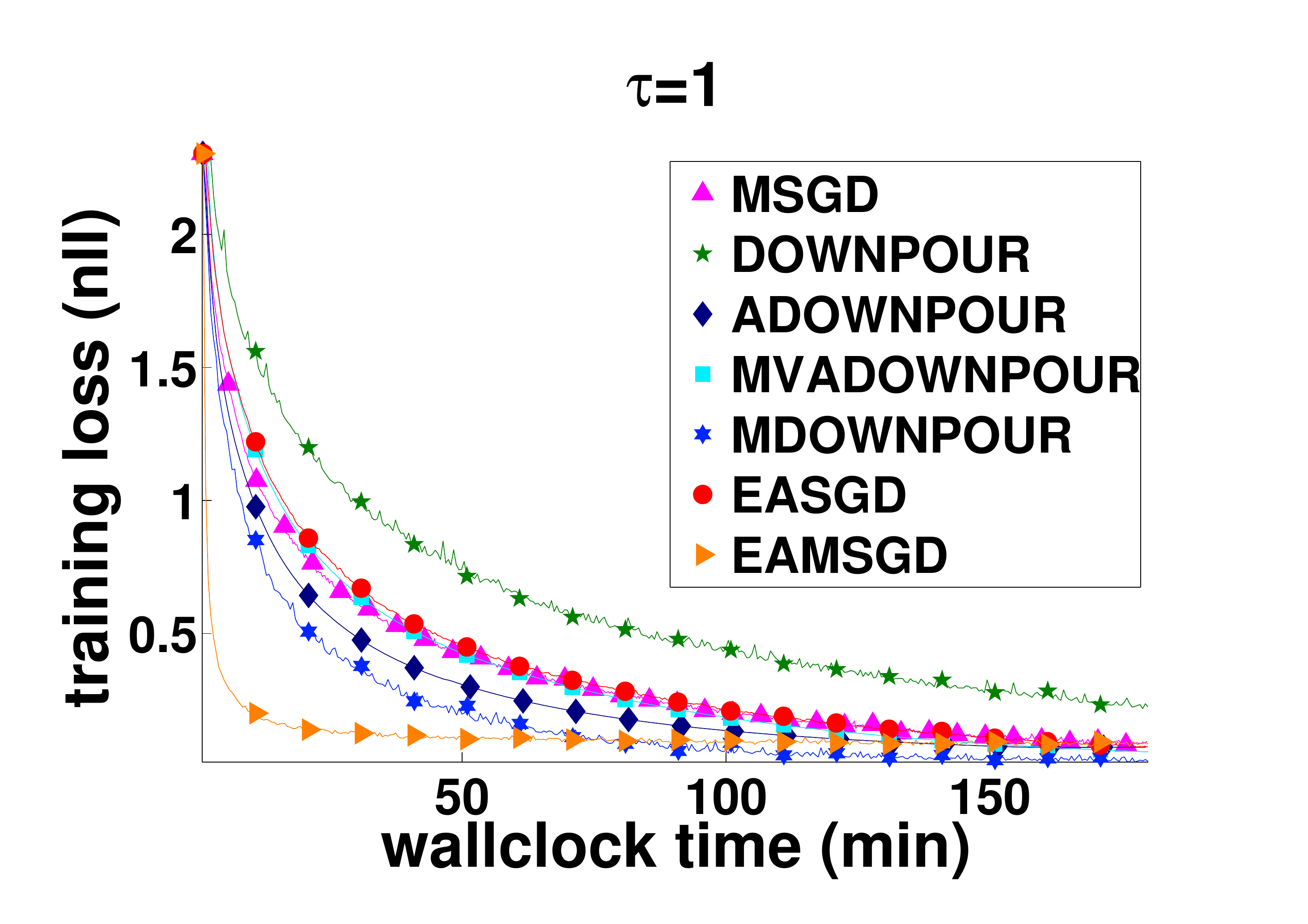}
\hspace{-0.3in}\includegraphics[trim=0cm 0cm 0cm 0cm,clip,width = 1.9in]{./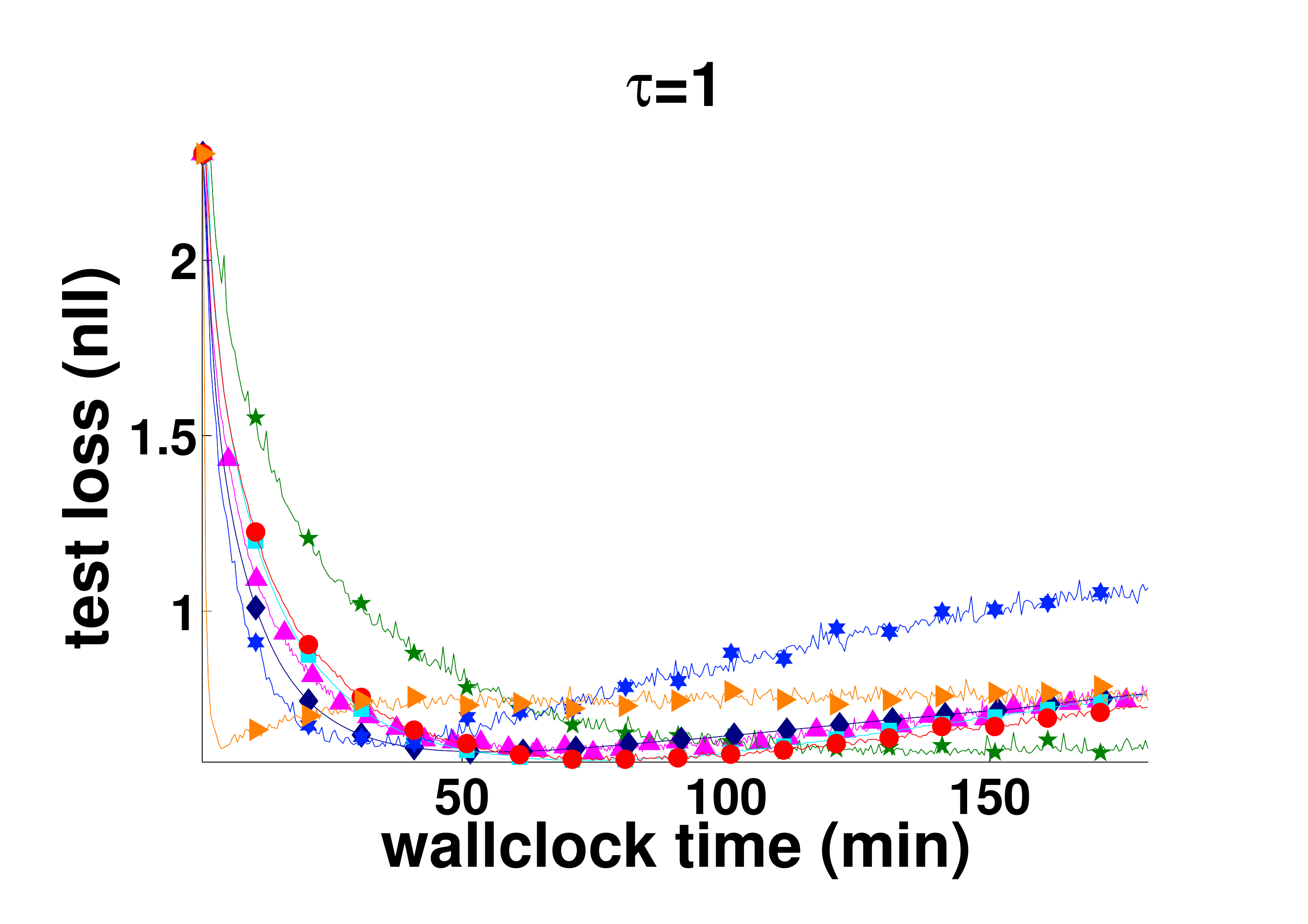} 
\hspace{-0.3in}\includegraphics[trim=0cm 0cm 0cm 0cm,clip,width = 1.9in]{./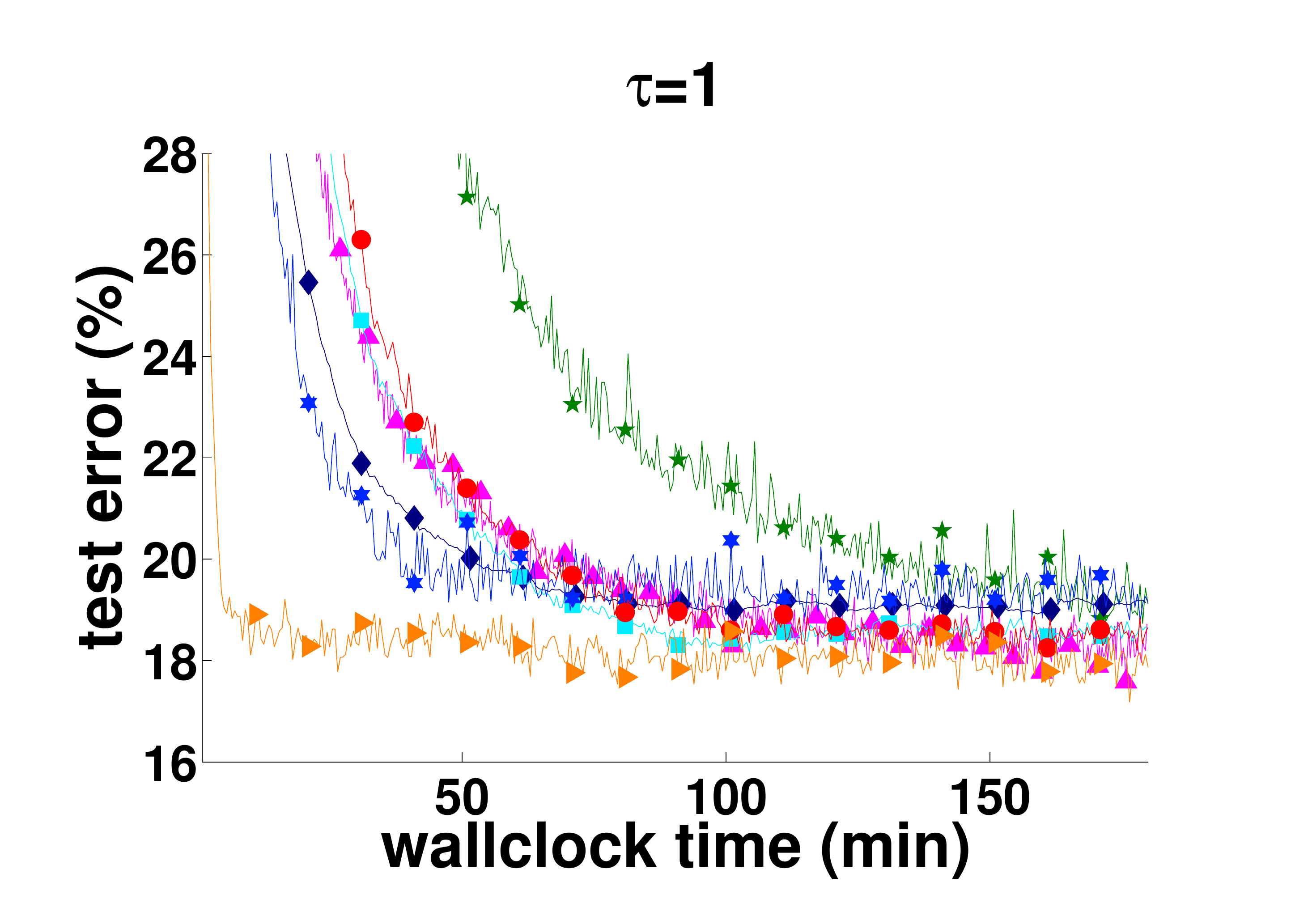}\\
\vspace{-0.05in}
% $\tau = 4$ \\
\includegraphics[trim=0cm 0cm 0cm 0cm,clip,width = 1.9in]{./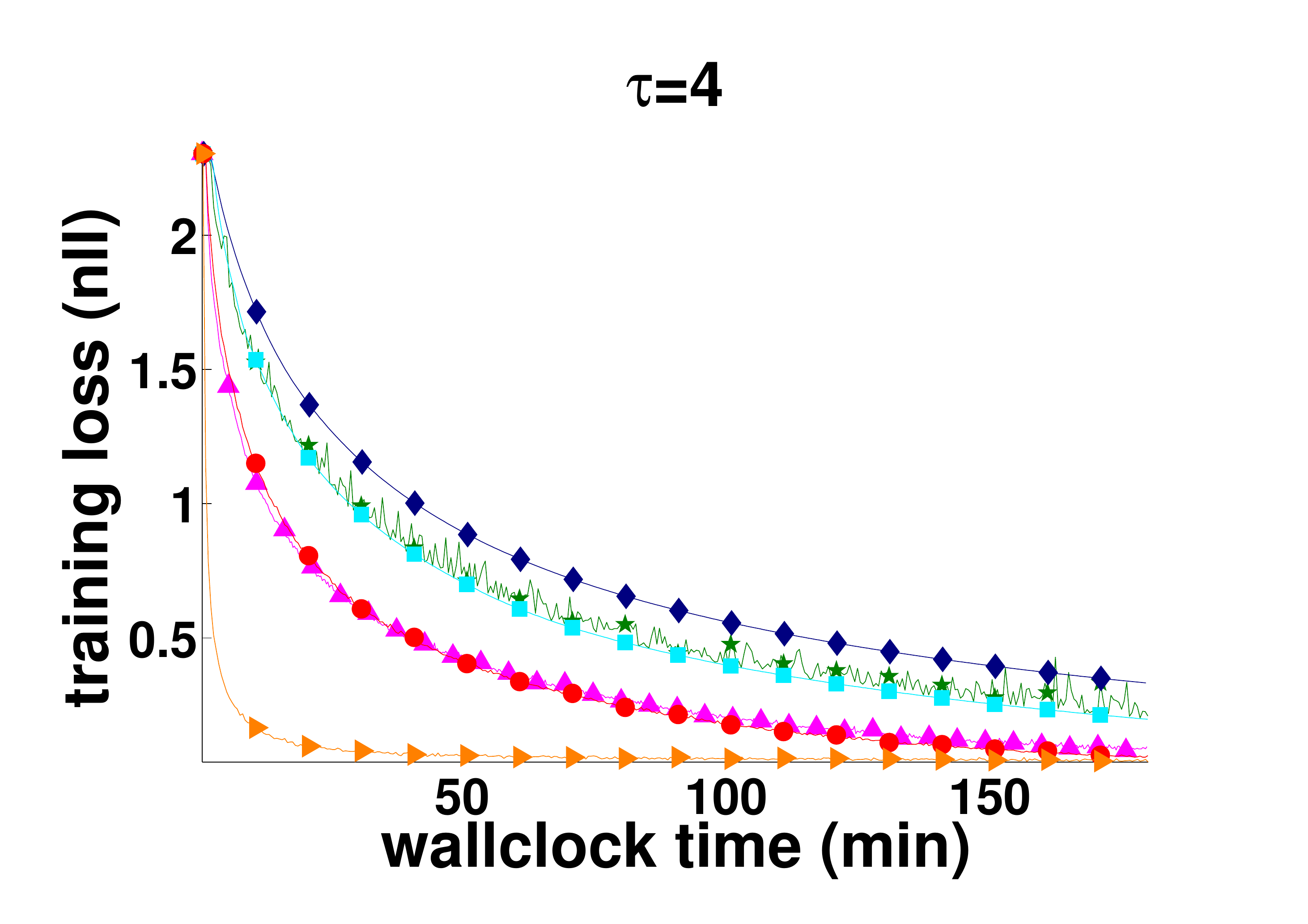}
\hspace{-0.3in}\includegraphics[trim=0cm 0cm 0cm 0cm,clip,width = 1.9in]{./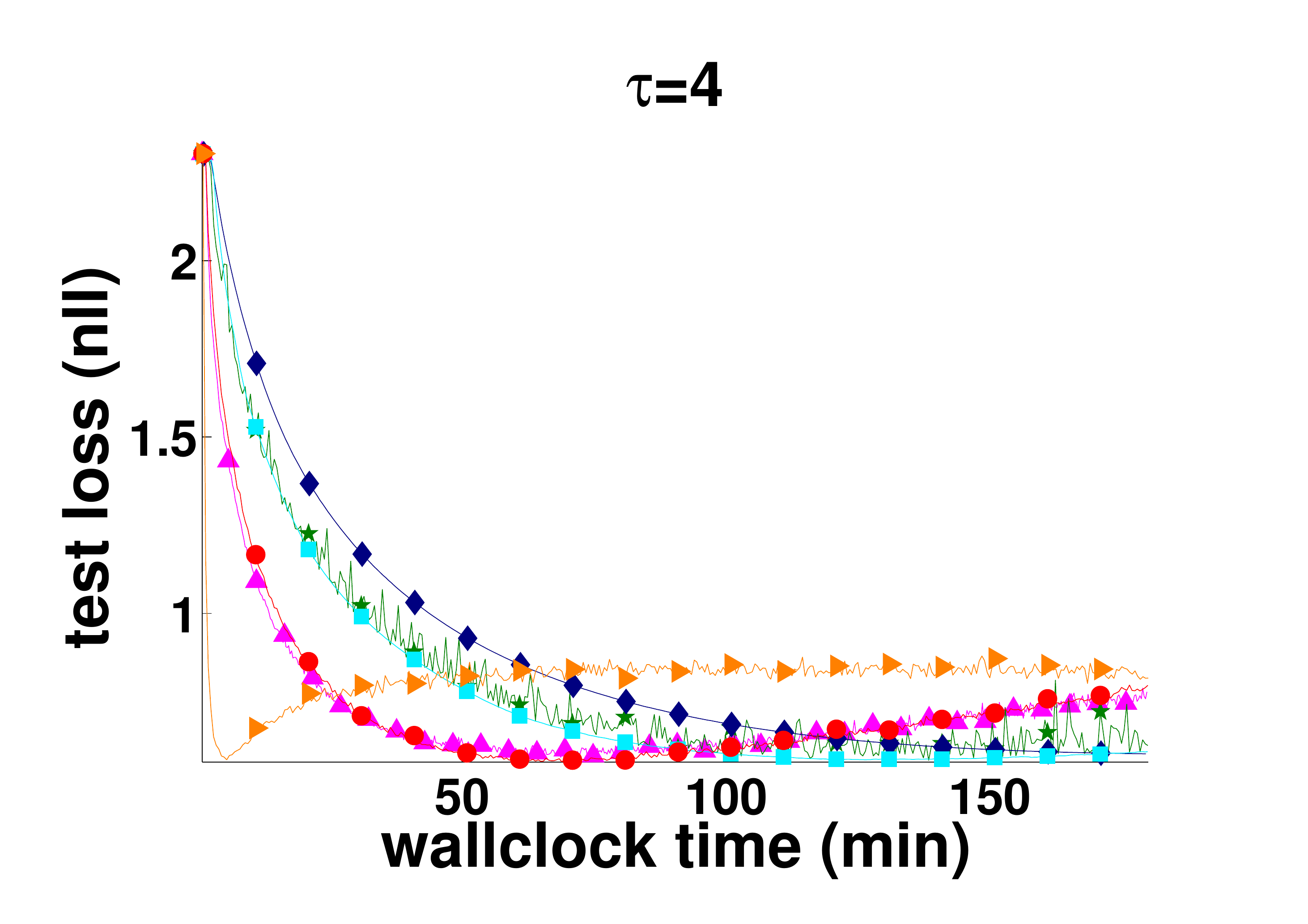} 
\hspace{-0.3in}\includegraphics[trim=0cm 0cm 0cm 0cm,clip,width = 1.9in]{./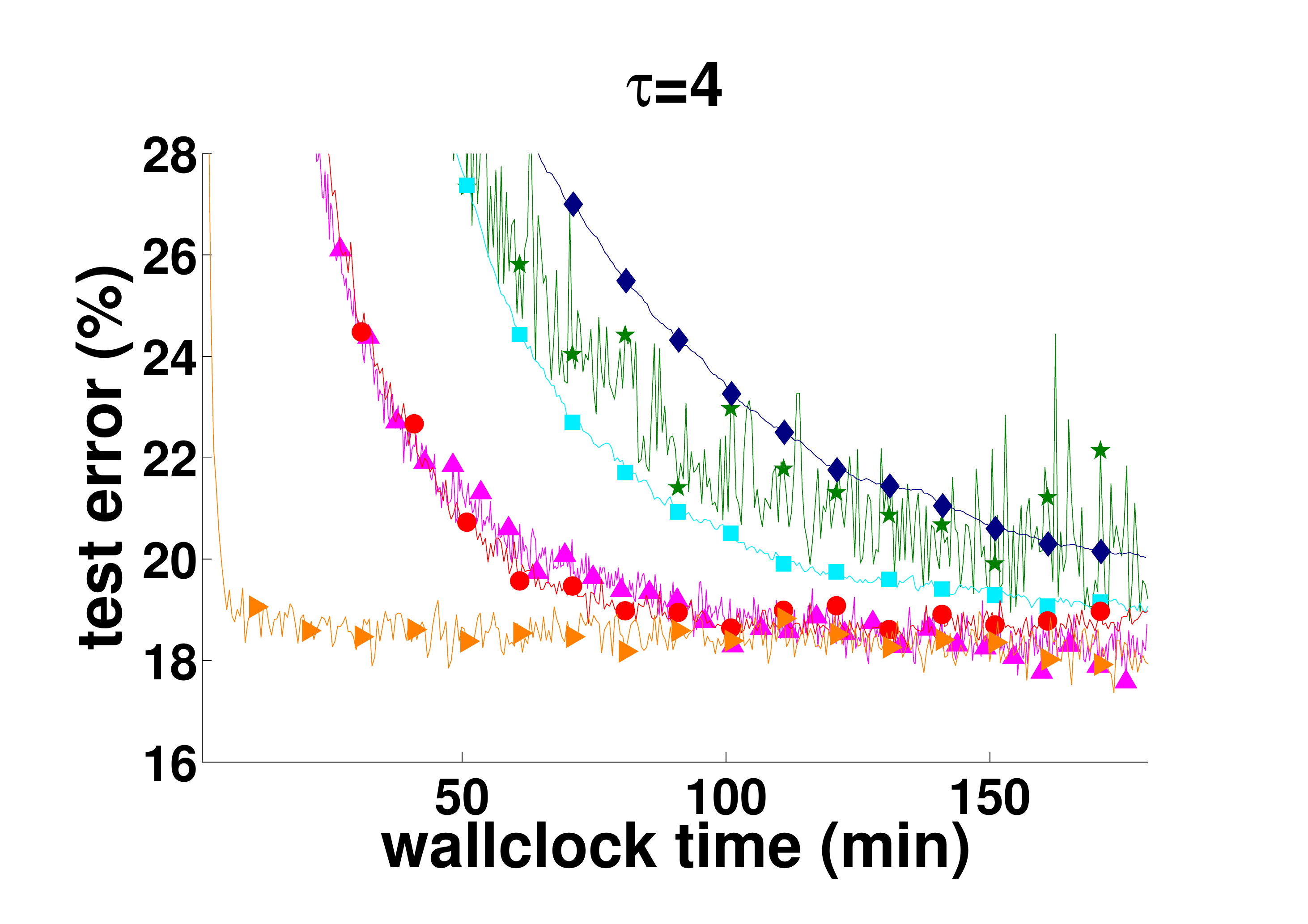}\\
\vspace{-0.05in}
% $\tau = 16$\\
\includegraphics[trim=0cm 0cm 0cm 0cm,clip,width = 1.9in]{./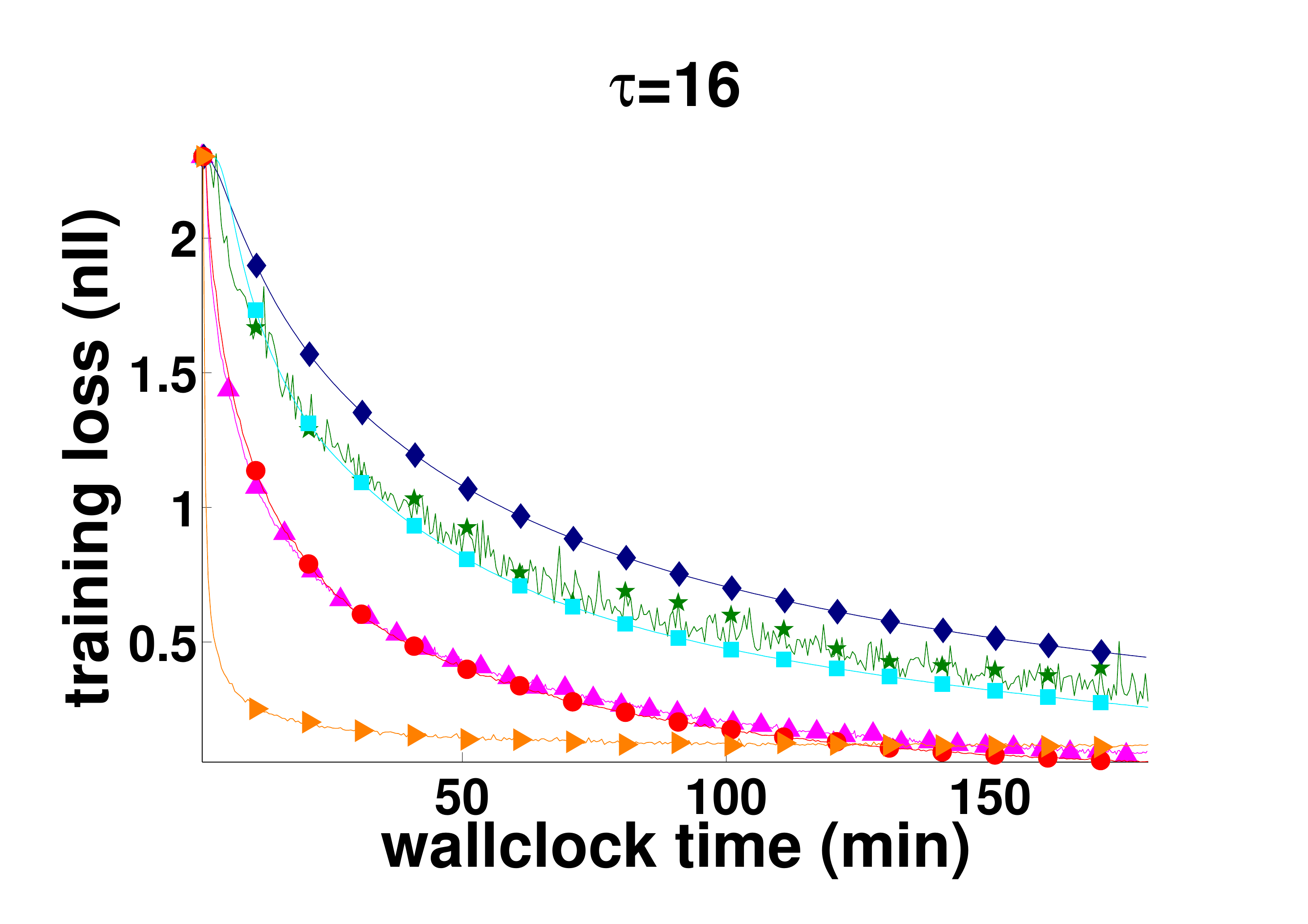}
\hspace{-0.3in}\includegraphics[trim=0cm 0cm 0cm 0cm,clip,width = 1.9in]{./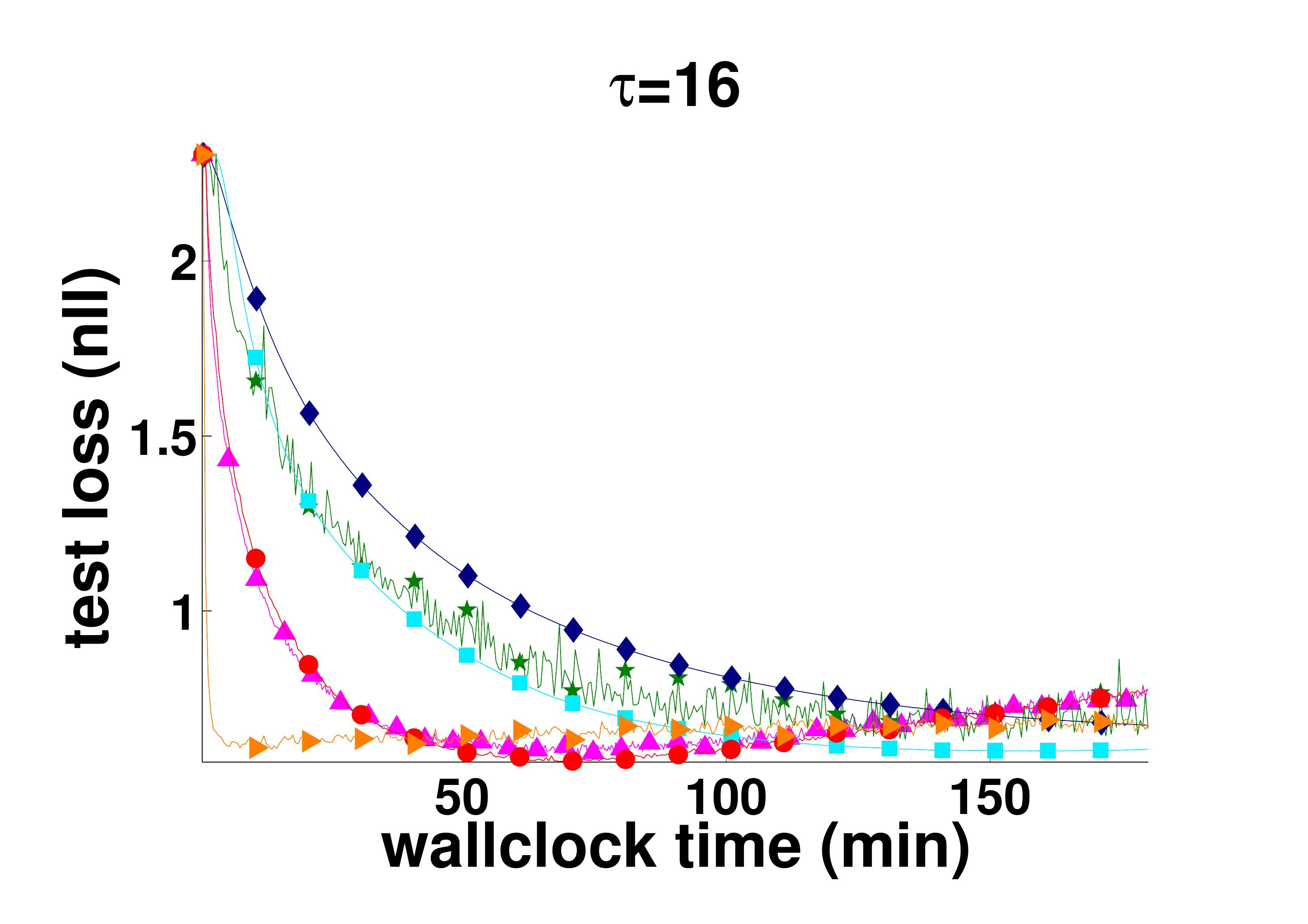} 
\hspace{-0.3in}\includegraphics[trim=0cm 0cm 0cm 0cm,clip,width = 1.9in]{./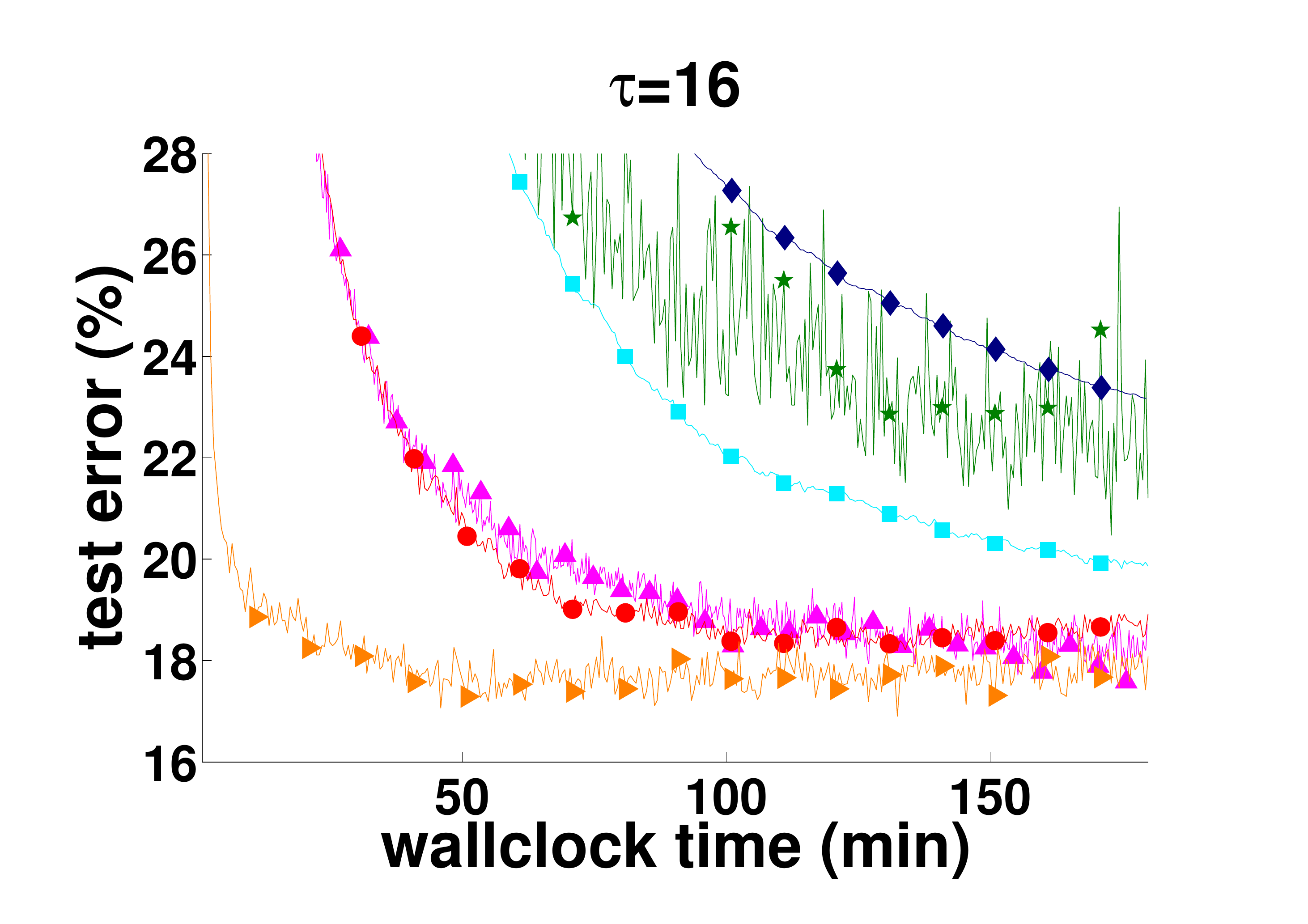}\\
\vspace{-0.05in}
% $\tau = 64$\\
\includegraphics[trim=0cm 0cm 0cm 0cm,clip,width = 1.9in]{./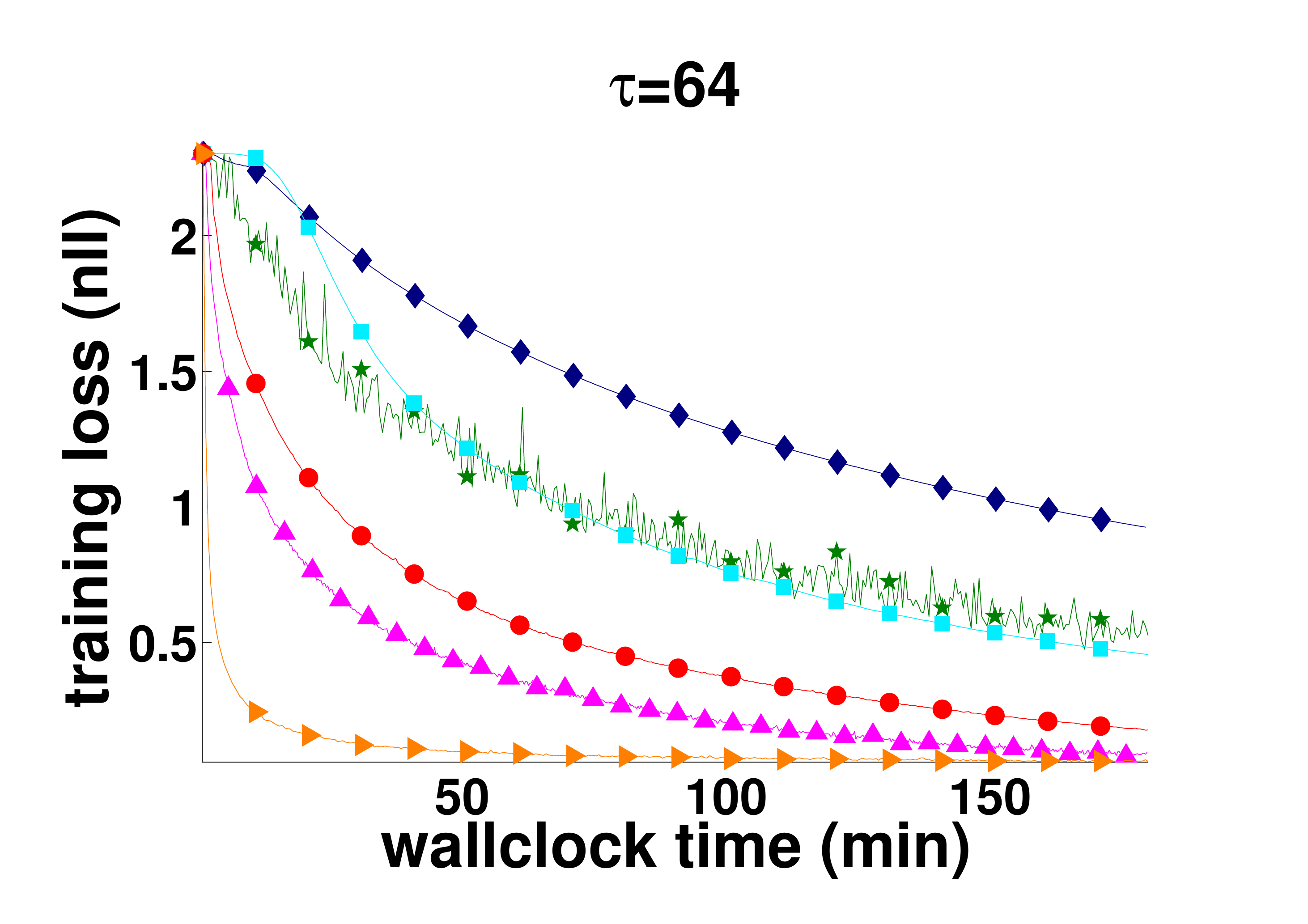}
\hspace{-0.3in}\includegraphics[trim=0cm 0cm 0cm 0cm,clip,width = 1.9in]{./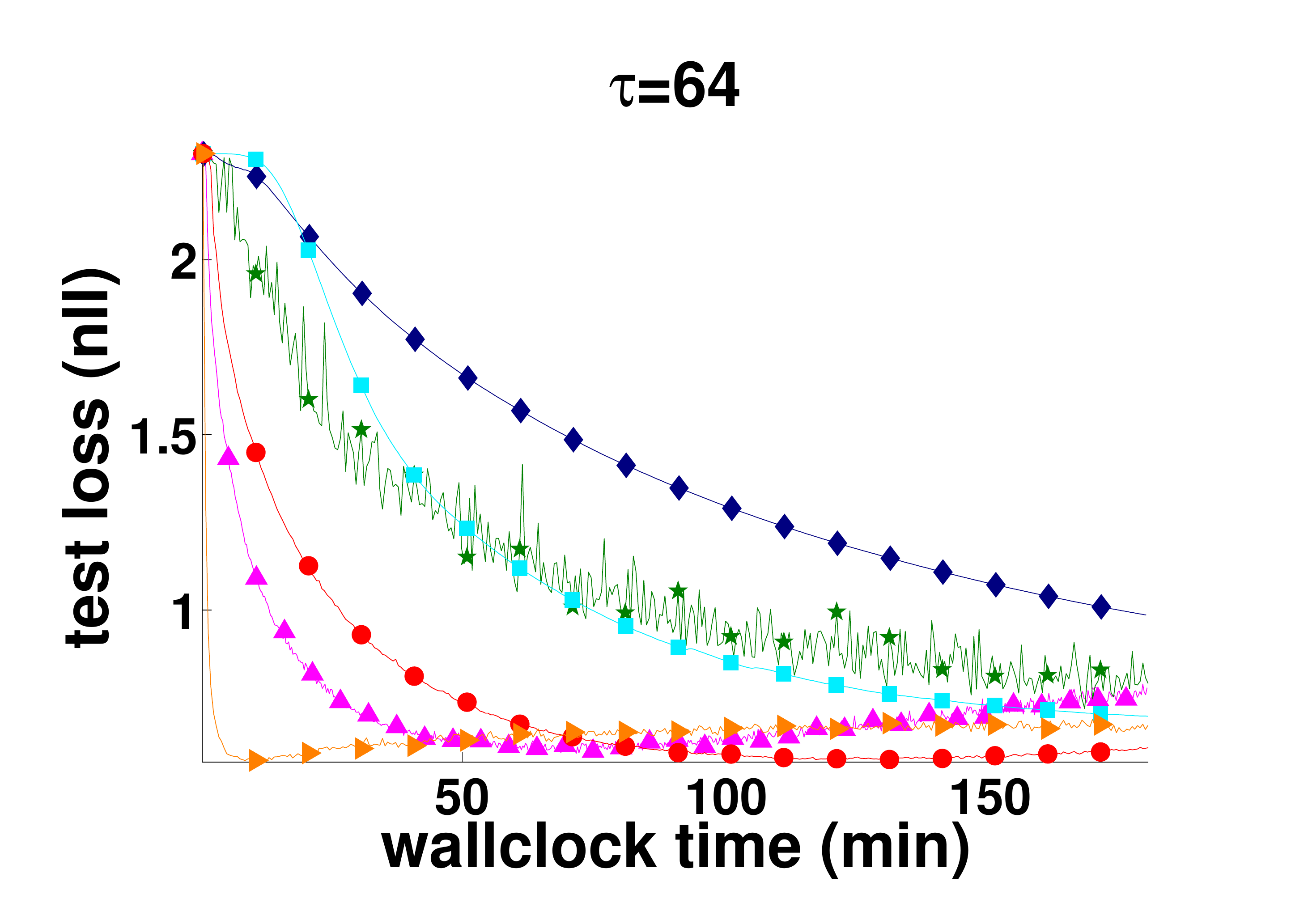} 
\hspace{-0.3in}\includegraphics[trim=0cm 0cm 0cm 0cm,clip,width = 1.9in]{./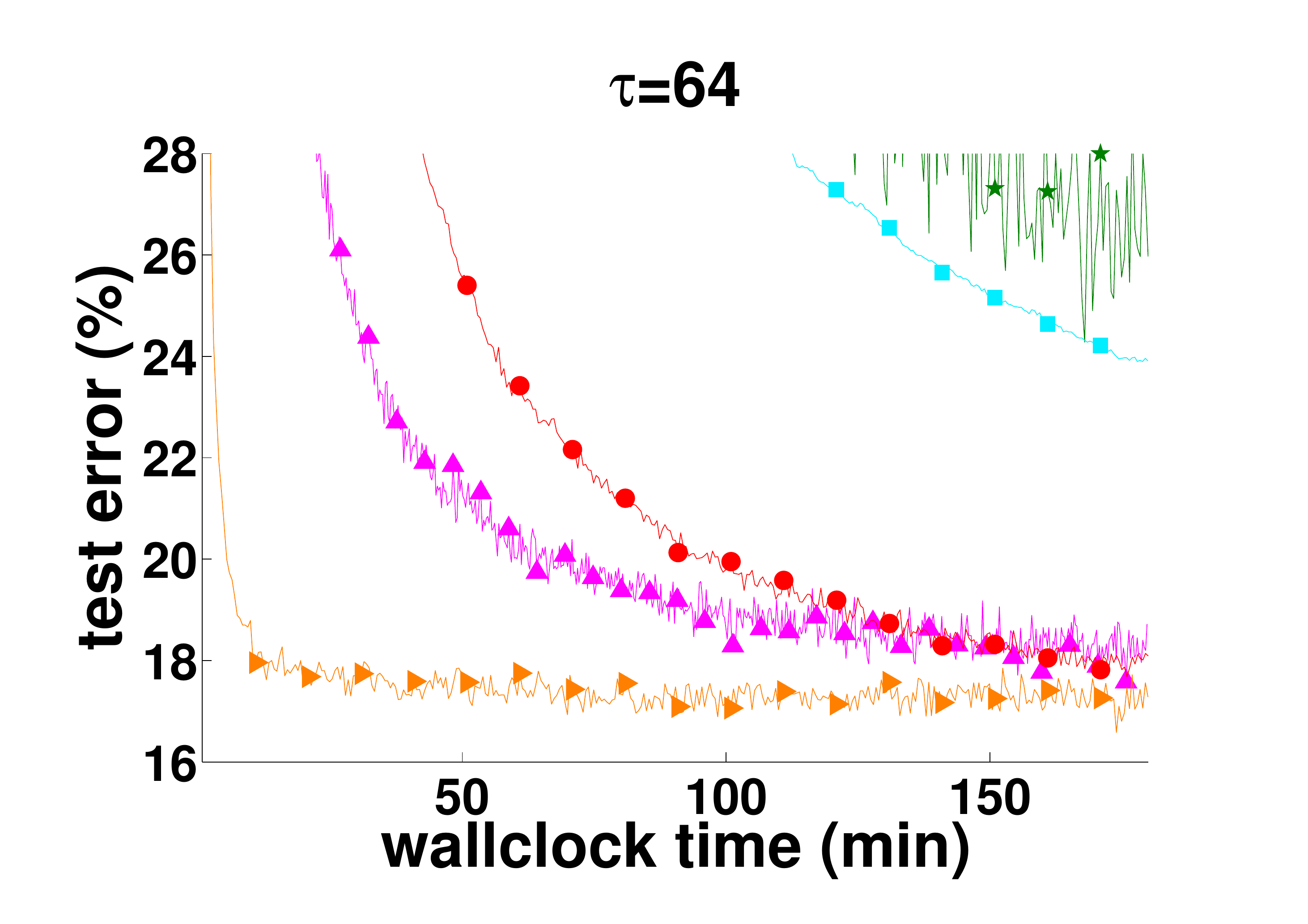}
\vspace{-0.15in}
\caption{Training and test loss and the test error for the center variable versus a wallclock time for different communication periods $\tau$ on \textit{CIFAR} dataset with the $7$-layer convolutional neural network.}
\label{fig:CIFAR}
\vspace{-0.2in}
\end{figure} 
%(number of local workers $p$ is set to $p=4$ for all parallel methods, and is set to $p=1$ for sequential method \textit{MSGD}). \textit{EASGD} and \textit{EAMSGD} are stable with relatively large communication period $\tau$.
From the results in Figure~\ref{fig:CIFAR}, %captures the convergence of the training and test loss (negative log-likelihood) and the test error computed for the center variable as a function of wallclock time for different values of $\tau$.
we conclude that all \textit{DOWNPOUR}-based methods achieve their best performance (test error) for small $\tau$ ($\tau \in \{1,4\}$), and become highly unstable for $\tau \in \{16,64\}$. While \textit{EAMSGD} significantly outperforms comparator methods for all values of $\tau$ by having faster convergence. It also finds better-quality solution measured by the test error and this advantage becomes more significant for $\tau \in \{16,64\}$.
Note that the tendency to achieve better test performance with larger $\tau$ is also characteristic for the \textit{EASGD} algorithm. % , which simultaneously outperforms all \textit{DOWNPOUR}-based methods for higher values of $\tau$ ($\tau \in \{16,64\})$.
% i.e. \textit{DOWNPOUR}, \textit{ADOWNPOUR}, \textit{MVADOWNPOUR} and \textit{MDOWNPOUR},
% The number of local workers $p$ is set to $p=4$ for all parallel methods, and is set to $p=1$ for the sequential method \textit{MSGD}. 
\begin{figure}[h!]
  \center
% $p = 4$\\
\includegraphics[trim=0cm 0cm 0cm 0cm,clip,width = 1.9in]{./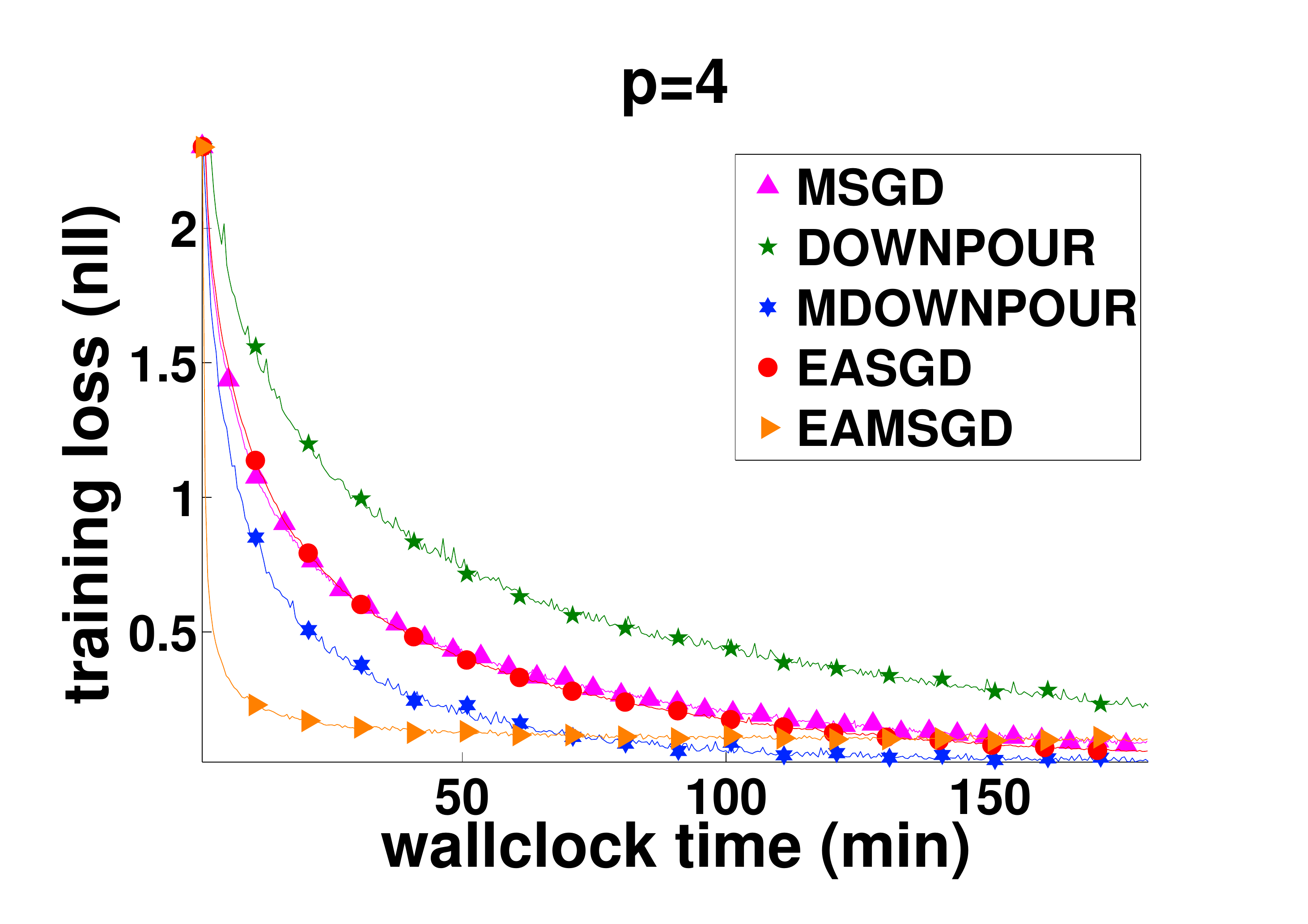}
\hspace{-0.3in}\includegraphics[trim=0cm 0cm 0cm 0cm,clip,width = 1.9in]{./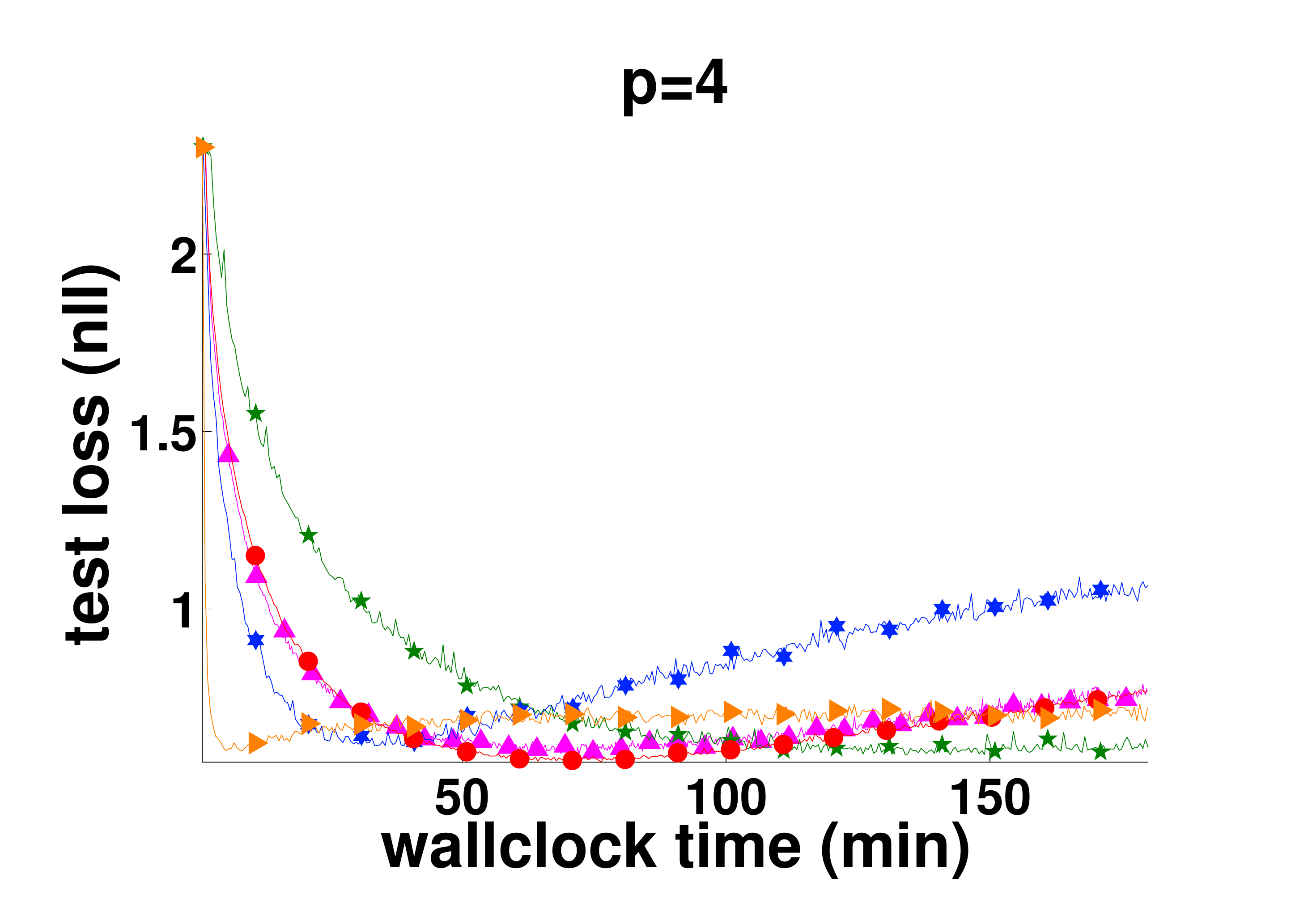} 
\hspace{-0.3in}\includegraphics[trim=0cm 0cm 0cm 0cm,clip,width = 1.9in]{./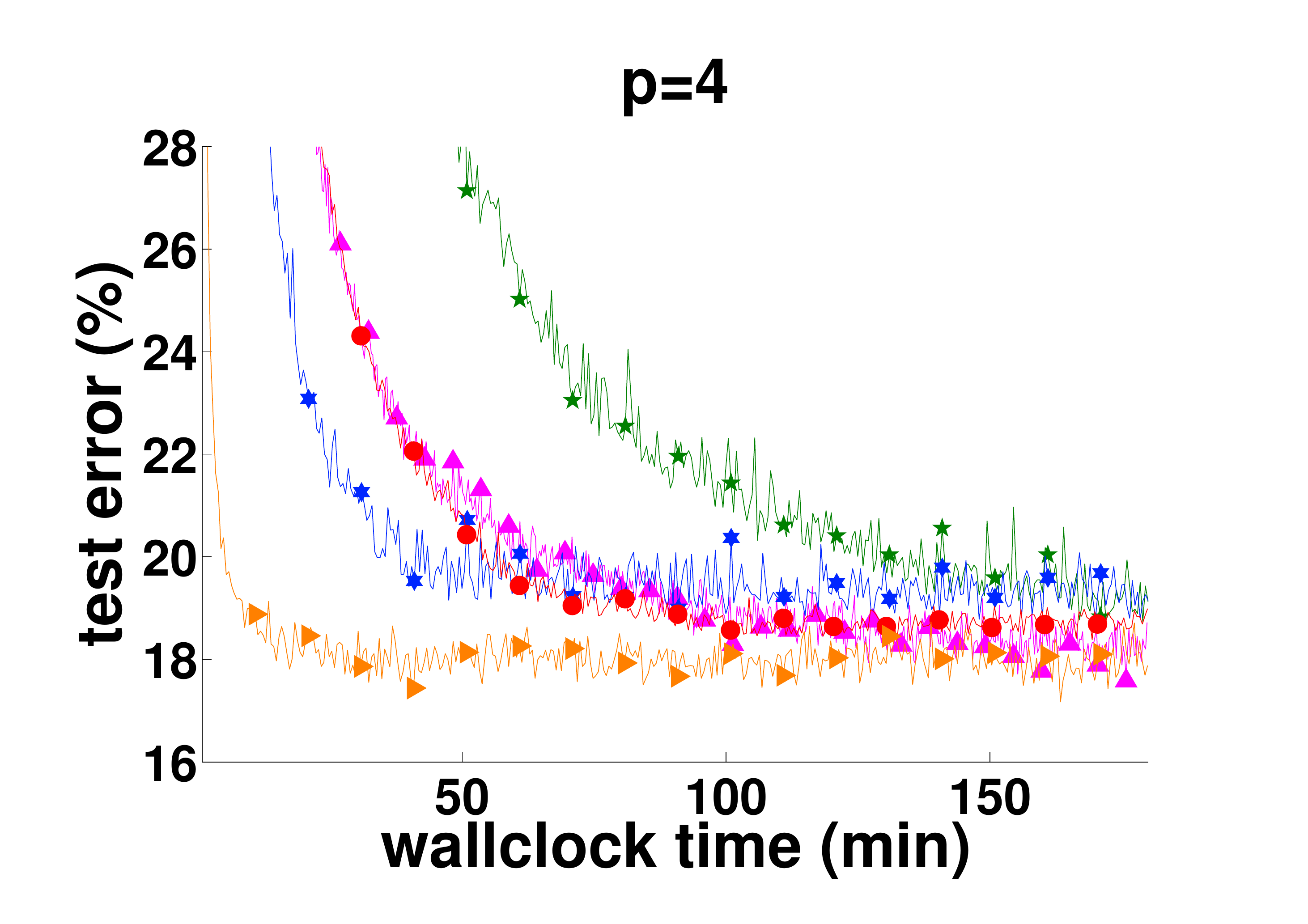}\\
\vspace{-0.13in}
% $p = 8$\\
\includegraphics[trim=0cm 0cm 0cm 0cm,clip,width = 1.9in]{./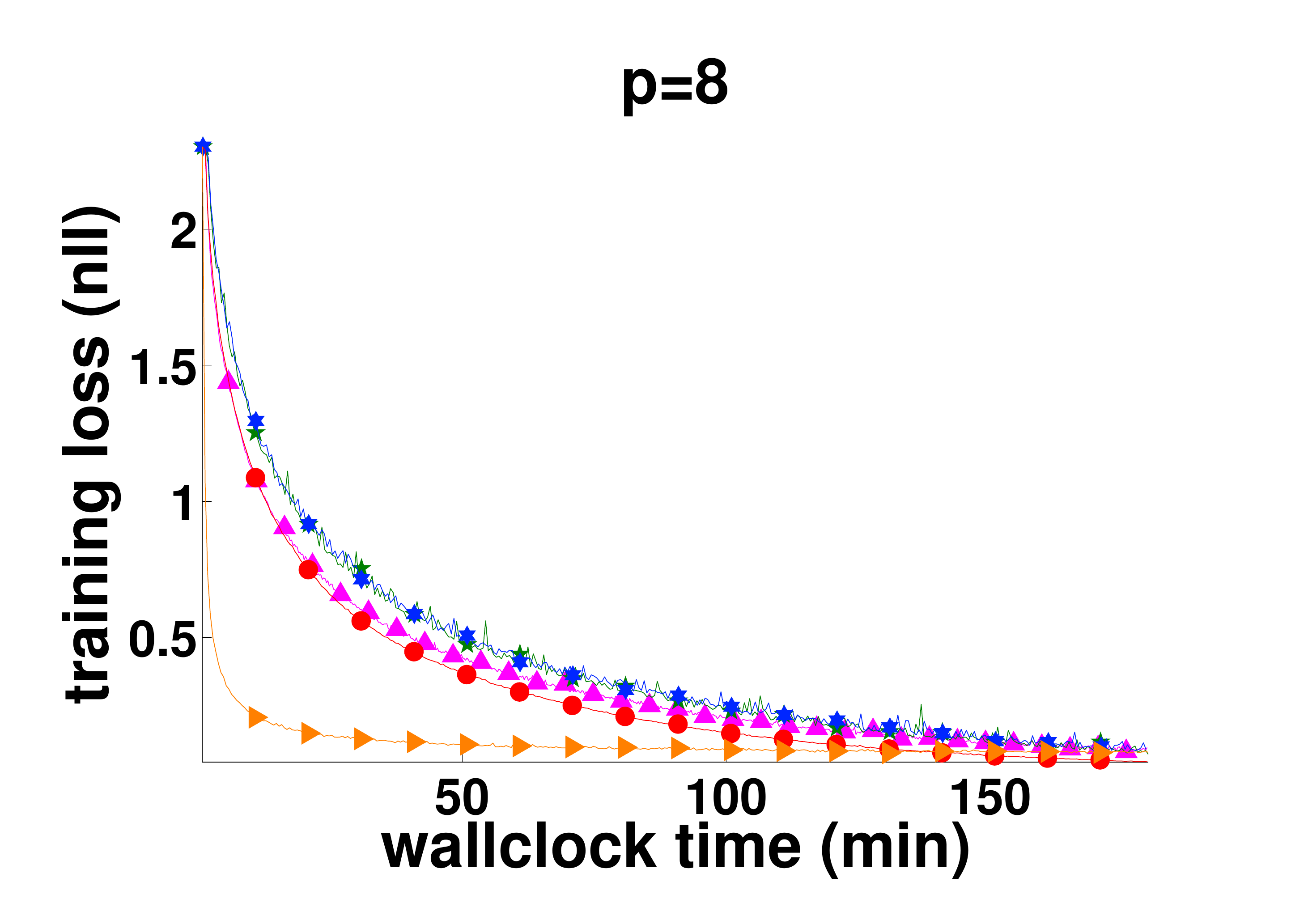}
\hspace{-0.3in}\includegraphics[trim=0cm 0cm 0cm 0cm,clip,width = 1.9in]{./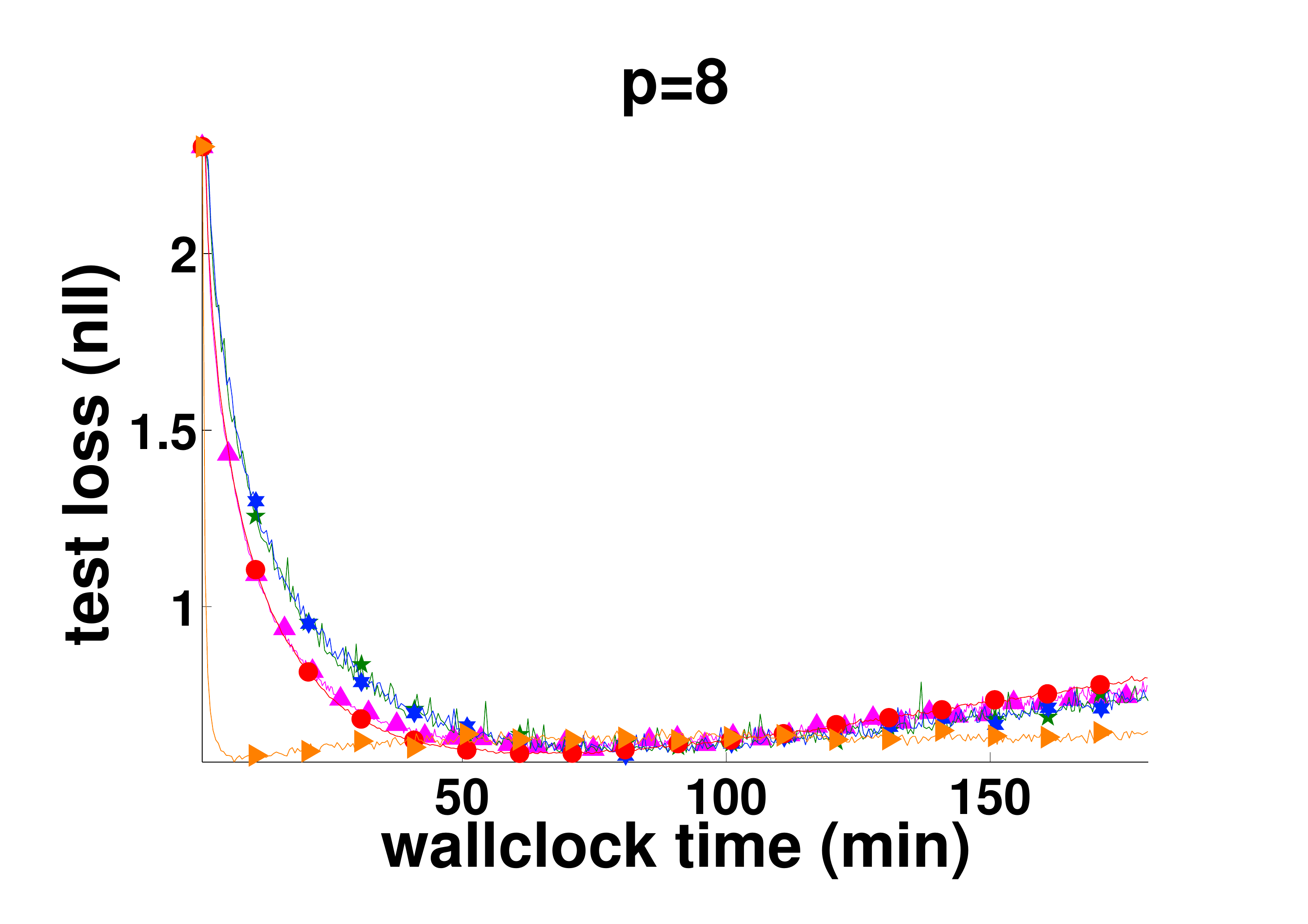} 
\hspace{-0.3in}\includegraphics[trim=0cm 0cm 0cm 0cm,clip,width = 1.9in]{./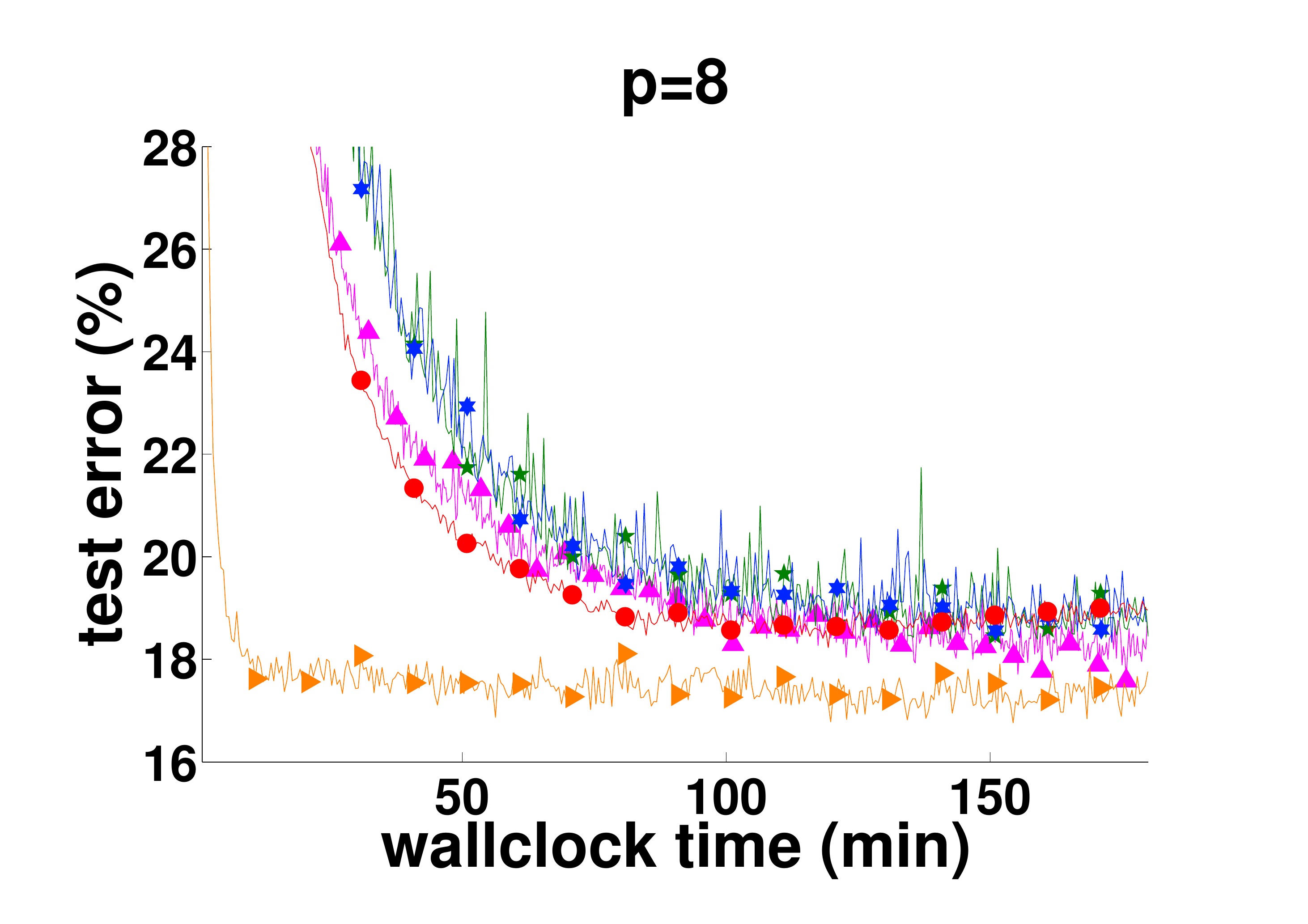}\\
\vspace{-0.13in}
% $p = 16$\\
\includegraphics[trim=0cm 0cm 0cm 0cm,clip,width = 1.9in]{./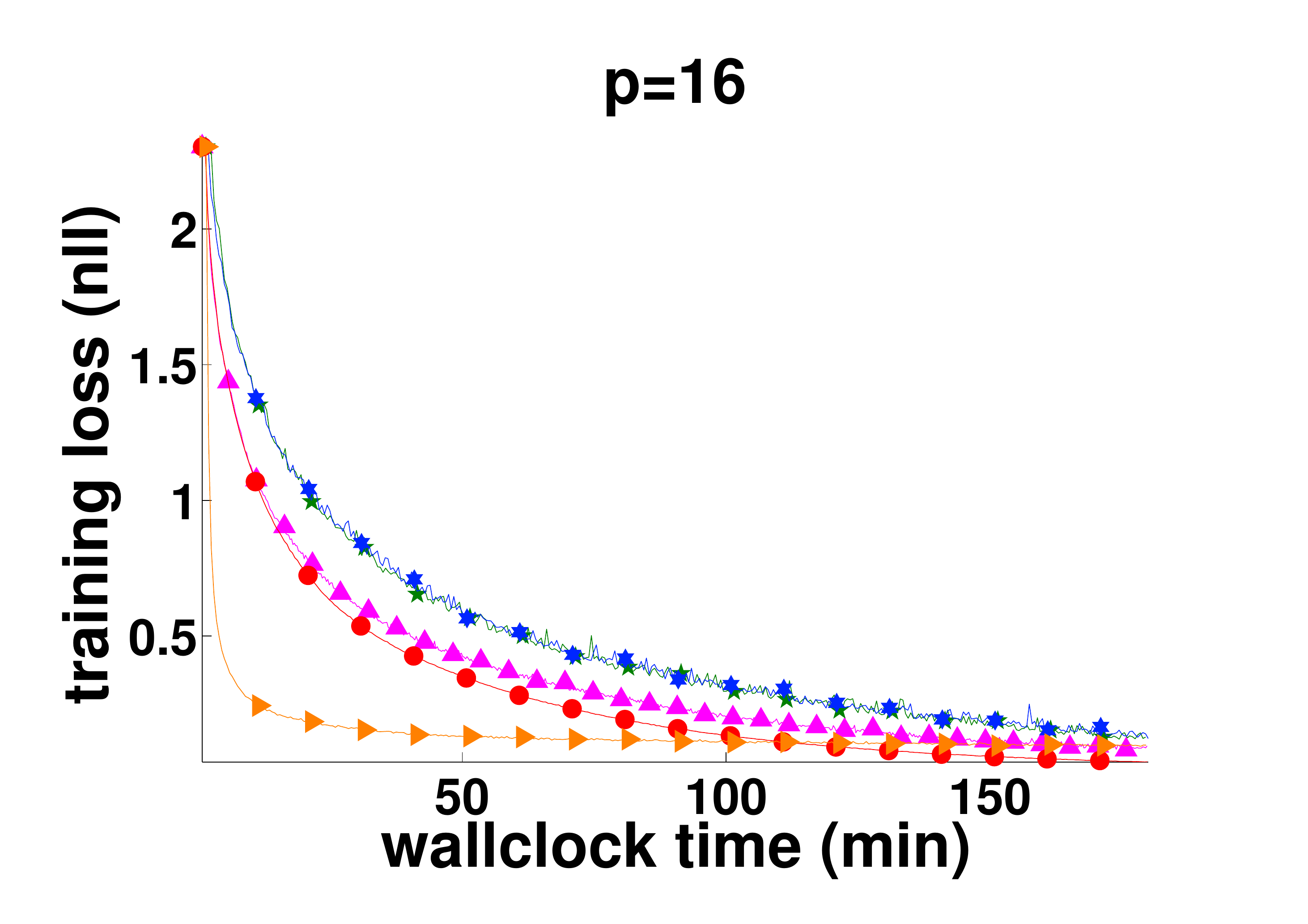}
\hspace{-0.3in}\includegraphics[trim=0cm 0cm 0cm 0cm,clip,width = 1.9in]{./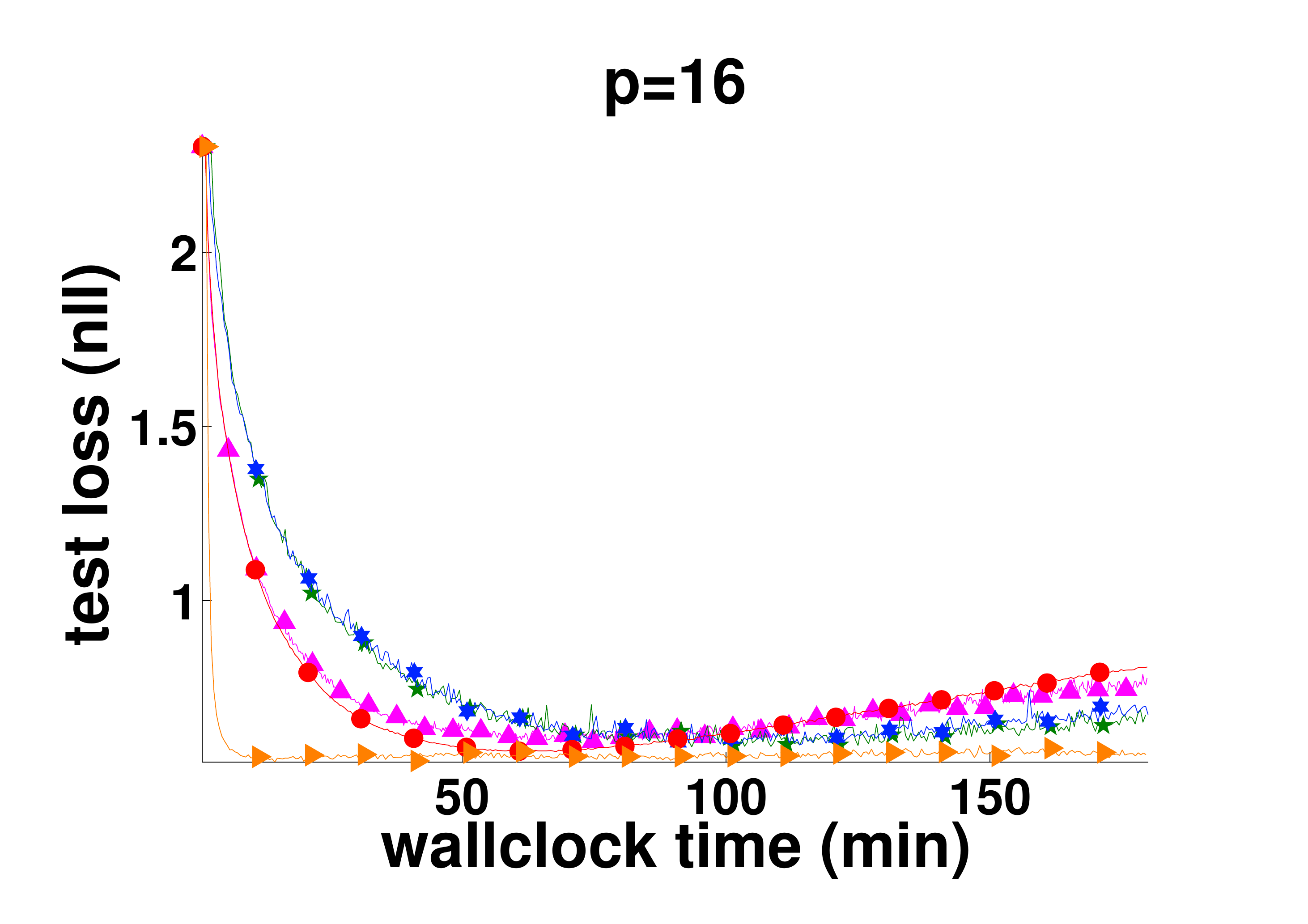} \hspace{-0.3in}\includegraphics[trim=0cm 0cm 0cm 0cm,clip,width = 1.9in]{./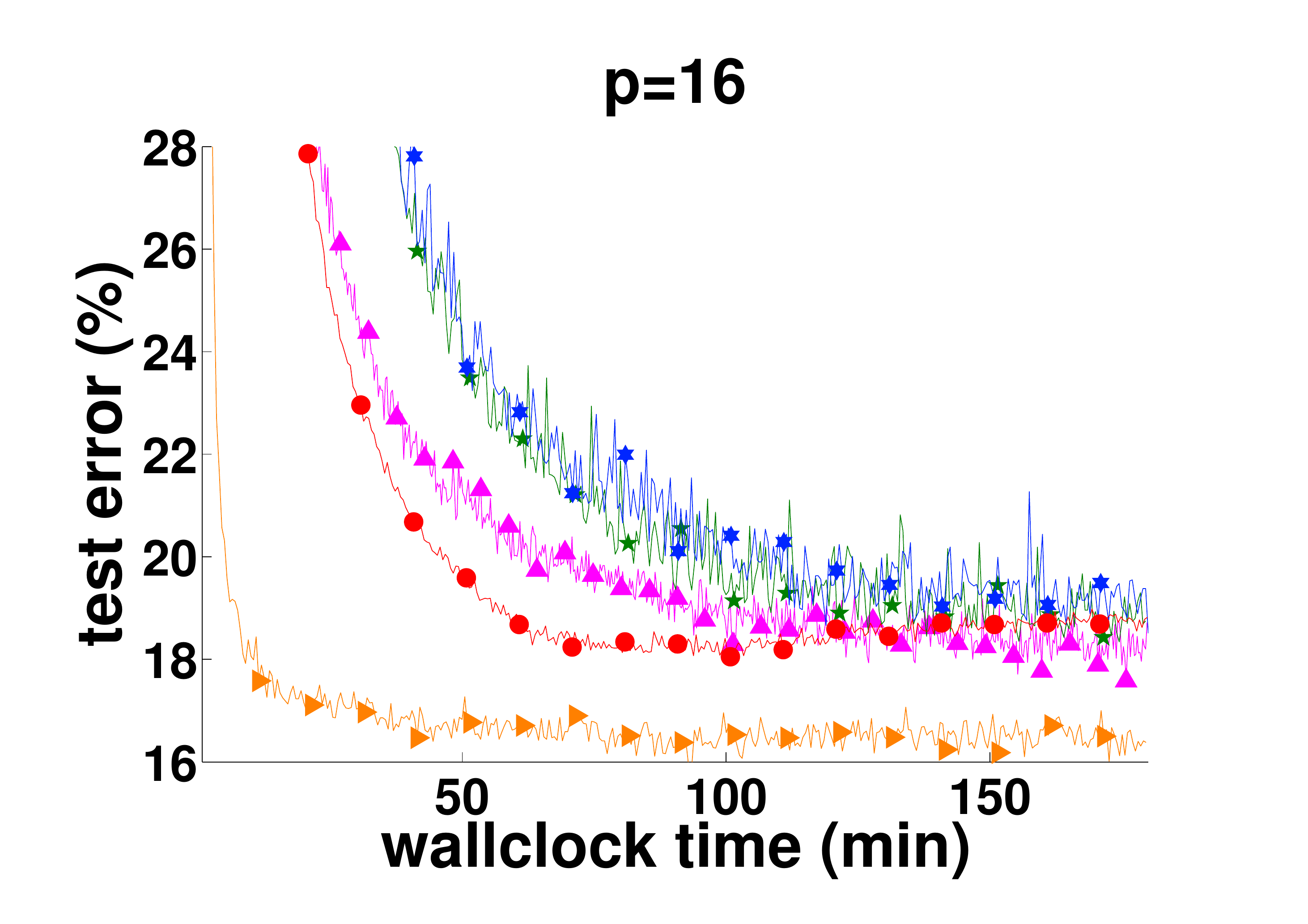}
\vspace{-0.13in}
\caption{Training and test loss and the test error for the center variable versus a wallclock time for different number of local workers $p$ for parallel methods (\textit{MSGD} uses $p=1$) on \textit{CIFAR} with the $7$-layer convolutional neural network. \textit{EAMSGD} achieves significant accelerations compared to other methods, e.g. the relative speed-up for $p=16$ (the best comparator method is then \textit{MSGD}) to achieve the test error $21\%$ equals $11.1$.}
\label{fig:CIFAR2}
%\vspace{-0.3in}
\end{figure}
\begin{figure}[h!]
%\vspace{-0.2in}
  \center
%$p = 4$\\
\includegraphics[trim=0cm 0cm 0cm 0cm,clip,width = 1.9in]{./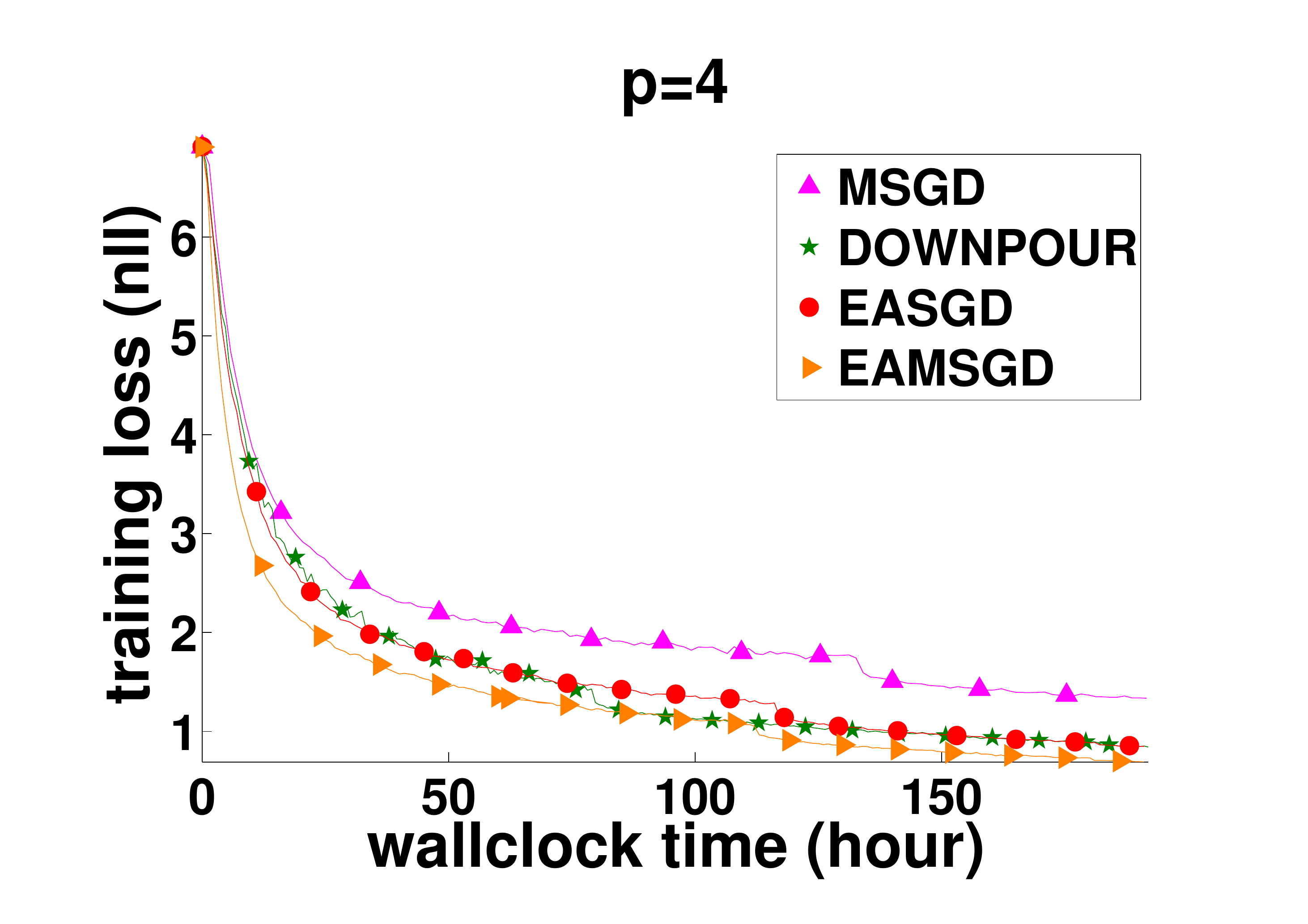}
\hspace{-0.3in}\includegraphics[trim=0cm 0cm 0cm 0cm,clip,width = 1.9in]{./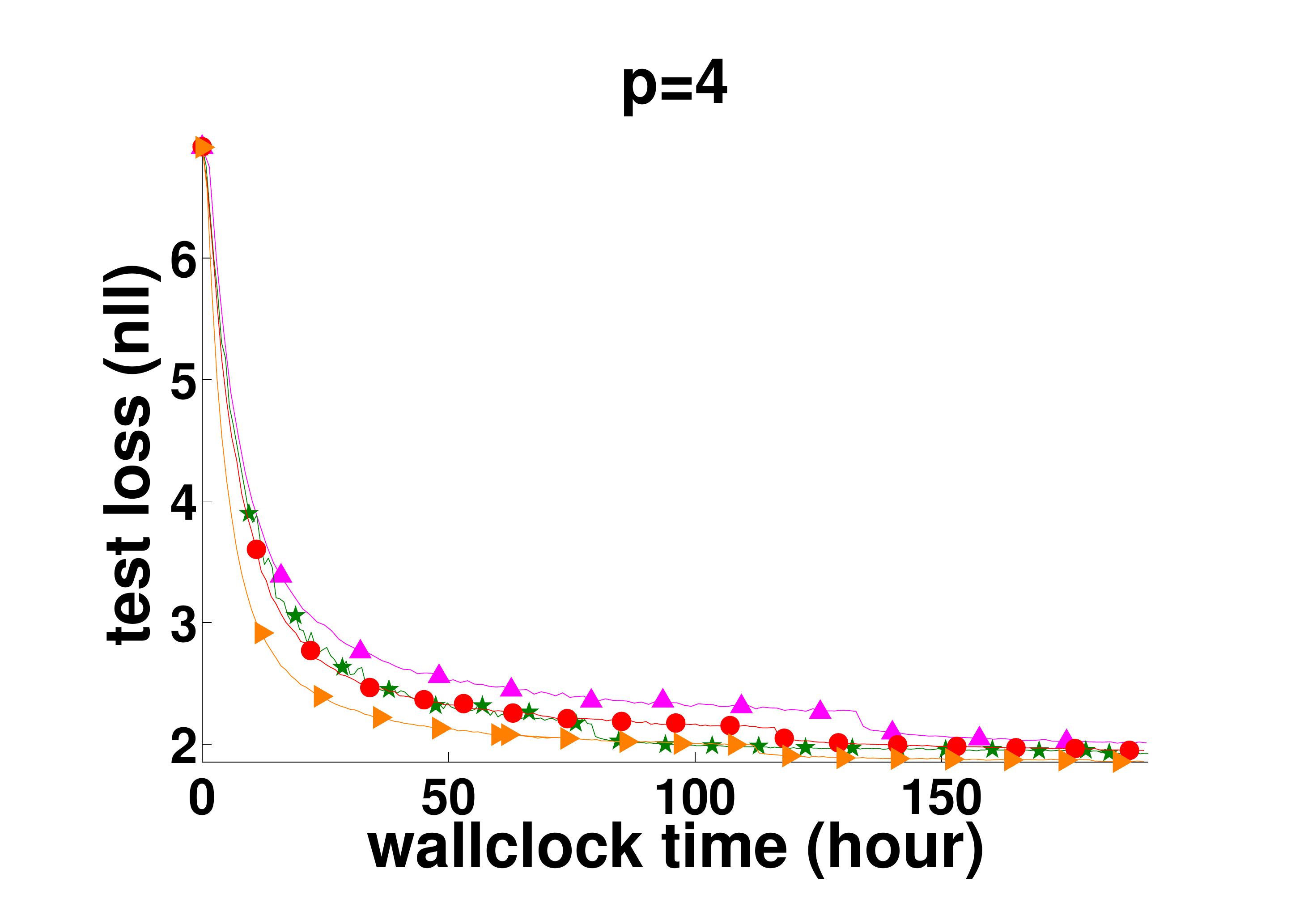} 
\hspace{-0.3in}\includegraphics[trim=0cm 0cm 0cm 0cm,clip,width = 1.9in]{./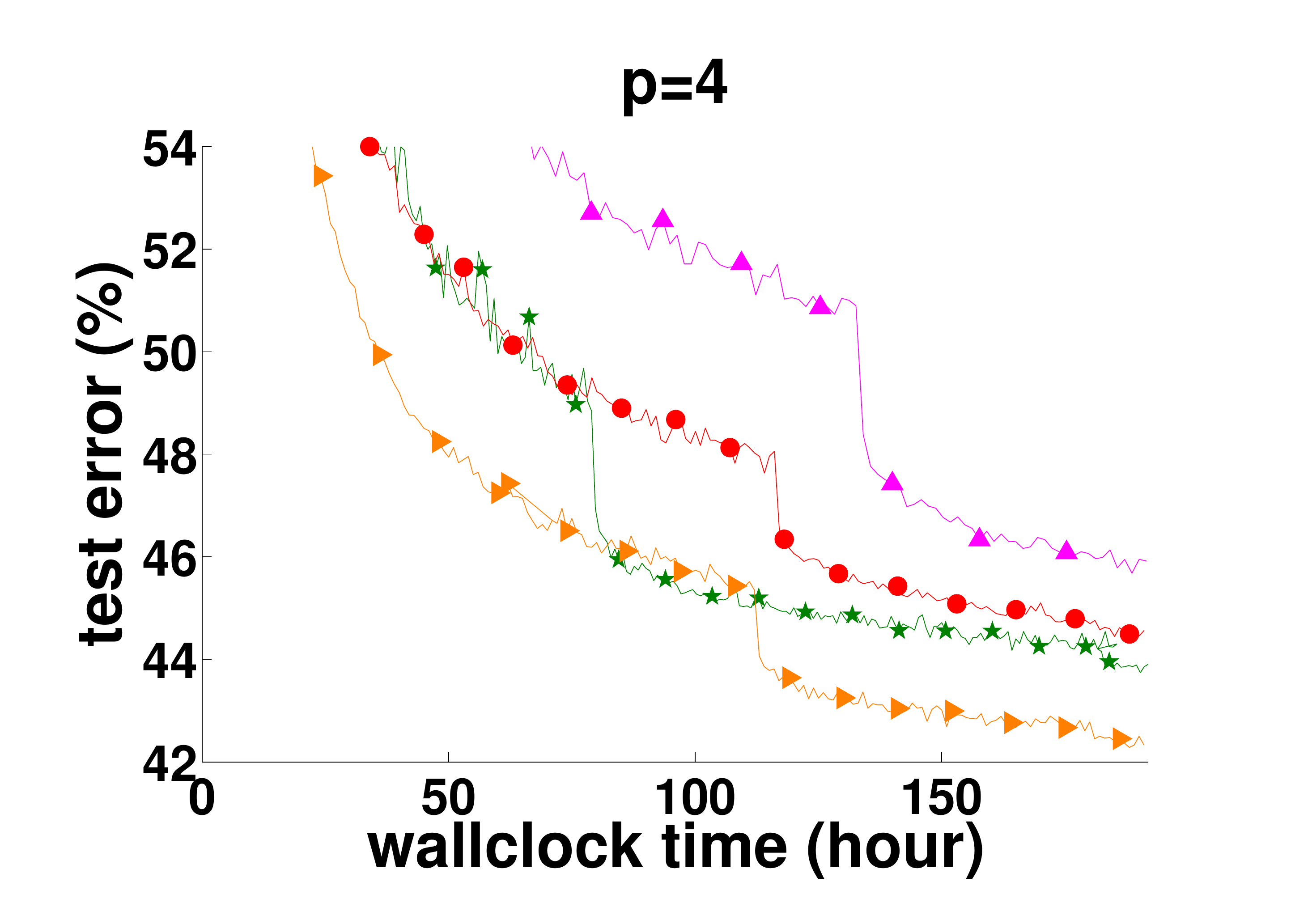}\\
\vspace{-0.13in}
%$p = 8$\\
\includegraphics[trim=0cm 0cm 0cm 0cm,clip,width = 1.9in]{./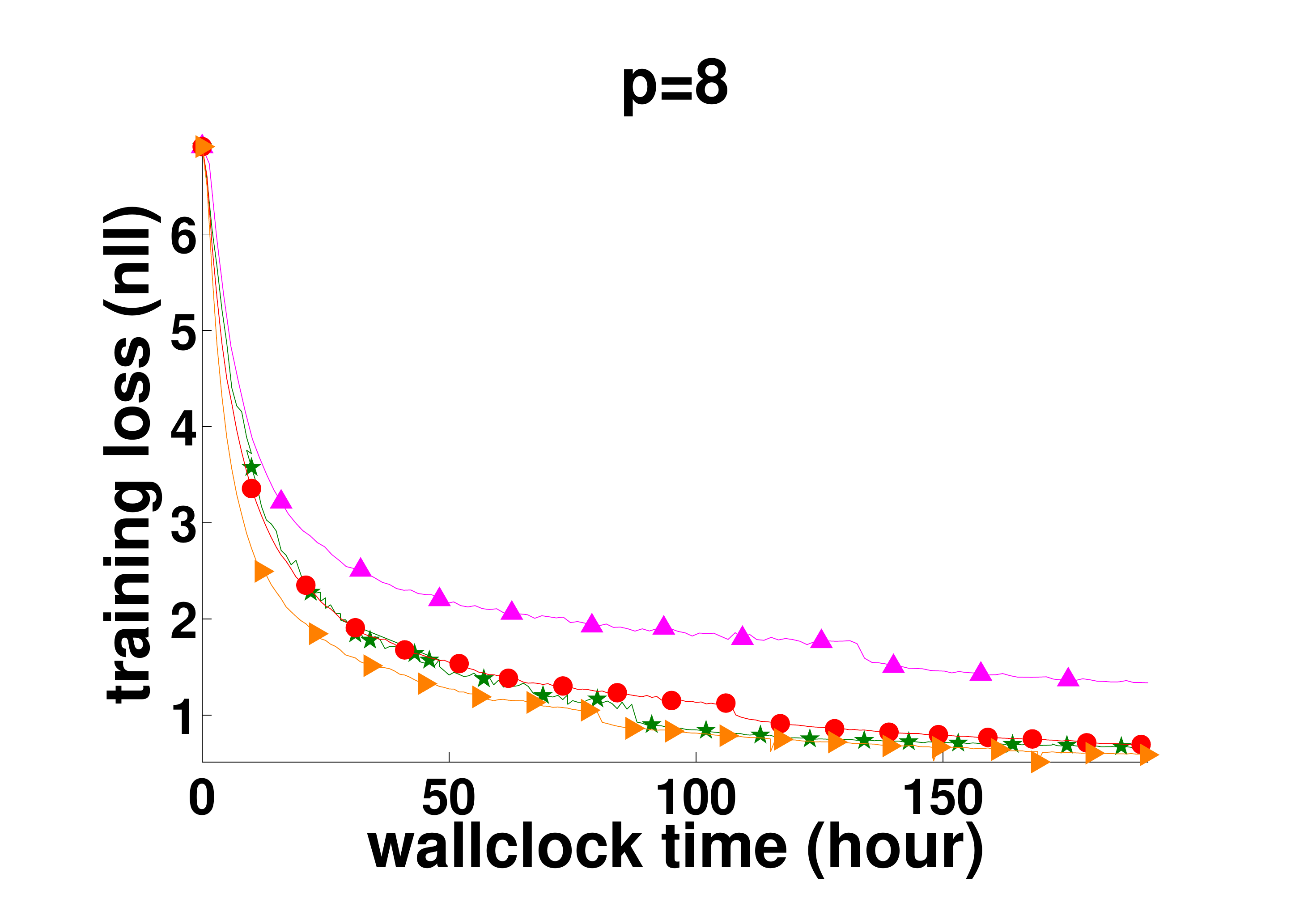}
\hspace{-0.3in}\includegraphics[trim=0cm 0cm 0cm 0cm,clip,width = 1.9in]{./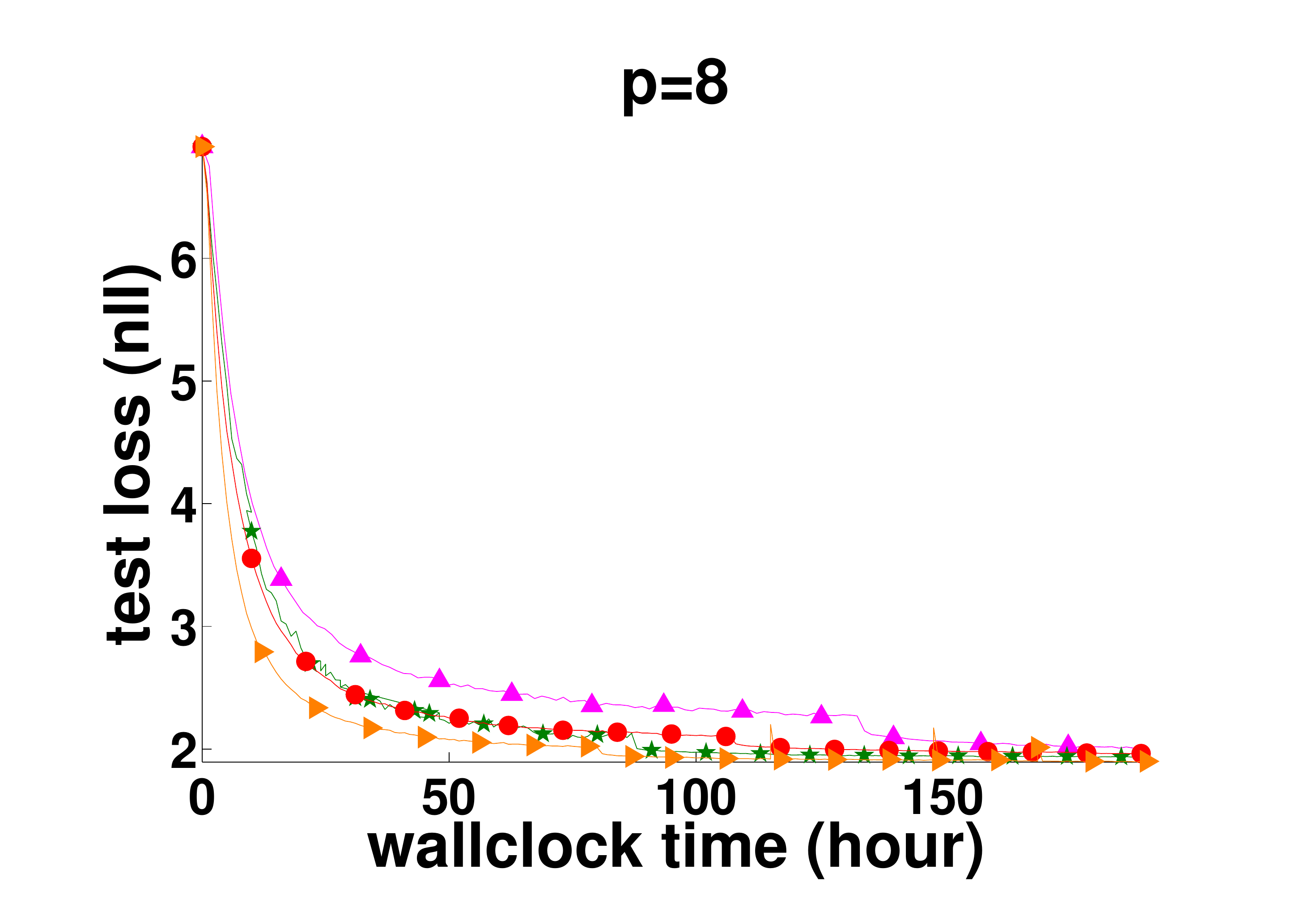} 
\hspace{-0.3in}\includegraphics[trim=0cm 0cm 0cm 0cm,clip,width = 1.9in]{./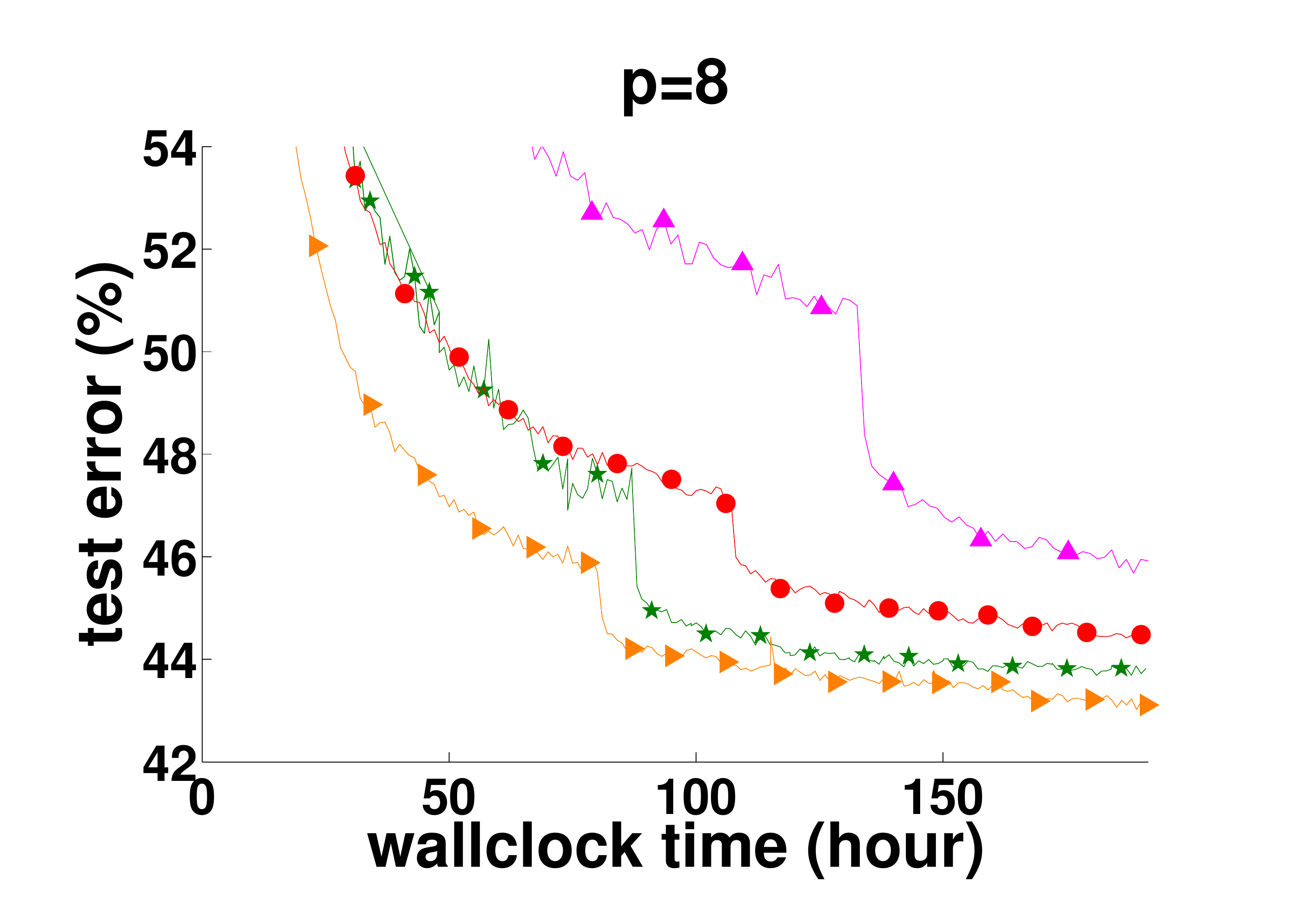}
\vspace{-0.13in}
\caption{Training and test loss and the test error for the center variable versus a wallclock time for different number of local workers $p$ (\textit{MSGD} uses $p=1$) on \textit{ImageNet} with the $11$-layer convolutional neural network. Initial learning rate is decreased twice, by a factor of $5$ and then $2$, when we observe that the online predictive loss~\cite{cesa2004generalization} stagnates. \textit{EAMSGD} achieves significant accelerations compared to other methods, e.g. the relative speed-up for $p=8$ (the best comparator method is then \textit{DOWNPOUR}) to achieve the test error $49\%$ equals $1.8$, and simultaneously it reduces the communication overhead (\textit{DOWNPOUR} uses communication period $\tau = 1$ and \textit{EAMSGD} uses $\tau = 10$).}
\label{fig:ImageNet}
%\vspace{-0.1in}
\end{figure}

We next explore different number of local workers $p$ from the \redit{set} $p = \{4,8,16\}$ for the \textit{CIFAR} experiment, and $p = \{4,8\}$ for the \textit{ImageNet} experiment\footnote{For the \textit{ImageNet} experiment, the training loss is measured on a subset of the training data of size 50,000.}. For the \textit{ImageNet} experiment we report the results of one run with the best setting we have found. \textit{EASGD} and \textit{EAMSGD} were run with $\tau =10$ whereas \textit{DOWNPOUR} and \textit{MDOWNPOUR} were run with $\tau = 1$. The results are in Figure~\ref{fig:CIFAR2} and~\ref{fig:ImageNet}. For the \textit{CIFAR} experiment, it's noticeable that the lowest achievable test error by either \textit{EASGD} or \textit{EAMSGD} decreases with larger $p$. This can potentially be explained by the fact that larger $p$ allows for more exploration of the parameter space. In the Supplement, we discuss further the trade-off  between exploration and exploitation as a function of the learning rate (section~\ref{sec:deplr}) and the communication period (section~\ref{sec:deptau}). Finally, the results obtained for the \textit{ImageNet} experiment also shows the advantage of \textit{EAMSGD} over the competitor methods.

%\footnote{To make sense of the wallclock time, we report in Table~\ref{tab:comptime} of the Supplementary material its breakdown into computation time, data loading time and parameter communication time. We also summarize in the Supplement (Figure~\ref{fig:CIFARspeedup} and~\ref{fig:ImageNetspeedup}) the wall clock time needed to achieve the same level of the test error for all the methods.}

\section{Conclusion}
\label{sec:conclusion}
In this paper we describe a new algorithm called \textit{EASGD} and its variants for training deep neural networks in the stochastic setting when the computations are parallelized over multiple GPUs. Experiments demonstrate that this new algorithm quickly achieves improvement in test error compared to more common baseline approaches such as \textit{DOWNPOUR} and its variants. We show that our approach is very stable and plausible under communication constraints. We provide the stability analysis of the asynchronous \textit{EASGD} in the round-robin scheme, and show the theoretical advantage of the method over \textit{ADMM}. The different behavior of the \textit{EASGD} algorithm from its momentum-based variant \textit{EAMSGD} is intriguing and will be studied in future works.

\subsubsection*{Acknowledgments}
The authors thank R. Power, J. Li for implementation guidance, J. Bruna, O. Henaff, C. Farabet,
A. Szlam, Y. Bakhtin for helpful discussion, P. L. Combettes, S. Bengio and the referees for valuable feedback.

\newpage
\small{
\bibliography{nips2015}
\bibliographystyle{unsrtModif}
}

%%SUPP%%%%%%%%%%%%%%%%%%%%

\newpage 
\normalsize

\vbox{\hsize\textwidth
\linewidth\hsize \vskip 0.1in \toptitlebar \centering
{\LARGE\bf Deep learning with Elastic Averaging SGD\\(Supplementary Material) \par}
\bottomtitlebar
\vskip 0.3in minus 0.1in}
\vspace{-0.3in}

\section{Additional theoretical results and proofs}

\subsection{Quadratic case}

We provide here the convergence analysis of the synchronous \textit{EASGD} algorithm with constant learning rate. The analysis is focused on the convergence of the center variable to the local optimum. We discuss one-dimensional quadratic case first, then the generalization to multi-dimensional setting (Lemma~\ref{lem:lemm3}) and finally to the strongly convex case (Theorem~\ref{thm:strongly}).

Our analysis in the quadratic case extends the analysis of ASGD in~\cite{polyak1992acceleration}. Assume each of the $p$ local workers $x^i_t \in \mathbb{R}^n$ observes a noisy gradient at time $t \ge 0$  of the linear form given in Equation~\ref{eq:gradlform}.
\begin{equation}
g_t^i(x^i_t) = A x^i_t - b - \xi^i_t, \quad i \in \{1,\ldots,p\},
\label{eq:gradlform}
\end{equation}
where the matrix $A$ is positive-definite (each eigenvalue is strictly positive)
and $\{ \xi^i_t
\}$'s are i.i.d. random variables, with zero mean and positive-definite covariance $\Sigma$. 
Let $x^\ast $ denote the optimum solution, where $x^\ast = A^{-1} b  \in \mathbb{R}^n$. In this section we analyze the behavior of the mean squared error (MSE) of the center variable
$\tilde{x}_t$, where this error is denoted as $\mathbb{E}[\norm{L2}{\tilde{x}_{t} - x^{*}}^2]$, as a function of $t$, $p$, $\eta$, $\alpha$ and $\beta$, where $\beta = p\alpha$. Note that the MSE error can be decomposed as (squared) bias and variance\footnote{In our notation, $\mathbb{V}$ denotes the variance.}: $\mathbb{E}[\norm{L2}{\tilde{x}_{t} - x^{*}}^2] = \norm{L2}{\mathbb{E} [\tilde{x}_{t} - x^{*}]}^2+ \mathbb{V}[\tilde{x}_{t} - x^{*}]$. For one-dimensional case ($n = 1$), we assume $A = h > 0$ and $\Sigma = \sigma^2 > 0$.

\begin{lemma}
\label{lem:lemm1}
Let $\tilde{x}_0$ and $\{x^i_0\}_{i=1,\ldots,p}$ be arbitrary constants, then
\begin{gather}
\mathbb{E} [\tilde{x}_{t} - x^\ast] =
\gamma^{t} (\tilde{x}_0 -  x^\ast) + \frac{\gamma^t-\phi^t}{\gamma-\phi} \alpha u_0,
\label{eq:lem1a} \\
\mathbb{V} [\tilde{x}_{t} - x^\ast] =
\frac{p^2 \alpha^2 \eta^2 }{(\gamma-\phi)^2}
\bigg(
\frac{\gamma^2-\gamma^{2t}}{1-\gamma^2}
+\frac{\phi^2-\phi^{2t}}{1-\phi^2}
-2 \frac{\gamma \phi - (\gamma \phi)^t}{1 - \gamma \phi}\bigg)
\frac{\sigma^2}{p}, \label{eq:lem1b}
\end{gather}
where $u_0 = \sum_{i=1}^{p} ( x^i_0 - x^\ast - \frac{\alpha}{1-p\alpha-\phi} (\cbar{x}_0-x^\ast))$, $a = \eta h + (p+1)\alpha$, $c^2 = \eta h p \alpha $, $\gamma = 1-\frac{a-\sqrt{a^2-4 c^2}}{2}$, and $\phi = 1-\frac{a+\sqrt{a^2-4
c^2}}{2}$.
\end{lemma}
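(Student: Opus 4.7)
The plan is to collapse the $(p+1)$-dimensional dynamics down to a closed $2$-dimensional affine linear recursion, and then diagonalize. Working in error coordinates $y^i_t := x^i_t - x^\ast$ and $\tilde y_t := \tilde x_t - x^\ast$ (and using $hx^\ast = b$), the \textit{EASGD} updates~(\ref{eq:movingaverage1})--(\ref{eq:movingaverage2}) become $y^i_{t+1} = (1-\eta h-\alpha)y^i_t + \alpha\tilde y_t + \eta\xi^i_t$ and $\tilde y_{t+1} = (1-p\alpha)\tilde y_t + \alpha\sum_i y^i_t$. The key observation is that the workers feed back into the master only through the sum $U_t := \sum_{i=1}^p y^i_t$, so summing the first recursion over $i$ yields a self-contained $2$-dimensional system
\[
\begin{pmatrix}\tilde y_{t+1}\\ U_{t+1}\end{pmatrix} = M\begin{pmatrix}\tilde y_t\\ U_t\end{pmatrix} + \eta\begin{pmatrix}0\\ \sum_i\xi^i_t\end{pmatrix},\qquad M := \begin{pmatrix}1-p\alpha & \alpha \\ p\alpha & 1-\eta h-\alpha\end{pmatrix}.
\]

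The characteristic polynomial of $M$ is $\lambda^2-(2-a)\lambda+(1-a+c^2)$ with roots exactly $\gamma$ and $\phi$. Using the standard spectral identity $M^t = [(\gamma^t-\phi^t)M-(\phi\gamma^t-\gamma\phi^t)I]/(\gamma-\phi)$, I read off $[M^t]_{11} = [\gamma^t(1-p\alpha-\phi)-\phi^t(1-p\alpha-\gamma)]/(\gamma-\phi)$ and $[M^t]_{12} = \alpha(\gamma^t-\phi^t)/(\gamma-\phi)$. Taking expectations and using $\mathbb{E}\xi^i_s=0$ gives $\mathbb{E}[\tilde y_t] = [M^t]_{11}\tilde y_0 + [M^t]_{12}U_0$. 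Regrouping by powers of $\gamma^t$ and $\phi^t$ reveals that the coefficient of $\phi^t$ is a scalar multiple of $U_0 + \tilde y_0(1-p\alpha-\gamma)/\alpha$, which is precisely the quantity $u_0$ from the lemma once rewritten via the identity $(1-p\alpha-\gamma)(1-p\alpha-\phi) = -p\alpha^2$. This identity is the one nontrivial algebraic check, and I verify it directly from $\gamma+\phi=2-a$, $\gamma\phi = 1-a+c^2$, and $c^2 = p\alpha\eta h$. With $u_0$ so defined, the mean formula~(\ref{eq:lem1a}) drops out.

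For the variance I subtract the deterministic part and unroll: $\tilde y_t - \mathbb{E}[\tilde y_t] = \eta\sum_{s=0}^{t-1}[M^{t-1-s}]_{12}\sum_i \xi^i_s$. By independence of the $\xi^i_s$ across $(s,i)$ together with $\mathbb{V}[\sum_i\xi^i_s] = p\sigma^2$, this gives $\mathbb{V}[\tilde y_t] = p\eta^2\sigma^2\sum_{k=0}^{t-1}[M^k]_{12}^2 = \frac{p\eta^2\alpha^2\sigma^2}{(\gamma-\phi)^2}\sum_{k=0}^{t-1}(\gamma^k-\phi^k)^2$. Expanding the square and summing three geometric series delivers~(\ref{eq:lem1b}); the $k=0$ summand of $(\gamma^k-\phi^k)^2$ vanishes, which is why the three partial sums can be re-indexed from $k=1$ so that $\gamma^2,\phi^2,\gamma\phi$ (rather than $1,1,1$) appear in the numerators. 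The only real obstacle is the bookkeeping needed to fold the mixed $\gamma^t,\phi^t$ contributions in $\mathbb{E}[\tilde y_t]$ into the single compact $u_0$; once the $2$-dim reduction is set up, everything else is routine spectral and geometric-sum algebra.
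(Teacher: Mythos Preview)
Your proof is correct and follows essentially the same strategy as the paper: shift to error coordinates, collapse the $(p+1)$-dimensional system to a two-dimensional linear recurrence via the worker sum, diagonalize, and unroll. The only cosmetic difference is that the paper changes coordinates to the eigenvector combination $u_t = U_t - \frac{p\alpha}{1-p\alpha-\phi}\tilde y_t$ up front, obtaining the already-triangular pair $u_{t+1}=\phi u_t+\eta\xi_t$, $\tilde y_{t+1}=\gamma\tilde y_t+\alpha u_t$, whereas you keep the raw pair $(\tilde y_t,U_t)$ and invoke the closed form for $M^t$; both routes diagonalize the same $2\times 2$ map and the identity $(1-p\alpha-\gamma)(1-p\alpha-\phi)=-p\alpha^2$ you verify is exactly what reconciles your $U_0$-expression with the paper's $u_0$.
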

It follows from Lemma~\ref{lem:lemm1} that for the center variable to be stable the following has to hold
\begin{equation}
-1 < \phi < \gamma < 1.
\label{eq:condition}
\end{equation} 
It can be verified that $\phi$ and $\gamma$ are the two zero-roots of the polynomial in $\lambda$: $\lambda^2 - (2-a)\lambda + (1-a+c^2)$. Recall that $\phi$ and $\lambda$ are the functions of $\eta$ and $\alpha$. Thus  (see proof in Section ~\ref{sec:condition11}) 
\begin{itemize}
\item $\gamma < 1$ iff $c^2 > 0$ (i.e. $\eta >0$ and $ \alpha > 0$).
\vspace{-0.05in}
\item $\phi > -1$ iff $(2-\eta h)(2-p \alpha) >2 \alpha$ and $(2-\eta h) +
(2-p\alpha) > \alpha $.
\vspace{-0.05in}
\item $\phi = \gamma$ iff $a^2 = 4 c^2$ (i.e. $\eta h= \alpha =0$).
\end{itemize}
\vspace{-0.05in}

The proof the above Lemma is based on the diagonalization of the linear gradient map (this map is symmetric due to the relation $\beta = p \alpha$). The stability analysis of the asynchronous \textit{EASGD} algorithm in the round-robin scheme is similar due to this elastic symmetry.  
%Satisfying the Condition~\ref{eq:condition} in case when $p=1$ (recall $\phi$ and $\gamma$ depend on $p$) is necessary and sufficient to show in the round-robin scheme~\cite{langford2009slow} (\textcolor{red}{ADD PROVE TO THE SUPPLEMENT AND MAKE A REFERENCE HERE!!!}).
%As the number of workers $p$ grows, we have %  the choice of $\eta$ and $\rho$ (recall $\alpha = \eta \rho$) becomes wider, i.e. $\eta \in (0,2/h)$, $\alpha \in (0,2/p)$, and in particular,

\begin{proof}

Substituting the gradient from Equation~\ref{eq:gradlform} into the update rule used by each local worker in the synchronous \textit{EASGD} algorithm (Equation~\ref{eq:movingaverage1} and ~\ref{eq:movingaverage2}) we obtain
\begin{align}
x^i_{t+1} &= x^i_{t} - \eta ( A x^i_{t} - b - \xi^i_t) - \alpha (x^i_t - \tilde{x}_t),
\label{loc:local} \\
\tilde{x}_{t+1} &= \tilde{x}_t + \sum_{i=1}^p \alpha (x^i_t - \tilde{x}_t)
\label{loc:center},
\end{align}
where $\eta$ is the learning rate, and $\alpha$ is the moving rate. Recall that $\alpha = \eta \rho$ and $A=h$.

For the ease of notation we redefine $\cbar{x}_{t}$ and $x^i_t$ as follows:
\[\cbar{x}_{t} \triangleq \cbar{x}_{t} - x^\ast \text{\:\:\:and\:\:\:} x^i_t \triangleq x^i_t - x^\ast.
\]
We prove the lemma by explicitly solving the linear equations \ref{loc:local} and
\ref{loc:center}. Let $x_{t} = (x^1_t,\ldots,x^p_t,\tilde{x}_t)^T$. We rewrite the
recursive relation captured in Equation~\ref{loc:local} and~\ref{loc:center} as simply
\[x_{t+1} = M x_t + b_t,
\]
where the drift matrix $M$ is defined as
\begin{equation*}
M=
\begin{bmatrix}
1-\alpha-\eta h & 0 & ... & 0 & \alpha \\
0 & 1-\alpha-\eta h& 0 & ... & \alpha \\
... & 0 & ... &0 & ... \\
0 & ... &0 & 1-\alpha-\eta h  & \alpha \\
\alpha & \alpha & ... & \alpha & 1-p\alpha
\end{bmatrix},
\end{equation*}
and the (diffusion) vector $b_t = (\eta\xi^1_t,\ldots,\eta\xi^p_t,0)^T$. 

Note that one of the eigenvalues of matrix $M$, that we call $\phi$, satisfies $(1-\alpha-\eta h-\phi)(1-p\alpha-\phi) =p\alpha^2$. The corresponding eigenvector is $(1,1,\ldots,1,-\frac{p\alpha}{1-p\alpha-\phi})^T$. Let $u_t$ be the
projection of $x_{t}$ onto this eigenvector. Thus $u_t = \sum_{i=1}^{p} ( x^i_t - \frac{\alpha}{1-p\alpha-\phi} \cbar{x}_t)$. Let furthermore $\xi_t = \sum_{i=1}^p \xi_t^i$. Therefore we have
\begin{gather}
u_{t+1} = \phi u_t + \eta \xi_t. \label{eq:proj1}
\end{gather}
By combining Equation \ref{loc:center} and \ref{eq:proj1} as follows
 \begin{align*}
\tilde{x}_{t+1} &= \tilde{x}_t + \sum_{i=1}^p \alpha (x^i_t - \tilde{x}_t) = (1-p\alpha) \tilde{x}_t + \alpha (u_t + \frac{p\alpha}{1-p\alpha-\phi} \tilde{x}_t )\\
&= (1-p\alpha+\frac{p\alpha^2}{1-p\alpha-\phi}) \tilde{x}_t +  \alpha u_t = \gamma \cbar{x}_t + \alpha u_t,
\end{align*} 
where the last step results from the following relations: $\frac{p\alpha^2}{1-p\alpha-\phi} = 1-\alpha-\eta h-\phi $ and $\phi + \gamma = 1-\alpha-\eta h + 1-p\alpha $.
Thus we obtained
\begin{gather}
\cbar{x}_{t+1} = \gamma \cbar{x}_t + \alpha u_t. \label{eq:proj2}
\end{gather}

Based on Equation \ref{eq:proj1} and \ref{eq:proj2}, we can then expand $u_t$ and $\cbar{x}_t$
recursively,
\begin{gather}
u_{t+1} = \phi^{t+1} u_0 + \phi^{t} (\eta \xi_0) + \ldots + \phi^{0} (\eta \xi_t),
\label{eq:lem1c} \\
\cbar{x}_{t+1} = \gamma^{t+1} \cbar{x}_0 + \gamma^{t} (\alpha u_0) + \ldots +
\gamma^{0} (\alpha u_t). \label{eq:lem1d}
\end{gather}
Substituting $u_0, u_1, \dots, u_t$, each given through Equation \ref{eq:lem1c}, into Equation~\ref{eq:lem1d} we obtain
\begin{gather}
\cbar{x}_t = \gamma^t \cbar{x}_0 + 
            \frac{\gamma^t-\phi^t}{\gamma-\phi}  \alpha u_0 +
            \alpha \eta \sum_{l=1}^{t-1} \frac{\gamma^{t-l}-\phi^{t-l}}{\gamma-\phi} \xi_{l-1} .
\label{eq:lem1e}
\end{gather}

To be more specific, the Equation \ref{eq:lem1e} is obtained by integrating by parts,
\begin{align*}
\cbar{x}_{t+1} &= \gamma^{t+1} \cbar{x}_0 + \sum_{i=0}^t \gamma^{t-i} (\alpha u_i) \\
                          &= \gamma^{t+1} \cbar{x}_0 + 
\sum_{i=0}^t \gamma^{t-i} (\alpha (\phi^i u_0 + \sum_{l=0}^{i-1} \phi^{i-1-l} \eta \xi_l )) \\
			  &= \gamma^{t+1} \cbar{x}_0 + 
\sum_{i=0}^t \gamma^{t-i}  \phi^i (\alpha u_0) +
\sum_{l=0}^{t-1} \sum_{i=l+1}^{t} \gamma^{t-i} \phi^{i-1-l} (\alpha \eta \xi_l) \\
			  &=  \gamma^{t+1} \cbar{x}_0 + 
\frac{\gamma^{t+1}-\phi^{t+1}}{\gamma - \phi}  (\alpha u_0) + 
\sum_{l=0}^{t-1} \frac{\gamma^{t-l}-\phi^{t-l}}{\gamma-\phi} (\alpha \eta \xi_l).
\end{align*}
Since the random variables $\xi_l$ are i.i.d, we may sum the variance term by term as follows
\begin{align}
\sum_{l=0}^{t-1} \bigg( \frac{\gamma^{t-l}-\phi^{t-l}}{\gamma-\phi} \bigg)^2 &= 
\sum_{l=0}^{t-1} \frac{\gamma^{2(t-l)} - 2\gamma^{t-l}\phi^{t-l} +\phi^{2(t-l)}}{(\gamma-\phi)^2} \nonumber\\
&= \frac{1}{(\gamma-\phi)^2} \bigg( \frac{\gamma^2-\gamma^{2(t+1)}}{1-\gamma^2} 
- 2  \frac{\gamma \phi -  (\gamma \phi)^{t+1}}{1-\gamma \phi }
+ \frac{\phi^2-\phi^{2(t+1)}}{1-\phi^2}
\bigg).
\label{eq:new}
\end{align}

Note that $\mathbb{E}[\xi_t] = \sum_{i=1}^p \mathbb{E} [\xi_t^i] = 0$ and $\mathbb{V}[\xi_t] =
\sum_{i=1}^p \mathbb{V} [\xi_t^i] = p\sigma^2$. These two facts, the equality in Equation~\ref{eq:lem1e} and Equation~\ref{eq:new} can then be used to compute $\mathbb{E}[\tilde{x}_t]$ and $\mathbb{V}[\tilde{x}_t]$ as given in Equation~\ref{eq:lem1a} and~\ref{eq:lem1b} in Lemma~\ref{lem:lemm1}. 
\end{proof}

\subsubsection{Visualizing Lemma~\ref{lem:lemm1}}

\begin{figure}[htp!]
%\vskip 0.2in
\begin{center}
\centerline{\includegraphics[trim=115 65 115
35,clip,width=1.0\columnwidth]{./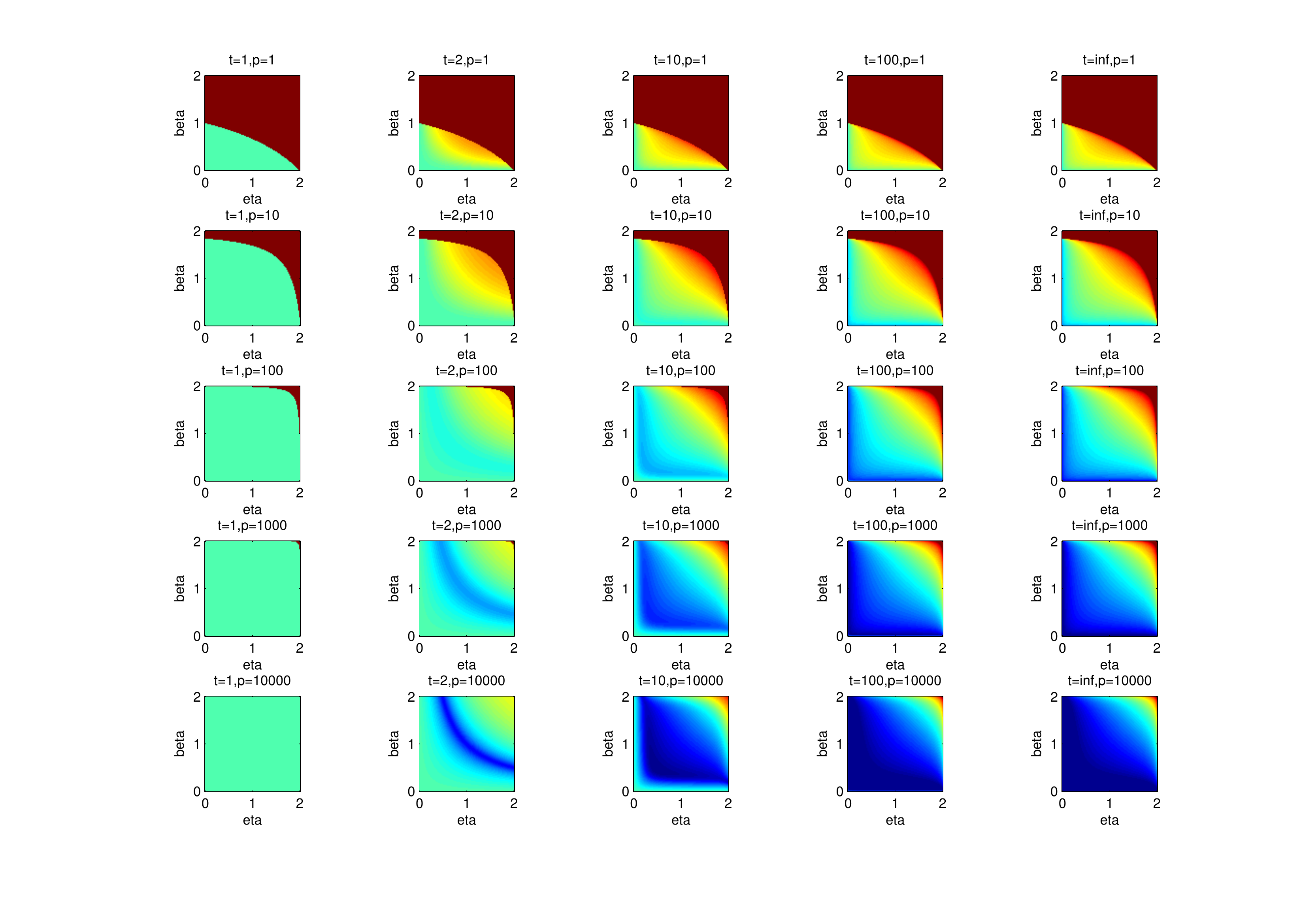}}
%\centerline{\includegraphics[trim=65 65 65
%35,clip,width=1.15\columnwidth]{./msegrid3.pdf}}
\caption{Theoretical mean squared error (MSE) of the center $\tilde{x}$ in the quadratic case, with
various choices of the learning rate $\eta$ (horizontal within each block), and the 
moving rate $\beta=p\alpha$ (vertical within each block),
the number of processors $p=\{1,10,100,1000,10000\}$ (vertical across blocks), and the time steps $t=\{1,2,10,100,\infty\}$ (horizontal across blocks). The MSE is plotted in log
scale, ranging from $10^{-3}$ to $10^{3}$ (from deep blue to red). The dark red (i.e. on the upper-right corners) indicates divergence.}
\label{fig:easgd_grid}
\end{center}
%\vskip -0.2in
%\vspace{-0.2in}
\end{figure}

 In Figure \ref{fig:easgd_grid}, we illustrate the dependence of MSE on $\beta$, $\eta$ and the number of processors $p$ over time $t$. We consider the large-noise setting where $\tilde{x}_0 = x^i_0= 1$, $h=1$ and $\sigma = 10$. The MSE error is color-coded such that the deep blue color corresponds to the MSE equal to $10^{-3}$, the green color corresponds to the MSE equal to $1$, the red color corresponds to MSE equal to $10^3$ and the dark red color corresponds to the divergence of algorithm \textit{EASGD} (condition in Equation~\ref{eq:condition} is then violated). The plot shows that
we can achieve significant variance reduction by increasing the number of local workers $p$. 
This effect is less sensitive to the choice of $\beta$ and $\eta$ for large $p$. 

\subsubsection{Condition in Equation~\ref{eq:condition}}
\label{sec:condition11}
We are going to show that
\begin{itemize}
\item $\gamma < 1$ iff $c^2 > 0$ (i.e. $\eta >0$ and $ \beta > 0$).
\item $\phi > -1$ iff $(2-\eta h)(2-\beta) >2\beta/p$ and $(2-\eta h) +
(2-\beta) > \beta/p $.
\item $\phi = \gamma$ iff $a^2 = 4 c^2$ (i.e. $\eta h= \beta =0$).
\end{itemize}

Recall that $a = \eta h + (p+1)\alpha$, $c^2 = \eta h p \alpha $, $\gamma = 1-\frac{a-\sqrt{a^2-4 c^2}}{2}$, $\phi = 1-\frac{a+\sqrt{a^2-4 c^2}}{2}$, and $\beta = p \alpha$. 
We have 
\begin{itemize}
\item 
$\gamma < 1 \Leftrightarrow \frac{a-\sqrt{a^2-4 c^2}}{2} > 0 \Leftrightarrow a > \sqrt{a^2-4 c^2} \Leftrightarrow a^2 > a^2-4 c^2 \Leftrightarrow c^2 > 0 $.
\item 
$\phi > -1 \Leftrightarrow 2 > \frac{a+\sqrt{a^2-4 c^2}}{2} \Leftrightarrow 4-a > \sqrt{a^2-4 c^2} 
\Leftrightarrow 4-a > 0, (4-a)^2 > a^2-4 c^2  \Leftrightarrow 4-a > 0, 4 - 2a + c^2 > 0
\Leftrightarrow  4 > \eta h + \beta + \alpha , 4 - 2(\eta h + \beta + \alpha) + \eta h \beta  > 0$.
\item $\phi = \gamma \Leftrightarrow \sqrt{a^2-4 c^2} = 0 \Leftrightarrow a^2 = 4c^2$.  
\end{itemize}

The next corollary is a consequence of Lemma~\ref{lem:lemm1}. As the number of workers $p$ grows, the averaging property of the \textit{EASGD} can be \redit{characterized} as follows

\begin{corollary}
Let the \textit{Elastic Averaging relation} $\beta = p \alpha$ and the condition \ref{eq:condition} hold, 
then
\begin{gather*}
\lim_{p \rightarrow \infty}\lim_{t \rightarrow \infty} p\mathbb{E} [ (\tilde{x}_{t} - x^{*})^2] = \frac{\beta \eta h}{(2-\beta)(2-\eta h)} \cdot
\frac{2 -\beta -\eta h + \beta \eta h}{\beta  + \eta h - \beta \eta h } \cdot
\frac{\sigma^2}{h^2}.
\end{gather*}
\label{cor:lemm1cor}
\end{corollary}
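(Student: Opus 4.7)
\textbf{Proof plan for Corollary~\ref{cor:lemm1cor}.}

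The plan is to apply Lemma~\ref{lem:lemm1} and take the two limits in succession, reducing the problem to a purely algebraic simplification. First I would write the MSE as bias-squared plus variance, so that
\begin{equation*}
\mathbb{E}[(\tilde{x}_t-x^*)^2]=\bigl(\mathbb{E}[\tilde{x}_t-x^*]\bigr)^2+\mathbb{V}[\tilde{x}_t-x^*],
\end{equation*}
and invoke Lemma~\ref{lem:lemm1} for both pieces. Under the stability assumption \ref{eq:condition} we have $|\gamma|,|\phi|<1$, so from Equation~\ref{eq:lem1a} the bias $\mathbb{E}[\tilde{x}_t-x^*]$ decays geometrically and contributes nothing in the $t\to\infty$ limit. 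From Equation~\ref{eq:lem1b}, all the $\gamma^{2t}$, $\phi^{2t}$ and $(\gamma\phi)^t$ terms disappear, leaving the stationary variance
\begin{equation*}
\lim_{t\to\infty}\mathbb{V}[\tilde{x}_t-x^*]=\frac{p^2\alpha^2\eta^2}{(\gamma-\phi)^2}\left[\frac{\gamma^2}{1-\gamma^2}+\frac{\phi^2}{1-\phi^2}-\frac{2\gamma\phi}{1-\gamma\phi}\right]\frac{\sigma^2}{p}.
\end{equation*}
Using $\beta=p\alpha$, multiplication by $p$ gives a prefactor $\beta^2\eta^2\sigma^2/(\gamma-\phi)^2$ that no longer depends on $p$ except through $\gamma$ and $\phi$.

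Second, I would let $p\to\infty$ with $\beta$ held fixed, which forces $\alpha=\beta/p\to 0$. Recalling $a=\eta h+(p+1)\alpha$ and $c^2=\eta hp\alpha$, the limits are $a\to \eta h+\beta$ and $c^2\to\eta h\beta$, so $\gamma$ and $\phi$ converge to the two roots of $\lambda^2-(2-\eta h-\beta)\lambda+(1-\eta h)(1-\beta)=0$, namely $\{1-\eta h,\,1-\beta\}$. Since the bracketed variance expression is symmetric in $\gamma$ and $\phi$, it does not matter which root is which. I would record the convenient identities $1-\gamma^2\to\eta h(2-\eta h)$, $1-\phi^2\to\beta(2-\beta)$, $1-\gamma\phi\to\eta h+\beta-\eta h\beta$, and $(\gamma-\phi)^2\to(\eta h-\beta)^2$.

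Finally, substituting these limits and using $\eta^2=(\eta h)^2/h^2$ to pull out the desired factor $\sigma^2/h^2$, the claim reduces to verifying the algebraic identity
\begin{equation*}
\frac{(1-x)^2}{x(2-x)}+\frac{(1-y)^2}{y(2-y)}-\frac{2(1-x)(1-y)}{x+y-xy}=\frac{(x-y)^2\,(2-x-y+xy)}{xy(2-x)(2-y)(x+y-xy)},
\end{equation*}
with $x=\eta h$ and $y=\beta$. I expect this algebraic simplification to be the main obstacle, since the three fractions must be combined over the common denominator $xy(2-x)(2-y)(x+y-xy)$ and the numerator then collapsed. The cleanest way is to note that $(1-x)^2(1-\phi^2)+(1-y)^2(1-\gamma^2)-2(1-x)(1-y)(1-\gamma\phi)$ can be rewritten as $\bigl[(1-x)-(1-y)\bigr]^2$ plus cross terms involving $(1-x)^2\phi^2+(1-y)^2\gamma^2-2(1-x)(1-y)\gamma\phi=\bigl[(1-x)\phi-(1-y)\gamma\bigr]^2$, after which the factor $(x-y)^2=(\gamma-\phi)^2$ emerges and matches the prefactor. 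Once this identity is confirmed, the stated formula follows immediately.
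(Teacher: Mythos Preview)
Your proposal is correct and follows essentially the same route as the paper: take $t\to\infty$ in Lemma~\ref{lem:lemm1}, then let $p\to\infty$ with $\beta$ fixed so that $\{\gamma,\phi\}\to\{1-\eta h,\,1-\beta\}$, and finish with an algebraic simplification. The only difference is that the paper simplifies the bracket in $\gamma,\phi$ \emph{before} substituting the limits, obtaining directly
\[
\frac{\gamma^2}{1-\gamma^2}+\frac{\phi^2}{1-\phi^2}-\frac{2\gamma\phi}{1-\gamma\phi}
=\frac{(\gamma-\phi)^2(1+\gamma\phi)}{(1-\gamma^2)(1-\phi^2)(1-\gamma\phi)},
\]
so the $(\gamma-\phi)^2$ cancels against the prefactor immediately and the somewhat vague numerator manipulation you sketch at the end is unnecessary.
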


\begin{proof}

Note that when $\beta$ is fixed, $\lim_{p\rightarrow\infty}a = \eta h+ \beta$ and $c^2 = \eta h \beta$. Then $\lim_{p\rightarrow\infty} \phi = \min(1 - \beta,1 - \eta h)$ and $\lim_{p\rightarrow\infty} \gamma= \max(1 - \beta,1 - \eta h)$. Also note that using Lemma~\ref{lem:lemm1} we obtain
\begin{align*}
\lim_{t \rightarrow \infty} \mathbb{E} [(\tilde{x}_{t} - x^{*})^2] &=  
\frac{\beta^2 \eta^2}{(\gamma-\phi)^2} \bigg( \frac{\gamma^2}{1-\gamma^2} + 
\frac{\phi^2}{1-\phi^2} -  \frac{2 \gamma \phi}{1- \gamma \phi} \bigg) \frac{\sigma^2}{p} \\
&=
\frac{\beta^2 \eta^2}{(\gamma-\phi)^2} 
\bigg( \frac{\gamma^2(1-\phi^2)(1-\phi \gamma) + \phi^2(1-\gamma^2)(1-\phi \gamma) - 2\gamma \phi (1-\gamma^2)(1-\phi^2) }{(1-\gamma^2)(1-\phi^2)(1- \gamma \phi)} \bigg ) \frac{\sigma^2}{p} \\
&= \frac{\beta^2 \eta^2}{(\gamma-\phi)^2} 
\bigg( \frac{ (\gamma-\phi)^2(1+\gamma \phi) }{(1-\gamma^2)(1-\phi^2)(1- \gamma \phi)} \bigg ) \frac{\sigma^2}{p} \\
&= \frac{\beta ^2 \eta^2}{(1-\gamma^2)(1-\phi^2)} \cdot
\frac{1+\gamma \phi}{1-\gamma \phi} \cdot
\frac{\sigma^2}{p}.
\end{align*}

Corollary~\ref{cor:lemm1cor} is obtained by plugining in the limiting values of $\phi$ and $\gamma$.
\end{proof}

The crucial point of Corollary \ref{cor:lemm1cor} is that the MSE in the limit $t \rightarrow \infty$ is in the order of $1/p$ which implies that as the number of processors $p$ grows, the MSE will decrease for the \textit{EASGD} algorithm. Also note that the smaller the $\beta$ is (recall that $\beta = p\alpha = p\eta\rho$), the more exploration is allowed (small $\rho$) and simultaneously the smaller the MSE is.

\subsection{Generalization to multidimensional case}

The next lemma (Lemma~\ref{lem:lemm2}) shows that \textit{EASGD} algorithm achieves the highest possible rate of convergence when we consider the double averaging sequence (similarly to~\cite{polyak1992acceleration})
$\{z_1,z_2,\dots\}$  defined as below
\begin{equation}
z_{t+1} = \frac{1}{t+1} \sum_{k=0}^{t} \tilde{x}_k.
\label{eq:doubleseq}
\end{equation}

\begin{lemma}[Weak convergence]
\label{lem:lemm2}
If the condition in Equation~\ref{eq:condition} holds, then the
normalized double averaging sequence defined in Equation~\ref{eq:doubleseq} converges weakly to the
normal distribution with zero mean and variance $\sigma^2/p h^2$,
\begin{gather}
\sqrt{t} (z_{t} -  x^\ast) \rightharpoonup \mathcal{N}(0,\frac{\sigma^2}{p h^2}),
\quad  t\rightarrow \infty.
\end{gather}
\end{lemma}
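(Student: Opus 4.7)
The plan is to exploit the explicit representation of $\tilde{x}_k - x^*$ given in Lemma~\ref{lem:lemm1} (Equation~\ref{eq:lem1e}), average it over $k$, and reduce the asymptotic analysis to the classical CLT for an i.i.d.\ sequence. Summing Equation~\ref{eq:lem1e} over $k = 0, \ldots, t$ and dividing by $t+1$ and swapping the order of the double summation yields a decomposition
\begin{equation*}
z_{t+1} - x^* \;=\; D_t \;+\; \frac{\alpha\eta}{t+1}\sum_{l=0}^{t-2} w_{t,l}\,\xi_l,\qquad w_{t,l} \;=\; \frac{1}{\gamma-\phi}\sum_{j=1}^{t-1-l}\bigl(\gamma^j - \phi^j\bigr),
\end{equation*}
where $D_t$ collects the transient deterministic pieces involving $\tilde{x}_0$ and $u_0$. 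Since $-1 < \phi < \gamma < 1$ by Condition~\ref{eq:condition}, the partial sums $\sum_{k=0}^t \gamma^k$ and $\sum_{k=0}^t \phi^k$ are bounded uniformly in $t$, so $D_t = O(1/t) = o(1/\sqrt{t})$ and contributes nothing in the limit.

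Next I would evaluate the geometric sums in $w_{t,l}$: as $t - l \to \infty$ they converge pointwise to
\begin{equation*}
w_\infty \;:=\; \frac{1}{\gamma-\phi}\left(\frac{\gamma}{1-\gamma} - \frac{\phi}{1-\phi}\right) \;=\; \frac{1}{(1-\gamma)(1-\phi)}.
\end{equation*}
Using the fact that $\gamma$ and $\phi$ are the two roots of $\lambda^2 - (2-a)\lambda + (1-a+c^2) = 0$, Vieta's relations give $(1-\gamma)(1-\phi) = 1 - (\gamma+\phi) + \gamma\phi = c^2 = \eta h p\alpha$, so the asymptotic coefficient collapses to $\alpha\eta \cdot w_\infty = 1/(ph)$.

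I would then write $w_{t,l} = w_\infty + r_{t,l}$ with $|r_{t,l}| \lesssim |\gamma|^{t-l} + |\phi|^{t-l}$, producing
\begin{equation*}
z_{t+1} - x^* \;=\; D_t \;+\; \frac{1}{ph\,(t+1)}\sum_{l=0}^{t-2}\xi_l \;+\; R_t,
\end{equation*}
where $\mathbb{V}[\sqrt{t}\,R_t] = \frac{t\,\alpha^2\eta^2\,p\sigma^2}{(t+1)^2}\sum_l r_{t,l}^2 = O(1/t)$ because $\sum_l r_{t,l}^2$ is dominated by a convergent geometric series. The $\xi_l = \sum_i \xi_l^i$ are i.i.d.\ with mean $0$ and variance $p\sigma^2$, so the classical CLT gives $\frac{1}{\sqrt{t}}\sum_{l=0}^{t-2}\xi_l \rightharpoonup \mathcal{N}(0,p\sigma^2)$, whence
\begin{equation*}
\sqrt{t}\cdot\frac{1}{ph\,(t+1)}\sum_{l=0}^{t-2}\xi_l \;\rightharpoonup\; \frac{1}{ph}\,\mathcal{N}(0,p\sigma^2) \;=\; \mathcal{N}\!\bigl(0,\sigma^2/(ph^2)\bigr).
\end{equation*}
Slutsky's theorem combined with $D_t = o(1/\sqrt{t})$ and $\sqrt{t}\,R_t \to 0$ in $L^2$ then yields the stated weak convergence.

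The main obstacle is the algebraic identity $(1-\gamma)(1-\phi) = c^2$ (which is what makes the limiting variance clean and independent of $\alpha, \eta$) together with the bookkeeping needed to confirm that the boundary remainder $R_t$, arising because $w_{t,l}$ only approximates $w_\infty$ for $l$ close to $t$, is negligible at the $1/\sqrt{t}$ scale. Everything else reduces to routine geometric-sum estimates and one application of the i.i.d.\ CLT to the scalar sums $\xi_l$.
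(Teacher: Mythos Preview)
Your proposal is correct and follows essentially the same route as the paper: sum the explicit representation of $\tilde{x}_k - x^\ast$ from Equation~\ref{eq:lem1e}, identify the limiting weight $w_\infty = 1/\bigl((1-\gamma)(1-\phi)\bigr) = 1/(\eta h p\alpha)$, and apply the i.i.d.\ CLT to the aggregated noise $\xi_l$. Your treatment is in fact slightly more explicit than the paper's---you justify the vanishing of the remainder $R_t$ in $L^2$ and invoke Slutsky, whereas the paper simply asserts that the only non-vanishing term in weak convergence is the one with limiting weight $w_\infty$; and you derive $(1-\gamma)(1-\phi)=c^2$ from Vieta's formulas, which the paper states without comment.
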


\begin{proof}
As in the proof of Lemma~\ref{lem:lemm1}, for the ease of notation we redefine $\cbar{x}_{t}$ and $x^i_t$ as follows:
\[\cbar{x}_{t} \triangleq \cbar{x}_{t} - x^\ast \text{\:\:\:and\:\:\:} x^i_t \triangleq x^i_t - x^\ast.
\] 
Also recall that $\{ \xi^i_t \}$'s are \textit{i.i.d.} random variables (noise) with zero mean and the same covariance $\Sigma \succ 0$. We are interested in the asymptotic behavior of the double averaging sequence $\{z_1,z_2,\dots\}$ defined as
\begin{gather}
z_{t+1} = \frac{1}{t+1} \sum_{k=0}^{t} \cbar{x}_k.
\end{gather}

Recall the Equation~\ref{eq:lem1e} from the proof of Lemma~\ref{lem:lemm1} (for the convenience it is provided below):
\begin{gather*}
\cbar{x}_k = \gamma^k \cbar{x}_0 + \alpha u_0 \frac{\gamma^k-\phi^k}{\gamma-\phi}+\alpha \eta
            \sum_{l=1}^{k-1} \frac{\gamma^{k-l}-\phi^{k-l}}{\gamma-\phi} \xi_{l-1},
\end{gather*}
where $\xi_t = \sum_{i=1}^p \xi_t^i$.
Therefore
\begin{eqnarray*}
\sum_{k=0}^t \cbar{x}_k \!\!\!&=&\!\!\! \frac{1-\gamma^{t+1}}{1-\gamma} \cbar{x}_0 + \alpha u_0 \frac{1}{\gamma-\mu} \bigg(\frac{1-\gamma^{t+1}}{1-\gamma} - \frac{1-\phi^{t+1}}{1-\phi} \bigg) + \alpha \eta \sum_{l=1}^{t-1} \sum_{k=l+1}^{t} \frac{\gamma^{k-l}-\phi^{k-l}}{\gamma-\phi} \xi_{l-1} \\
&=&\!\!\! O(1) + \alpha \eta \sum_{l=1}^{t-1} \frac{1}{\gamma - \phi}
\bigg (\gamma \frac{1-\gamma^{t-l}}{1-\gamma} - \phi \frac{1-\phi^{t-l}}{1-\phi} \bigg) \xi_{l-1}
\end{eqnarray*}

Note that the only non-vanishing term (in weak convergence) of $1/\sqrt{t}\sum_{k=0}^t \cbar{x}_k$ as $t\rightarrow \infty$ is
\begin{align}
    \frac{1}{\sqrt{t}} \alpha \eta \sum_{l=1}^{t-1} \frac{1}{\gamma - \phi}
\bigg ( \frac{\gamma}{1-\gamma} - \frac{\phi}{1-\phi} \bigg) \xi_{l-1}.
\label{eq:normalized}
\end{align}
Also recall that $\mathbb{V}[\xi_{l-1}] = p \sigma^2$ and
\begin{align*}
     \frac{1}{\gamma - \phi}
\bigg ( \frac{\gamma}{1-\gamma} - \frac{\phi}{1-\phi} \bigg) = \frac{1}{(1-\gamma)(1-\phi) } = \frac{1}{\eta h p \alpha} .
\end{align*}
Therefore the expression in Equation~\ref{eq:normalized} is asymptotically normal with zero mean and variance $\sigma^2/p h^2$.
\end{proof}

The asymptotic variance in the Lemma~\ref{lem:lemm2} is optimal 
with any 
fixed $\eta$ and $\beta$ for which Equation~\ref{eq:condition} holds. The next lemma (Lemma~\ref{lem:lemm3}) extends the result in Lemma~\ref{lem:lemm2} to the multi-dimensional setting.

\begin{lemma}[Weak convergence]
\label{lem:lemm3}
%Let $h$ denote the real part of this eigenvalue of $A$ which has the largest real part from among all the eigenvalues of $A$. 
Let $h$ denotes the largest eigenvalue of $A$. If $(2-\eta h)(2-\beta) >2 \beta/p$, $(2-\eta h) +
(2-\beta) > \beta/p $, $\eta>0$ and $\beta>0$,
then the
normalized double averaging sequence converges weakly to the
normal distribution with zero mean and the covariance matrix $V=A^{-1} \Sigma (A^{-1})^T$,
\begin{gather}
\sqrt{tp} (z_{t}- x^\ast) \rightharpoonup \mathcal{N}(0,V),
\quad  t\rightarrow \infty.
\end{gather}
\end{lemma}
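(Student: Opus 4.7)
The plan is to reduce the multidimensional case to the one-dimensional case of Lemma~\ref{lem:lemm2} via simultaneous diagonalization of the drift. Since $A$ is positive-definite and symmetric, write $A = U \Lambda U^{T}$ with $\Lambda = \mathrm{diag}(h_{1},\ldots,h_{n})$ and $h = \max_{k} h_{k}$. Introduce the change of variables $y^{i}_{t} = U^{T}(x^{i}_{t} - x^{\ast})$, $\tilde{y}_{t} = U^{T}(\tilde{x}_{t} - x^{\ast})$, and $\zeta^{i}_{t} = U^{T}\xi^{i}_{t}$. Because $U$ is orthogonal, the update rules in Equations~\ref{loc:local}--\ref{loc:center} decouple coordinate-by-coordinate: for each $k \in \{1,\ldots,n\}$, the scalar sequences $(y^{i}_{t,k})_{i=1}^{p}$ and $\tilde{y}_{t,k}$ satisfy exactly the one-dimensional recursion of Lemma~\ref{lem:lemm1} with eigenvalue $h_{k}$ in place of $h$ and noise $\zeta^{i}_{t,k}$ in place of $\xi^{i}_{t}$. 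The $\zeta^{i}_{t}$ are i.i.d.\ with mean zero and covariance $U^{T}\Sigma U$; in particular, different coordinates share the same time-indexed randomness but the linear dynamics in each coordinate are independent.

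Next, I would verify that the hypotheses transfer. Since each $h_{k} \le h$, the monotonicity $2-\eta h_{k} \ge 2-\eta h$ together with the assumed inequalities ensures $(2-\eta h_{k})(2-\beta) > 2\beta/p$ and $(2-\eta h_{k}) + (2-\beta) > \beta/p$, so Condition~\ref{eq:condition} of the Supplement (the 1D stability condition) holds for every $k$. Consequently, for each $k$, the coordinate-wise roots $\gamma_{k}$ and $\phi_{k}$ lie strictly inside $(-1,1)$ and the computation at the end of the proof of Lemma~\ref{lem:lemm2} applies: the only non-vanishing contribution to $\tfrac{1}{\sqrt{t}}\sum_{l=0}^{t-1}\tilde{y}_{l,k}$ is
\begin{equation*}
\frac{1}{\sqrt{t}}\,\alpha\eta\sum_{l=1}^{t-1}\frac{1}{(1-\gamma_{k})(1-\phi_{k})}\,\zeta_{l-1,k} = \frac{1}{\sqrt{t}}\,\frac{1}{h_{k}p}\sum_{l=1}^{t-1}\zeta_{l-1,k},
\end{equation*}
where I used $(1-\gamma_{k})(1-\phi_{k}) = \eta h_{k}p\alpha$ and wrote $\zeta_{l} = \sum_{i=1}^{p}\zeta_{l}^{i}$. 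Stacking the coordinates, the non-vanishing part of $\tfrac{1}{\sqrt{t}}\sum_{l=0}^{t-1}\tilde{y}_{l}$ equals $\Lambda^{-1}\,\tfrac{1}{p\sqrt{t}}\sum_{l=1}^{t-1}\zeta_{l-1}$.

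The remaining step is a multivariate CLT. The vectors $\zeta_{l-1}$ are i.i.d.\ with zero mean and covariance $p\,U^{T}\Sigma U$, so by the vector-valued Lindeberg CLT, $\tfrac{1}{\sqrt{t}}\sum_{l=1}^{t-1}\zeta_{l-1} \rightharpoonup \mathcal{N}(0,\,p\,U^{T}\Sigma U)$. Applying the continuous linear map $\Lambda^{-1}/p$, the double-averaging sequence in transformed coordinates satisfies $\sqrt{tp}\,(U^{T}z_{t} - U^{T}x^{\ast}) \rightharpoonup \mathcal{N}(0,\,\Lambda^{-1}U^{T}\Sigma U\Lambda^{-1})$; multiplying by $U$ and using $A^{-1} = U\Lambda^{-1}U^{T}$ yields the claimed weak limit with covariance $V = A^{-1}\Sigma A^{-1} = A^{-1}\Sigma (A^{-1})^{T}$.

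The main obstacle I expect is purely bookkeeping rather than conceptual: one must justify that the $O(1)$ deterministic terms and the telescoped geometric-series remainders from the integration-by-parts identity in Lemma~\ref{lem:lemm1} indeed vanish after division by $\sqrt{t}$ \emph{uniformly across coordinates}, which requires that $\max_{k}\max(|\gamma_{k}|,|\phi_{k}|)<1$ (guaranteed by the assumed conditions and the monotonicity argument above). Once that uniform control is in place, Slutsky's lemma combines the vanishing remainder with the CLT limit, completing the proof.
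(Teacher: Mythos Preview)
Your proposal is correct. It is in fact precisely the diagonalization route that the paper itself mentions at the start of its proof of Lemma~\ref{lem:lemm3} (``one can use the proof technique of Lemma~\ref{lem:lemm2} \ldots by diagonalizing the matrix $A$'') and then deliberately sets aside in favor of a different argument.

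The paper's actual proof proceeds instead by spatial averaging: it introduces $y_t=\tfrac{1}{p}\sum_i x_t^i$, reduces the $(p+1)$-block system to the $2n$-dimensional linear recursion $U_{t+1}=MU_t+\Xi_t$ with $U_t=(y_t,\tilde{x}_t)$ and $M=I-\eta L$, shows via a separate Lemma~\ref{lem:additional} that all eigenvalues of $M$ lie in $(-1,1)$, sums the geometric series to get $(I-M)^{-1}=\eta^{-1}L^{-1}$, computes the block inverse $L^{-1}$ explicitly, and reads off the limiting covariance $V=A^{-1}\Sigma(A^{-1})^T$ from the dominant term $\tfrac{1}{\sqrt t}(\eta L)^{-1}\sum_k\Xi_{k-1}$.

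What each approach buys: your diagonalization is a direct reduction to the already-established one-dimensional Lemma~\ref{lem:lemm2}, so nothing new is derived, at the cost of carrying the rotated noise covariance $U^T\Sigma U$ through and checking uniform geometric decay across eigen-coordinates (which you correctly flag). The paper's spatial-averaging argument never diagonalizes $A$ and works in the original coordinates; more importantly, the same $(y_t,\tilde{x}_t)$ reduction is the engine of the later strongly-convex Theorem~\ref{thm:strongly}, so the paper's choice is partly to set up that machinery. One small gap you should close: your monotonicity step $(2-\eta h_k)(2-\beta)\ge(2-\eta h)(2-\beta)$ tacitly uses $2-\beta>0$. This does follow from the hypotheses (the product $(2-\eta h)(2-\beta)>0$ and the sum $(2-\eta h)+(2-\beta)>0$ force both factors to be positive), but it merits one explicit sentence.
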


\begin{proof}
Since $A$ is symmetric, one can use the proof technique of Lemma~\ref{lem:lemm2} to prove Lemma~\ref{lem:lemm3} by diagonalizing the matrix $A$. This diagonalization essentially generalizes Lemma~\ref{lem:lemm1} to the multidimensional case. We will not go into the details of this proof as we will provide a simpler way to look at the system. As in the proof of Lemma~\ref{lem:lemm1} and Lemma~\ref{lem:lemm2}, for the ease of notation we redefine $\cbar{x}_{t}$ and $x^i_t$ as follows:
\[\cbar{x}_{t} \triangleq \cbar{x}_{t} - x^\ast \text{\:\:\:and\:\:\:} x^i_t \triangleq x^i_t - x^\ast.
\]  
Let the spatial average of the local parameters at time $t$ be denoted as $y_t$ where
$y_t = \frac{1}{p} \sum_{i=1}^p x_t^i$, and let the average noise
be denoted as $\xi_t$, where $\xi_t =  \frac{1}{p} \sum_{i=1}^p \xi_t^i$. Equations~\ref{loc:local} and \ref{loc:center} can then be reduced to the following
\begin{eqnarray}
	y_{t+1}  &=& y_{t} - \eta ( Ay_t - \xi_t) 
+ \alpha ( \cbar{x}_t - y_t ), \label{eq:reduced1}\\
	\cbar{x}_{t+1} &=& \cbar{x}_t + \beta ( y_t - \cbar{x}_t).
\label{eq:reduced2}
\end{eqnarray}

We focus on the case where the learning rate $\eta$ and the moving rate $\alpha$ are kept constant over time\footnote{As a side note, notice that the center parameter $\cbar{x}_t$ is
tracking the spatial average $y_t$ of the local parameters with a
non-symmetric spring in Equation~\ref{eq:reduced1} and ~\ref{eq:reduced2}. To be more precise note that the update on $y_{t+1}$ contains $ ( \cbar{x}_t - y_t )$ scaled by $\alpha$, whereas the update on $\cbar{x}_{t+1}$ contains $-( \cbar{x}_t - y_t )$ scaled by $\beta$. Since $\alpha = \beta / p$ the impact of the center $\tilde{x}_{t+1}$ on the spatial local average $y_{t+1}$ becomes more negligible as $p$ grows.}. Recall $\beta = p \alpha$ and $\alpha = \eta \rho$.

Let's introduce the block notation $U_t = (y_t,\cbar{x}_t)$, $\Xi_t = (\eta \xi_t,0)$, $M = I - \eta L$ 
and 
\begin{align*}
L = 
\left( 
\begin{array}{cc}
 A + \frac{\alpha}{\eta} I  &  - \frac{\alpha}{\eta}I \\
-  \frac{\beta}{\eta} I &  \frac{\beta}{\eta} I  \end{array}
 \right).
\end{align*}
From Equations~\ref{eq:reduced1} and~\ref{eq:reduced2} it follows that $U_{t+1} = M U_t + \Xi_t$. Note that this linear system has a degenerate noise $\Xi_t$ which prevents us from directly applying results of~\cite{polyak1992acceleration}. Expanding this recursive relation and summing by parts, we have
\begin{eqnarray*}
\sum_{k=0}^t U_k &=& M^0 U_0 + \\
			         &&M^1 U_0 + M^0 \Xi_0+ \\
			         &&M^2 U_0 + M^1 \Xi_0+ M^0 \Xi_1 + \\
			         &&...  \\
			         &&M^t U_0 + M^{t-1} \Xi_0+ \cdots + M^0 \Xi_{t-1} .
\end{eqnarray*}

By Lemma~\ref{lem:additional}, $\|M\|_2 < 1$ and thus
\begin{align*}
	M^0 + M^1 + \cdots + M^t + \cdots = (I-M)^{-1} = \eta^{-1} L^{-1}.
\end{align*}

Since $A$ is invertible, we get
\begin{align*}
	L^{-1} = 
\left( 
\begin{array}{cc}
 A^{-1}  &  \frac{\alpha}{\beta} A^{-1} \\
 A^{-1}  &  \frac{\eta}{\beta}  + \frac{\alpha}{\beta} A^{-1}   \end{array}
 \right),
\end{align*}
thus
\begin{align*}
	\frac{1}{\sqrt{t}} \sum_{k=0}^t U_k = 
                \frac{1}{\sqrt{t}} U_0 + 
		\frac{1}{\sqrt{t}} \eta L^{-1} \sum_{k=1}^{t} \Xi_{k-1} - 
		\frac{1}{\sqrt{t}} \sum_{k=1}^{t} M^{k+1} \Xi_{k-1}.
\end{align*}

Note that the only non-vanishing term (in weak convergence) of $\frac{1}{\sqrt{t}} \sum_{k=0}^t U_k$ is
$\frac{1}{\sqrt{t}} (\eta L)^{-1} \sum_{k=1}^{t} \Xi_{k-1} $ thus
we have 
\begin{align}
\frac{1}{\sqrt{t}}	(\eta L)^{-1} \sum_{k=1}^{t} \Xi_{k-1} \rightharpoonup \mathcal{N}
\bigg( \left(  \begin{array}{c} 0  \\ 0  \end{array}  \right)  , \left(  \begin{array}{cc} V & V \\ V & V  \end{array}  \right) \bigg),
\end{align}
where $V=A^{-1} \Sigma (A^{-1})^T$.
\end{proof}

\begin{lemma}
If the following conditions hold:
\begin{eqnarray*}
(2-\eta h)(2-p\alpha) &>& 2\alpha\\
(2-\eta h) + (2-p\alpha) &>& \alpha\\
\eta &>& 0\\
\alpha &>& 0
\end{eqnarray*}
then $\|M\|_2 < 1$.
\label{lem:additional}
\end{lemma}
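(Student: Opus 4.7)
\textbf{Proof plan for Lemma~\ref{lem:additional}.}

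My plan is first to symmetrize $L$ by a block-diagonal similarity that exploits the relation $\beta = p\alpha$, and then reduce everything to a finite family of $2\times 2$ symmetric problems by simultaneously diagonalizing $A$. Concretely, conjugate by $D = \mathrm{diag}(\sqrt{p}\, I_n,\, I_n)$ and observe that
\[
\tilde{L} \;:=\; D L D^{-1} \;=\; \begin{pmatrix} A + \tfrac{\alpha}{\eta} I & -\tfrac{\alpha\sqrt{p}}{\eta} I \\[2pt] -\tfrac{\alpha\sqrt{p}}{\eta} I & \tfrac{\beta}{\eta} I \end{pmatrix}
\]
is symmetric, because $\beta = p\alpha$ makes the two off-diagonal blocks coincide after the rescaling. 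Hence $\tilde{M} := D M D^{-1} = I - \eta\tilde{L}$ is symmetric and has the same spectrum as $M$. Since $A$ is symmetric positive-definite with eigenvalues $0 < h_1 \le \dots \le h_n = h$, an orthogonal conjugation $\mathrm{diag}(U^\top,U^\top)$ (where $A = U\Lambda U^\top$) followed by a coordinate reordering block-diagonalizes $\tilde{M}$ into $n$ independent $2\times 2$ symmetric blocks
\[
\tilde{M}_i \;=\; \begin{pmatrix} 1-\alpha-\eta h_i & -\alpha\sqrt{p} \\ -\alpha\sqrt{p} & 1-\beta \end{pmatrix}, \qquad i=1,\dots,n.
\]

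The second step is to check the scalar stability of each $\tilde{M}_i$. Because it is symmetric and $2\times 2$, both eigenvalues are real and lie in $(-1,1)$ iff the characteristic polynomial $p_i(\lambda)=\lambda^2 - (\mathrm{tr}\,\tilde M_i)\lambda + \det\tilde M_i$ satisfies $p_i(1)>0$, $p_i(-1)>0$, and $|\mathrm{tr}\,\tilde M_i|<2$. Direct expansion gives $p_i(1) = \eta h_i\beta$ (always positive under the positivity hypotheses), $p_i(-1) = (2-\eta h_i)(2-\beta) - 2\alpha$, and $\mathrm{tr}\,\tilde M_i = 2-\alpha-\beta-\eta h_i$. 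Both $p_i(-1)$ and $\mathrm{tr}\,\tilde M_i + 2 = (2-\eta h_i)+(2-\beta)-\alpha$ are monotone in $h_i$ (increasing as $h_i$ decreases), so it suffices to verify positivity at $h_i = h$: the first and second hypotheses of the lemma state exactly these two inequalities. The upper bound $\mathrm{tr}\,\tilde M_i < 2$ is immediate from $\alpha,\beta,\eta h_i > 0$. Taking the maximum over $i$ yields $\rho(\tilde{M}) < 1$, and by similarity $\rho(M)<1$.

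The main obstacle is the last step: passing from $\rho(M)<1$ to the stated operator-norm inequality $\|M\|_2 < 1$. Since $M$ is similar but not orthogonally similar to a symmetric matrix, $\rho(M)$ and $\|M\|_2$ need not agree, so one cannot simply invoke $\|M\|_2 = \rho(M)$. The cleanest resolution is to equip $\mathbb{R}^{2n}$ with the weighted inner product $\langle x,y\rangle_D := \langle Dx, Dy\rangle$ induced by $D^\top D = \mathrm{diag}(pI,I)$; the associated induced operator norm $\|M\|_{D} = \|DMD^{-1}\|_2 = \|\tilde M\|_2 = \rho(\tilde M) < 1$, so $M$ is a strict contraction in an equivalent norm, which is all that is needed to justify the Neumann series identity $\sum_{k\ge 0} M^k = (I-M)^{-1}$ used in the proof of Lemma~\ref{lem:lemm3}. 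If one insists on the unweighted $\|\cdot\|_2$, the bound holds for $p=1$ (where $D=I$) but can actually fail for large $p$ (e.g., $\eta=\alpha=0.01$, $p=100$, $h=1$ satisfies the hypotheses yet gives $\|M\|_2 > 1$ because of the $\beta=1$ off-diagonal entry); so the statement is most naturally understood in the $D$-weighted norm or, equivalently, as $\rho(M)<1$, which is what downstream arguments require.
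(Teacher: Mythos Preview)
Your approach is correct and, in fact, more careful than the paper's own argument. The paper proceeds by writing the eigenvalue equation $M\binom{y}{z}=\lambda\binom{y}{z}$, eliminating $z$, and observing that $y$ must be an eigenvector of $A$; this yields exactly the same one-parameter family of quadratics
\[
\lambda^2-(2-a)\lambda+(1-a+c^2)=0,\qquad a=\eta\lambda_A+(p+1)\alpha,\ c^2=\eta\lambda_A p\alpha,
\]
that you obtain from your $2\times 2$ blocks $\tilde M_i$. Your symmetrization by $D=\mathrm{diag}(\sqrt{p}\,I,I)$ is simply a cleaner packaging of the same reduction: it explains upfront why all eigenvalues are real and makes the decoupling into scalar problems manifest, whereas the paper discovers this a posteriori from the algebra. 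The two routes lead to identical stability conditions at $\lambda_A=h$.

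Where you go further is in noticing that the paper's proof, like yours, only establishes $\rho(M)<1$, not the literal claim $\|M\|_2<1$. Your counterexample ($\eta=\alpha=0.01$, $p=100$, $h=1$) is valid: the hypotheses hold, yet the resulting $2\times 2$ matrix has $\|M\|_2\approx 1.4$. The paper does not address this discrepancy; it silently treats $\|M\|_2$ as if it equalled $\rho(M)$. Your proposed fix---interpret the norm as the $D$-weighted operator norm, equivalently $\rho(M)<1$---is exactly what the downstream Neumann-series argument in Lemma~\ref{lem:lemm3} actually requires, so the correction is harmless for the paper's purposes. In short: your reduction is essentially the same as the paper's, but you have caught (and repaired) a genuine imprecision in the lemma statement that the paper's proof leaves unaddressed.
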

\begin{proof}
The eigenvalue $\lambda$ of $M$ and the (non-zero) eigenvector $(y,z)$ of $M$ satisfy
\begin{equation}
	M  \left(  \begin{array}{c} y  \\ z  \end{array}  \right) = 
\lambda   \left(  \begin{array}{c} y  \\ z  \end{array}  \right)  .
\label{eq:oneM}
\end{equation}
Recall that
\begin{align}
M =  I - \eta L  = 
\left( 
\begin{array}{cc}
 I - \eta A  -\alpha I  &  \alpha I \\
  \beta  I &  I-\beta I   \end{array}
 \right).
\label{eq:twoM}
\end{align}
From the Equations~\ref{eq:oneM} and~\ref{eq:twoM} we obtain
\begin{eqnarray}
\left\{
	\begin{array}{ll}
		y - \eta A y - \alpha y + \alpha z = \lambda y \\
		\beta y + (1-\beta) z = \lambda z
	\end{array} \right..
\label{eq:bundle}
\end{eqnarray}

Since $(y,z)$ is assumed to be non-zero, we can write $z = \beta y / (\lambda + \beta -1)$.
Then the Equation~\ref{eq:bundle} can be reduced to
\begin{eqnarray}
	\eta A y = (1-\alpha -\lambda) y + \frac{\alpha \beta}{\lambda + \beta - 1} y .
\label{eq:mapping}
\end{eqnarray}
Thus $y$ is the eigenvector of $A$. Let $\lambda_A$ be the eigenvalue of matrix $A$ such that $ A y = \lambda_A y$. Thus based on Equation~\ref{eq:mapping} it follows that
\begin{equation}
\eta \lambda_A = (1-\alpha -\lambda) +  \frac{\alpha \beta}{\lambda + \beta - 1}.
\label{eq:intermediate}
\end{equation}
Equation~\ref{eq:intermediate} is equivalent to
\begin{eqnarray}
	\lambda^2 - (2-a)\lambda + (1-a+c^2) = 0,
\end{eqnarray}
where $a = \eta  \lambda_A + (p+1)\alpha$, $c^2 = \eta  \lambda_A p \alpha $. It follows from the 
condition in Equation~\ref{eq:condition} that $-1 < \lambda < 1$ iff $\eta > 0$, $\beta > 0$, 
$(2-\eta  \lambda_A)(2-\beta) >2\beta/p$ and $(2-\eta  \lambda_A) +(2-\beta) > \beta/p $.
Let $h$ denote the maximum eigenvalue of $A$ and note that $2-\eta  \lambda_A   \ge 2-\eta h$. 
This implies that the condition of our lemma is sufficient.
\end{proof}

As in Lemma~\ref{lem:lemm2}, the asymptotic covariance in the Lemma~\ref{lem:lemm3} is optimal, i.e. meets the Fisher information lower-bound. The fact that this asymptotic covariance matrix $V$ does not contain any term involving $\rho$ is quite remarkable, since 
the penalty term $\rho$ does have an impact on the condition number of the Hessian in Equation \ref{eq:penalty}.

\subsection{Strongly convex case}

%The following result relates the condition number of the Hessian to the rate of convergence for the center variable.  a worst-case upper bound 
We now extend the above proof ideas to analyze the strongly convex case, in which the noisy gradient $g_t^i(x) = \nabla{F}(x) - \xi^i_t$ has the regularity that there exists some $0 < \mu \le L$, 
for which $ \mu \norm{L2}{x-y}^2 \le \scalprod{L2}{\nabla{F}(x)- \nabla{F}(y)}{x-y} \le L \norm{L2}{x-y}^2$ holds uniformly for any $x\in \mathbb{R}^d,y \in \mathbb{R}^d$. The noise $\{ \xi^i_t
\}$'s is assumed to be i.i.d. with zero mean and bounded variance $\mathbb{E}[\norm{L2}{\xi^i_t}^2] \le \sigma^2$. 

\begin{theorem}
Let $a_t = \mathbb{E} \norm{L2}{\frac{1}{p} \sum_{i=1}^p x^i_t -x^\ast}^2$, 
$b_t= \frac{1}{p} \sum_{i=1}^p \mathbb{E} \norm{L2}{ x^i_t -x^\ast}^2$,
$c_t = \mathbb{E} \norm{L2}{\tilde{x}_t-x^\ast}^2 $, $\gamma_1=2\eta \frac{\mu L}{\mu+L}$ and $\gamma_2 = 2 \eta L (1-\frac{2 \sqrt{\mu L}}{\mu+L})$. If $0 \le \eta \le \frac{2}{\mu+L}(1-\alpha)$, $0 \le \alpha<1$ and $0 \le  \beta \le 1$ then
\begin{gather*}
 \begin{pmatrix}
  a_{t+1} \\
  b_{t+1} \\
  c_{t+1} \\
 \end{pmatrix} \le
 \begin{pmatrix}
  1-\gamma_1-\gamma_2-\alpha & \gamma_2 & \alpha \\
  0 & 1-\gamma_1-\alpha & \alpha \\
  \beta & 0 & 1-\beta
 \end{pmatrix} 
 \begin{pmatrix}
  a_{t} \\
  b_{t} \\
  c_{t} \\
 \end{pmatrix}
+ 
 \begin{pmatrix}
  \eta^2 \frac{\sigma^2}{p} \\
    \eta^2  \sigma^2 \\
 0 \\
 \end{pmatrix}
.
\end{gather*}
\label{thm:strongly}
\end{theorem}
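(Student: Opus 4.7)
The plan is to establish the three component inequalities of the matrix bound separately. The third row is immediate: writing $\tilde{x}_{t+1} - x^\ast = (1-\beta)(\tilde{x}_t - x^\ast) + \beta(\bar{x}_t - x^\ast)$ with $\bar{x}_t := \frac{1}{p}\sum_i x_t^i$ and invoking convexity of $\|\cdot\|^2$ (valid because $\beta \in [0,1]$) gives $c_{t+1} \le (1-\beta)c_t + \beta a_t$ upon taking expectations.

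For the second row I rewrite worker $i$'s update as $x_{t+1}^i - x^\ast = (1-\alpha)\zeta_t^i + \alpha(\tilde{x}_t - x^\ast) + \eta\xi_t^i$ with $\zeta_t^i := (x_t^i - x^\ast) - \tfrac{\eta}{1-\alpha}\nabla F(x_t^i)$. The noise is independent and zero-mean, so squaring and taking expectation adds $\eta^2\sigma^2$, and convexity with weights $(1-\alpha, \alpha)$ bounds the remainder by $(1-\alpha)\|\zeta_t^i\|^2 + \alpha\|\tilde{x}_t - x^\ast\|^2$. The classical Nesterov co-coercivity estimate for a $\mu$-strongly-convex, $L$-smooth $F$, valid under $\tfrac{\eta}{1-\alpha} \le \tfrac{2}{\mu+L}$, then gives $\|\zeta_t^i\|^2 \le \bigl(1 - \gamma_1/(1-\alpha)\bigr)\|x_t^i - x^\ast\|^2$, and averaging over $i$ produces the second row.

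The first row is the main step. Averaging the worker updates yields $\bar{x}_{t+1} - x^\ast = (1-\alpha)(\bar{x}_t - x^\ast) + \alpha(\tilde{x}_t - x^\ast) - \eta\bar{g}_t + \eta\bar{\xi}_t$ with $\bar{g}_t = \frac{1}{p}\sum_i \nabla F(x_t^i)$. Three observations drive the bound. First, the averaged noise $\bar{\xi}_t$ has conditional variance at most $\sigma^2/p$ by independence across workers, which is precisely what produces the sharper noise term $\eta^2\sigma^2/p$. Second, I decompose $\bar{g}_t = \nabla F(\bar{x}_t) + \epsilon_t$ with $\epsilon_t := \frac{1}{p}\sum_i[\nabla F(x_t^i) - \nabla F(\bar{x}_t)]$, so that $L$-smoothness together with Jensen's inequality give $\mathbb{E}\|\epsilon_t\|^2 \le L^2(b_t - a_t)$. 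Third, the ``clean'' part (with $\nabla F(\bar{x}_t)$ in place of $\bar{g}_t$) is handled exactly as in the second-row argument but centered at $\bar{x}_t$: the same $(1-\alpha, \alpha)$ convex split combined with the Nesterov contraction applied to $W_t := \bar{x}_t - x^\ast - \tfrac{\eta}{1-\alpha}\nabla F(\bar{x}_t)$ yields the $(1-\alpha-\gamma_1)a_t + \alpha c_t$ contribution. The residual terms $-\tfrac{2\eta}{1-\alpha}\langle W_t, \epsilon_t\rangle + \tfrac{\eta^2}{(1-\alpha)^2}\|\epsilon_t\|^2$, rescaled by the outer $(1-\alpha)$, must then be absorbed into the $\gamma_2(b_t - a_t)$ correction through a carefully tuned Young-type inequality.

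The main obstacle is producing exactly the constant $\gamma_2 = 2\eta L(\sqrt{L}-\sqrt{\mu})^2/(\mu+L)$ rather than a looser $\mathcal{O}\bigl(\eta^2 L^2/(1-\alpha)\bigr)$ bound from a generic Young's inequality. This demands a splitting parameter that simultaneously exploits the strong-convexity-driven bound $\|W_t\|^2 \le (1-\gamma_1/(1-\alpha))a_t$ and the smoothness-driven bound $\|\epsilon_t\|^2 \le L^2(b_t - a_t)$, so that the interaction term contributes exactly the factor $(\sqrt{L}-\sqrt{\mu})^2/(\mu+L)$ rather than a power of $\eta L$. Two limits serve as sanity checks: at $\mu = L$, $F$ is forced to be a quadratic with Hessian $LI$, making $\epsilon_t \equiv 0$ and $\gamma_2 = 0$; as $\mu \to 0$ one recovers $\gamma_2 \to 2\eta L$, matching the standard smoothness-based dispersion bound used in distributed SGD analyses.
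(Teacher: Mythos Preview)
Your treatment of the third row (convexity of the squared norm) and the second row (the $(1-\alpha,\alpha)$ convex split combined with the Nesterov co-coercivity contraction at each $x_t^i$) is correct and in fact a slightly cleaner repackaging of what the paper does by direct expansion, the cosine rule, and the $-ax^2+bx\le b^2/(4a)$ trick. These two rows are fine.

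The first row, however, has a real gap, and it is exactly where you yourself flag ``the main obstacle.'' Your route writes $\bar g_t=\nabla F(\bar x_t)+\epsilon_t$, applies co-coercivity \emph{once} at $\bar x_t$ to contract the clean part, and then tries to absorb $-2\eta\langle W_t,\epsilon_t\rangle+\tfrac{\eta^2}{1-\alpha}\|\epsilon_t\|^2$ into $\gamma_2(b_t-a_t)$ via Young. But any Young split $|2\eta\langle W_t,\epsilon_t\rangle|\le \theta\,\|W_t\|^2+\tfrac{\eta^2}{\theta}\|\epsilon_t\|^2$ produces a strictly positive multiple of $\|W_t\|^2$, hence of $a_t$, which cannot be cancelled: matching the coefficient $\gamma_2$ in front of $b_t$ fixes $\theta$, and the residual $+\theta\,\mathbb E\|W_t\|^2$ then forces the $a_t$-coefficient to exceed $1-\gamma_1-\gamma_2-\alpha$. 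The specific constant $\gamma_2=2\eta L\bigl(1-\tfrac{2\sqrt{\mu L}}{\mu+L}\bigr)$ is therefore out of reach along this line; at best you get a bound with a larger first-row $a_t$ coefficient (or a smaller $\eta$-range), which is not the stated theorem.

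The paper obtains the exact $\gamma_2$ by a different mechanism that avoids the $\nabla F(\bar x_t)$ decomposition altogether. It uses the polarization identity
\[
\Bigl\langle \tfrac{1}{p}\sum_i \nabla f_t^i,\ y_t-x^\ast\Bigr\rangle
=\tfrac{1}{p}\sum_i\langle\nabla f_t^i,\ x_t^i-x^\ast\rangle
-\tfrac{1}{p^2}\sum_{i>j}\langle\nabla f_t^i-\nabla f_t^j,\ x_t^i-x_t^j\rangle,
\]
applies co-coercivity \emph{at every} $x_t^i$ (this is where $p$ copies of the $\tfrac{\mu L}{\mu+L}$ and $\tfrac{1}{\mu+L}$ terms enter), invokes the bias--variance identity $\tfrac{1}{p}\sum_i\|x_t^i-x^\ast\|^2=\|y_t-x^\ast\|^2+\tfrac{1}{p^2}\sum_{i>j}\|x_t^i-x_t^j\|^2$ and its gradient analogue, and then uses the AM--GM step
\[
\tfrac{\mu L}{\mu+L}\|x_t^i-x_t^j\|^2+\tfrac{1}{\mu+L}\|\nabla f_t^i-\nabla f_t^j\|^2
\ \ge\ \tfrac{2\sqrt{\mu L}}{\mu+L}\,\langle\nabla f_t^i-\nabla f_t^j,\ x_t^i-x_t^j\rangle
\]
to cancel exactly a $\tfrac{2\sqrt{\mu L}}{\mu+L}$ fraction of the pairwise correction. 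The remaining $\bigl(1-\tfrac{2\sqrt{\mu L}}{\mu+L}\bigr)$ fraction is then bounded via $L$-smoothness by $L\|x_t^i-x_t^j\|^2$, and the bias--variance identity converts $\tfrac{1}{p^2}\sum_{i>j}\|x_t^i-x_t^j\|^2$ into $b_t-a_t$, yielding precisely $\gamma_2(b_t-a_t)$ with no spurious $a_t$ term. That is the missing idea in your first-row argument.
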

% and $\gamma_1+\gamma_2+\alpha < 1$

\begin{proof}
The idea of the proof is based on the point of view in Lemma~\ref{lem:lemm3}, i.e. how close 
the center variable $\tilde{x}_t$ is to the spatial average of the local variables $y_t = \frac{1}{p} \sum_{i=1}^p x^i_t$. To further simplify the notation, let the noisy gradient be $\nabla f^i_{t,\xi}=g_t^i(x_t^i) = \nabla{F}(x_t^i) - \xi^i_t$, and $\nabla f^i_{t} = \nabla{F}(x_t^i)$ be its deterministic part.
Then \textit{EASGD} updates can be rewritten as follows,
\begin{eqnarray}
	x_{t+1}^i &=& x_t^i - \eta \nabla f^i_{t,\xi} - \alpha(x^i_t - \tilde{x}_t), \label{eq:strong_updates1} \\
	\cbar{x}_{t+1} &=& \cbar{x}_t + \beta ( y_t - \cbar{x}_t). \label{eq:strong_updates2}
\end{eqnarray}
We have thus the update for the spatial average,
\begin{eqnarray}
	y_{t+1}  &=& y_{t} - \eta  \frac{1}{p} \sum_{i=1}^p \nabla f^i_{t,\xi} - \alpha (y_t - \tilde{x}_t). 
\label{eq:strong_updates3}
\end{eqnarray}
The idea of the proof is to bound the distance $\norm{L2}{\tilde{x}_t-x^\ast}^2$ through $\norm{L2}{y_t-x^\ast}^2$  and $\frac{1}{p} \sum_i^p \norm{L2}{x^i_t-x^\ast}^2$. W start from the following estimate for the strongly convex function~\cite{nesterov2004introductory},
\begin{eqnarray*}
	\scalprod{L2}{\nabla{F}(x)-\nabla{F}(y)}{x-y} \ge \frac{\mu L}{\mu+L} \norm{L2}{x-y}^2+
\frac{1}{\mu+L}  \norm{L2}{\nabla{F}(x)-\nabla{F}(y)}^2.
\end{eqnarray*}

Since $\nabla{f( x^\ast)}=0$, we have
\begin{eqnarray}
	 \scalprod{L2}{ \nabla f^i_{t}}{x^i_t - x^\ast} \ge  \frac{\mu L}{\mu+L} \norm{L2}{x^i_t - x^\ast}^2+ \frac{1}{\mu+L}  \norm{L2}{\nabla f^i_{t}}^2.  \label{eq:est0}
\end{eqnarray}

From Equation~\ref{eq:strong_updates1} the following relation holds,
\begin{eqnarray}
	\norm{L2}{x^i_{t+1}-x^\ast}^2 &=& \norm{L2}{x^i_{t}-x^\ast}^2 
+ \eta^2 \norm{L2}{ \nabla f^i_{t,\xi} } ^2  + \alpha^2   \norm{L2}{x^i_t - \tilde{x}_t} ^2  \nonumber \\
&-&2\eta \scalprod{L2}{ \nabla f^i_{t,\xi}}{x^i_t - x^\ast} 
-2 \alpha \scalprod{L2}{x^i_t - \tilde{x}_t}{x^i_t - x^\ast} \nonumber \\ 
&+&2 \eta \alpha \scalprod{L2}{ \nabla f^i_{t,\xi}}{ x^i_t - \tilde{x}_t} .
\label{eq:est1a}
\end{eqnarray}
By the cosine rule ($2 
\scalprod{L2}{a-b}{c-d} = \norm{L2}{a-d}^2 - \norm{L2}{a-c}^2 + \norm{L2}{c-b}^2 - \norm{L2}{d-b}^2 $), we have
\begin{eqnarray}
	2 \scalprod{L2}{x^i_t - \tilde{x}_t}{x^i_t - x^\ast} = \norm{L2}{x^i_t-x^\ast}^2 + 
		\norm{L2}{x^i_t-\tilde{x}_t}^2 - \norm{L2}{\tilde{x}_t - x^\ast}^2 .
\label{eq:est1b}
\end{eqnarray}
By the Cauchy-Schwarz inequality, we have 
\begin{eqnarray}
\scalprod{L2}{ \nabla f^i_{t}}{ x^i_t - \tilde{x}_t} \le \norm{L2}{ \nabla f^i_{t} } \norm{L2}{ x^i_t - \tilde{x}_t}.
\label{eq:est1c}
\end{eqnarray}
Combining the above estimates in Equations~\ref{eq:est0},~\ref{eq:est1a},~\ref{eq:est1b},~\ref{eq:est1c}, we obtain
\begin{eqnarray}
\norm{L2}{x^i_{t+1}-x^\ast}^2 &\le& \norm{L2}{x^i_{t}-x^\ast}^2 
+ \eta^2 \norm{L2}{ \nabla f^i_{t} -  \xi^i_t } ^2 + \alpha^2   \norm{L2}{x^i_t - \tilde{x}_t}^2 \nonumber  \\
&-&2\eta \bigg( \frac{\mu L}{\mu+L} \norm{L2}{x^i_t - x^\ast}^2+ \frac{1}{\mu+L} \norm{L2}{\nabla f^i_{t}}^2 \bigg) + 2\eta \scalprod{L2}{ \xi^i_t}{x^i_t - x^\ast} \nonumber \\
&-&\alpha \big (  \norm{L2}{x^i_t-x^\ast}^2 + 
			\norm{L2}{x^i_t-\tilde{x}_t}^2 - \norm{L2}{\tilde{x}_t - x^\ast}^2 \big) \nonumber \\ 
&+& 2 \eta \alpha  \norm{L2}{ \nabla f^i_{t} } \norm{L2}{ x^i_t - \tilde{x}_t} 
- 2\eta \alpha \scalprod{L2}{ \xi^i_t}{x^i_t-\tilde{x}_t} \label{eq:est1}.
\end{eqnarray}

Choosing $0 \le \alpha<1$, we can have this upper-bound for the terms $\alpha^2   \norm{L2}{x^i_t - \tilde{x}_t}^2  - \alpha \norm{L2}{x^i_t-\tilde{x}_t}^2 + 2 \eta \alpha  \norm{L2}{ \nabla f^i_{t} } \norm{L2}{ x^i_t - \tilde{x}_t} = -\alpha(1-\alpha)  \norm{L2}{x^i_t - \tilde{x}_t}^2 +2 \eta \alpha  \norm{L2}{ \nabla f^i_{t} } \norm{L2}{ x^i_t - \tilde{x}_t} \le \frac{\eta^2 \alpha }{1-\alpha}  \norm{L2}{ \nabla f^i_{t} }^2 $ by applying $-ax^2+bx \le \frac{b^2}{4a}$ with $x= \norm{L2}{ x^i_t - \tilde{x}_t}$. Thus we can further
bound Equation~\ref{eq:est1} with
\begin{eqnarray}
\norm{L2}{x^i_{t+1}-x^\ast}^2 &\le& (1-2\eta \frac{\mu L}{\mu + L} - \alpha) \norm{L2}{x^i_{t}-x^\ast}^2 + (\eta^2 + \frac{\eta^2 \alpha }{1-\alpha} -\frac{2 \eta}{\mu+L} ) \norm{L2}{ \nabla f^i_{t}} ^2 \nonumber \\
&-& 2 \eta^2 \scalprod{L2}{ \nabla f^i_{t} }{ \xi^i_t }
+2\eta \scalprod{L2}{ \xi^i_t }{ x^i_t - x^\ast} - 2\eta \alpha \scalprod{L2}{ \xi^i_t }{ x^i_t - \tilde{x}_t } \label{eq:est2a}  \\
&+&  \eta^2 \norm{L2}{  \xi^i_t }^2 + \alpha \norm{L2}{ \tilde{x}_t - x^\ast}^2  \label{eq:est2b}
\end{eqnarray}

As in Equation~\ref{eq:est2a} and ~\ref{eq:est2b}, the noise $\xi^i_t$ is zero mean ($\mathbb{E} \xi^i_t = 0$) and the variance of the noise $\xi^i_t$ is bounded  ($\mathbb{E} \norm{L2}{  \xi^i_t }^2  \le \sigma^2 $), if $\eta$ is \redit{chosen small enough such that}
$\eta^2 + \frac{\eta^2 \alpha }{1-\alpha} -\frac{2 \eta}{\mu+L} \le 0$, then
\begin{eqnarray}
\mathbb{E} \norm{L2}{x^i_{t+1}-x^\ast}^2  &\le& (1-2\eta \frac{\mu L}{\mu + L} - \alpha) 
\mathbb{E} \norm{L2}{x^i_{t}-x^\ast}^2 + \eta^2 \sigma^2 + \alpha \mathbb{E} \norm{L2}{  \tilde{x}_t -  x^\ast}^2 .
\label{eq:key1}
\end{eqnarray}

Now we apply similar idea to estimate $\norm{L2}{y_t-x^\ast}^2$. From Equation~\ref{eq:strong_updates3} the following relation holds,
\begin{eqnarray}
\norm{L2}{y_{t+1}-x^\ast}^2 &=& \norm{L2}{y_{t}-x^\ast}^2 
+ \eta^2 \norm{L2}{ \frac{1}{p} \sum_{i=1}^p \nabla f^i_{t,\xi}} ^2  + \alpha^2   \norm{L2}{y_t - \tilde{x}_t} ^2  \nonumber \\
&-&2\eta \scalprod{L2}{ \frac{1}{p} \sum_{i=1}^p \nabla f^i_{t,\xi} }{y_t - x^\ast} 
-2 \alpha \scalprod{L2}{y_t - \tilde{x}_t}{y_t - x^\ast} \nonumber \\ 
&+&2 \eta \alpha \scalprod{L2}{  \frac{1}{p} \sum_{i=1}^p \nabla f^i_{t,\xi} }{ y_t - \tilde{x}_t} .
\label{eq:est3}
\end{eqnarray}

By $\scalprod{L2}{\frac{1}{p} \sum_{i=1}^p a_i}{\frac{1}{p} \sum_{j=1}^p b_j}=\frac{1}{p} 
 \sum_{i=1}^p \scalprod{L2}{a_i}{b_i} -  \frac{1}{p^2}  \sum_{i>j}  \scalprod{L2}{a_i-a_j}{b_i-b_j}$,
we have
\begin{eqnarray}
\scalprod{L2}{ \frac{1}{p} \sum_{i=1}^p \nabla f^i_{t} }{y_t - x^\ast} = 
\frac{1}{p} 
 \sum_{i=1}^p \scalprod{L2}{ \nabla f^i_{t} }{x_t^i - x^\ast} -  \frac{1}{p^2}  \sum_{i>j}  \scalprod{L2}{ \nabla f^i_{t}  -  \nabla f^j_{t} }{x_t^i  - x_t^j }.
\label{eq:est3a}
\end{eqnarray}
By the cosine rule, we have 
\begin{eqnarray}
2 \scalprod{L2}{y_t - \tilde{x}_t}{y_t - x^\ast} = \norm{L2}{y_t-x^\ast}^2 + 
\norm{L2}{y_t-\tilde{x}_t}^2 - \norm{L2}{\tilde{x}_t - x^\ast}^2. 
\label{eq:est3b}
\end{eqnarray}

Denote $\xi_t = \frac{1}{p}  \sum_{i=1}^p  \xi^i_t $, we can rewrite Equation~\ref{eq:est3} as
\begin{eqnarray}
\norm{L2}{y_{t+1}-x^\ast}^2 &=& \norm{L2}{y_{t}-x^\ast}^2 
+ \eta^2 \norm{L2}{ \frac{1}{p} \sum_{i=1}^p \nabla f^i_{t} - \xi_t } ^2  + \alpha^2   \norm{L2}{y_t - \tilde{x}_t} ^2  \nonumber \\
&-&2\eta \scalprod{L2}{ \frac{1}{p} \sum_{i=1}^p \nabla f^i_{t} - \xi_t }{y_t - x^\ast} 
-2 \alpha \scalprod{L2}{y_t - \tilde{x}_t}{y_t - x^\ast} \nonumber \\ 
&+&2 \eta \alpha \scalprod{L2}{  \frac{1}{p} \sum_{i=1}^p \nabla f^i_{t} - \xi_t }{ y_t - \tilde{x}_t} .
\label{eq:est3c}
\end{eqnarray}

By combining the above Equations~\ref{eq:est3a},~\ref{eq:est3b} with ~\ref{eq:est3c}, we obtain
\begin{eqnarray}
\norm{L2}{y_{t+1}-x^\ast}^2 &=& \norm{L2}{y_{t}-x^\ast}^2 
+ \eta^2 \norm{L2}{ \frac{1}{p} \sum_{i=1}^p \nabla f^i_{t} - \xi_t } ^2  + \alpha^2   \norm{L2}{y_t - \tilde{x}_t} ^2  \nonumber \\
&-& 2\eta \bigg(\frac{1}{p} 
\sum_{i=1}^p \scalprod{L2}{ \nabla f^i_{t} }{x_t^i - x^\ast} -  \frac{1}{p^2}  \sum_{i>j}  \scalprod{L2}{ \nabla f^i_{t}  -  \nabla f^j_{t} }{x_t^i  - x_t^j } \bigg) \\
&+&  2 \eta\scalprod{L2}{  \xi_t }{y_t - x^\ast} 
- \alpha ( \norm{L2}{y_t-x^\ast}^2 + 
\norm{L2}{y_t-\tilde{x}_t}^2 - \norm{L2}{\tilde{x}_t - x^\ast}^2 ) \nonumber \\ 
&+&2 \eta \alpha \scalprod{L2}{  \frac{1}{p} \sum_{i=1}^p \nabla f^i_{t} - \xi_t }{ y_t - \tilde{x}_t} .
\label{eq:est3d}
\end{eqnarray}
Thus it follows from Equation~\ref{eq:est0} and~\ref{eq:est3d} that
\begin{eqnarray}
\norm{L2}{y_{t+1}-x^\ast}^2 &\le& \norm{L2}{y_{t}-x^\ast}^2 
+ \eta^2 \norm{L2}{ \frac{1}{p} \sum_{i=1}^p \nabla f^i_{t} - \xi_t } ^2  + \alpha^2   \norm{L2}{y_t - \tilde{x}_t} ^2  \nonumber \\
&-& 2\eta  \frac{1}{p} 
\sum_{i=1}^p \bigg( \frac{\mu L}{\mu+L} \norm{L2}{x^i_t - x^\ast}^2+ \frac{1}{\mu+L} \norm{L2}{\nabla f^i_{t}}^2 \bigg) \nonumber \\
&+& 2\eta   \frac{1}{p^2}  \sum_{i>j}  \scalprod{L2}{ \nabla f^i_{t}  -  \nabla f^j_{t} }{x_t^i  - x_t^j }  \nonumber \\
&+& 2 \eta \scalprod{L2}{  \xi_t }{y_t - x^\ast} 
- \alpha ( \norm{L2}{y_t-x^\ast}^2 + 
\norm{L2}{y_t-\tilde{x}_t}^2 - \norm{L2}{\tilde{x}_t - x^\ast}^2 ) \nonumber \\ 
&+&2 \eta \alpha \scalprod{L2}{  \frac{1}{p} \sum_{i=1}^p \nabla f^i_{t} - \xi_t }{ y_t - \tilde{x}_t} .
\label{eq:est3e}
\end{eqnarray}

Recall $y_t = \frac{1}{p} \sum_{i=1}^p x^i_t$, we have the following bias-variance relation,
\begin{eqnarray}
\frac{1}{p}  \sum_{i=1}^p \norm{L2}{x^i_t - x^\ast}^2 &=& \frac{1}{p}  \sum_{i=1}^p \norm{L2}{x^i_t - y_t}^2 +  \norm{L2}{ y_t - x^\ast}^2 = 
 \frac{1}{p^2}  \sum_{i>j} \norm{L2}{x^i_t - x^j_t}^2 +  \norm{L2}{ y_t - x^\ast}^2, 
\nonumber \\
\frac{1}{p}  \sum_{i=1}^p \norm{L2}{  \nabla f^i_{t}}^2 &=& 
 \frac{1}{p^2}  \sum_{i>j} \norm{L2}{ \nabla f^i_{t} -  \nabla f^j_{t}}^2 +  \norm{L2}{ \frac{1}{p} \sum_{i=1}^p \nabla f^i_{t} }^2.
\label{eq:est3f}
\end{eqnarray}

By the Cauchy-Schwarz inequality, we have 
\begin{eqnarray}
	\frac{\mu L}{\mu+L}  \norm{L2}{x^i_t - x^j_t}^2 
+  \frac{1}{\mu+L} \norm{L2}{ \nabla f^i_{t} -  \nabla f^j_{t}}^2 \ge
\frac{2\sqrt{\mu L}}{\mu+L}  \scalprod{L2}{ \nabla f^i_{t}  -  \nabla f^j_{t} }{x_t^i  - x_t^j } . 
\label{eq:est3g}
\end{eqnarray}

Combining the above estimates in Equations~\ref{eq:est3e},~\ref{eq:est3f},~\ref{eq:est3g}, we obtain
\begin{eqnarray}
\norm{L2}{y_{t+1}-x^\ast}^2 &\le& \norm{L2}{y_{t}-x^\ast}^2 
+ \eta^2 \norm{L2}{ \frac{1}{p} \sum_{i=1}^p \nabla f^i_{t} - \xi_t } ^2  + \alpha^2   \norm{L2}{y_t - \tilde{x}_t} ^2  \nonumber \\
&-& 2\eta \bigg( \frac{\mu L}{\mu+L} \norm{L2}{y_t - x^\ast}^2+ \frac{1}{\mu+L}   \norm{L2}{ \frac{1}{p} \sum_{i=1}^p \nabla f^i_{t} }^2 \bigg)  \nonumber \\
&+& 2 \eta  \bigg(1-\frac{2\sqrt{\mu L}}{\mu+L}\bigg)  \frac{1}{p^2}  \sum_{i>j}  \scalprod{L2}{ \nabla f^i_{t}  -  \nabla f^j_{t} }{x_t^i  - x_t^j } \nonumber \\
&+& 2 \eta \scalprod{L2}{  \xi_t }{y_t - x^\ast} 
- \alpha ( \norm{L2}{y_t-x^\ast}^2 + 
\norm{L2}{y_t-\tilde{x}_t}^2 - \norm{L2}{\tilde{x}_t - x^\ast}^2 ) \nonumber \\ 
&+&2 \eta \alpha \scalprod{L2}{  \frac{1}{p} \sum_{i=1}^p \nabla f^i_{t} - \xi_t }{ y_t - \tilde{x}_t} .
\label{eq:est4}
\end{eqnarray}

Similarly if $0 \le \alpha<1$, we can have this upper-bound for the terms $\alpha^2  \norm{L2}{y_t - \tilde{x}_t}^2  - \alpha \norm{L2}{y_t-\tilde{x}_t}^2 + 2 \eta \alpha  \norm{L2}{  \frac{1}{p} \sum_{i=1}^p \nabla f^i_{t} } \norm{L2}{y_t - \tilde{x}_t} \le \frac{\eta^2 \alpha }{1-\alpha}  \norm{L2}{  \frac{1}{p} \sum_{i=1}^p \nabla f^i_{t}}^2 $ by applying $-ax^2+bx \le \frac{b^2}{4a}$ with $x= \norm{L2}{ y_t - \tilde{x}_t}$. Thus we have the following bound for the Equation~\ref{eq:est4}
\begin{eqnarray}
\norm{L2}{y_{t+1}-x^\ast}^2 &\le& 
 (1-2\eta \frac{\mu L}{\mu + L} - \alpha) \norm{L2}{y_{t}-x^\ast}^2 
+  (\eta^2 + \frac{\eta^2 \alpha }{1-\alpha} -\frac{2 \eta}{\mu+L} )  
\norm{L2}{ \frac{1}{p} \sum_{i=1}^p \nabla f^i_{t} } ^2 \nonumber \\
&-& 2 \eta^2 \scalprod{L2}{  \frac{1}{p} \sum_{i=1}^p \nabla f^i_{t} }{ \xi_t }
+2\eta \scalprod{L2}{ \xi_t }{y_t - x^\ast} - 2\eta \alpha \scalprod{L2}{ \xi_t }{y_t - \tilde{x}_t } \nonumber \\
&+& 2 \eta  \bigg(1-\frac{2\sqrt{\mu L}}{\mu+L}\bigg)  \frac{1}{p^2}  \sum_{i>j}  \scalprod{L2}{ \nabla f^i_{t}  -  \nabla f^j_{t} }{x_t^i  - x_t^j } \nonumber \\
&+&  \eta^2 \norm{L2}{\xi_t }^2 + \alpha \norm{L2}{ \tilde{x}_t - x^\ast}^2.
\label{eq:est5}
\end{eqnarray}
Since $ \frac{2\sqrt{\mu L}}{\mu+L} \le 1 $, we need also bound the non-linear term
$\scalprod{L2}{ \nabla f^i_{t}  -  \nabla f^j_{t} }{x_t^i  - x_t^j } \le L  \norm{L2}{ x_t^i  - x_t^j}^2$.
Recall the bias-variance relation
$\frac{1}{p}  \sum_{i=1}^p \norm{L2}{x^i_t - x^\ast}^2 = 
 \frac{1}{p^2}  \sum_{i>j} \norm{L2}{x^i_t - x^j_t}^2 +  \norm{L2}{ y_t - x^\ast}^2$.
The key observation is that if $\frac{1}{p}  \sum_{i=1}^p \norm{L2}{x^i_t - x^\ast}^2$ remains bounded,
then larger variance $  \sum_{i>j} \norm{L2}{x^i_t - x^j_t}^2$ implies smaller bias $ \norm{L2}{ y_t - x^\ast}^2$. Thus this non-linear term can be compensated.

Again choose $\eta$ \redit{small enough such that}
$\eta^2 + \frac{\eta^2 \alpha }{1-\alpha} -\frac{2 \eta}{\mu+L}  \le 0$ and take expectation
in Equation~\ref{eq:est5},
\begin{eqnarray}
\mathbb{E} \norm{L2}{y_{t+1}-x^\ast}^2 &\le& 
 (1-2\eta \frac{\mu L}{\mu + L} - \alpha) \mathbb{E} \norm{L2}{y_{t}-x^\ast}^2  \nonumber \\
&+& 2 \eta  L \bigg(1-\frac{2\sqrt{\mu L}}{\mu+L}\bigg)   \bigg( \frac{1}{p} \sum_{i=1}^p \mathbb{E}  \norm{L2}{x^i_t - x^\ast}^2  -  \mathbb{E} \norm{L2}{y_{t}-x^\ast}^2 \bigg) \nonumber \\
&+&  \eta^2 \frac{\sigma^2}{p} + \alpha \mathbb{E} \norm{L2}{ \tilde{x}_t - x^\ast}^2.
\label{eq:key2}
\end{eqnarray}

As for the center variable in Equation~\ref{eq:strong_updates2}, we apply simply the convexity of the norm $\norm{L2}{\cdot}^2$ to obtain
\begin{eqnarray}
\norm{L2}{\tilde{x}_{t+1} - x^\ast}^2 \le (1-\beta) \norm{L2}{\cbar{x}_t -  x^\ast}^2  + \beta  \norm{L2}{ y_t - x^\ast}^2 .
\label{eq:key3}
\end{eqnarray}

Combing the estimates from Equations~\ref{eq:key1},~\ref{eq:key2},~\ref{eq:key3}, and denote
$a_t = \mathbb{E} \norm{L2}{y_t -x^\ast}^2 $,
$b_t= \frac{1}{p} \sum_{i=1}^p \mathbb{E} \norm{L2}{ x^i_t -x^\ast}^2$,
$c_t = \mathbb{E} \norm{L2}{\tilde{x}_t-x^\ast}^2 $, $\gamma_1=2\eta \frac{\mu L}{\mu+L}$, 
$\gamma_2 = 2 \eta L (1-\frac{2 \sqrt{\mu L}}{\mu+L})$, then
\begin{gather*}
 \begin{pmatrix}
  a_{t+1} \\
  b_{t+1} \\
  c_{t+1} \\
 \end{pmatrix} \le
 \begin{pmatrix}
  1-\gamma_1-\gamma_2-\alpha & \gamma_2 & \alpha \\
  0 & 1-\gamma_1-\alpha & \alpha \\
  \beta & 0 & 1-\beta
 \end{pmatrix} 
 \begin{pmatrix}
  a_{t} \\
  b_{t} \\
  c_{t} \\
 \end{pmatrix}
+ 
 \begin{pmatrix}
  \eta^2 \frac{\sigma^2}{p} \\
    \eta^2  \sigma^2 \\
 0 \\
 \end{pmatrix}
,
\end{gather*}
as long as $0\le \beta \le 1$, $0 \le \alpha < 1$ and $\eta^2 + \frac{\eta^2 \alpha }{1-\alpha} -\frac{2 \eta}{\mu+L}  \le 0$, i.e. $0  \le \eta \le \frac{2}{\mu+L}(1-\alpha).$
\end{proof}

\section{Additional pseudo-codes of the algorithms}

\subsection{DOWNPOUR pseudo-code} 

Algorithm~\ref{alg:asyncD} captures the pseudo-code of the implementation of the DOWNPOUR used in this paper.

\RestyleAlgo{boxruled}
\begin{algorithm}[h]
\caption{DOWNPOUR: Processing by worker $i$ and the master}
\label{alg:asyncD}
\begin{tabular}{l}
\textbf{Input:} learning rate $\eta$, communication period $\tau \in \mathbb{N}$\\
 \textbf{Initialize:} $\tilde{x}$ is initialized randomly, $x^i = \tilde{x}$, $v^i = 0$, $t^i = 0\:\:\:\:\:\:\:\:\:\:\:\:\:\:\:\:\:\:\:\:\:\:\:\:\:\:\:\:\:\:\:\:\:\:\:\:\:\:\:\:\:\:\:\:\:\:\:\:\:\:\:\:\:\:\:\:\:\:\:\:\:\:\:\:\:\:\:$\\
\hline
\textbf{Repeat}\\
\:\:\:\:\:\textbf{if} ($\tau $ divides $t^i $) \textbf{then}\\
\:\:\:\:\:\:\:\:\:\:\:$\tilde{x} \:\:\:\!\!\leftarrow \tilde{x} \:\!+ v^i $\\
\:\:\:\:\:\:\:\:\:\:\:$x^i \leftarrow \tilde{x}$\\
\:\:\:\:\:\:\:\:\:\:\:$v^i \leftarrow 0$\\
\:\:\:\:\:\textbf{end}\\
\:\:\:\:\:$ x^i \leftarrow x^i - \eta g^i_{t^i}(x^i)$\\
\:\:\:\:\:$ v^i \leftarrow v^i - \eta g^i_{t^i}(x^i)$\\
\:\:\:\:\:$t^i\:\:\:\:\!\!\!\!\leftarrow \:\!t^i \:\:\!\!+ 1$\\
\textbf{Until forever}
\end{tabular}
\end{algorithm}

\subsection{MDOWNPOUR pseudo-code} 

Algorithms~\ref{alg:asyncMDworker} and~\ref{alg:asyncMDmaster} capture the pseudo-codes of the implementation of momentum DOWNPOUR (MDOWNPOUR) used in this paper. Algorithm~\ref{alg:asyncMDworker} shows the behavior of each local worker and Algorithm~\ref{alg:asyncMDmaster} shows the behavior of the master.

\RestyleAlgo{boxruled}
\begin{algorithm}[h]
\caption{MDOWNPOUR: Processing by worker $i$}
\label{alg:asyncMDworker}
\begin{tabular}{l}
 \textbf{Initialize:} $x^i = \tilde{x}\:\:\:\:\:\:\:\:\:\:\:\:\:\:\:\:\:\:\:\:\:\:\:\:\:\:\:\:\:\:\:\:\:\:\:\:\:\:\:\:\:\:\:\:\:\:\:\:\:\:\:\:\:\:\:\:\:\:\:\:\:\:\:\:\:\:\:\:\:\:\:\:\:\:\:\:\:\:\:\:\:\:\:\:\:\:\:\:\:\:\:\:\:\:\:\:\:\:\:\:\:\:\:\:\:\:\:\:\:\:\:\:\:\:\:\:\:\:\:\:\:\:\:\:\:\:\:\:\:\:\:\:\:\:\:\:\:\:\:\:\:$\\
\hline
\textbf{Repeat}\\
\:\:\:\:\:\textsf{Receive} $\tilde{x}$ \textsf{from the master}: $x^i \leftarrow \tilde{x}$\\
\:\:\:\:\:\textsf{Compute gradient} $g^i = g^i(x^i)$\\
\:\:\:\:\:\textsf{Send} $g^i$ \textsf{to the master}\\
\textbf{Until forever}
\end{tabular}
\end{algorithm}

\RestyleAlgo{boxruled}
\begin{algorithm}[h]
\caption{MDOWNPOUR: Processing by the master}
\label{alg:asyncMDmaster}
\begin{tabular}{l}
\textbf{Input:} learning rate $\eta$, momentum term $\delta$\\
 \textbf{Initialize:} $\tilde{x}$ is initialized randomly, $v^i = 0$, $\:\:\:\:\:\:\:\:\:\:\:\:\:\:\:\:\:\:\:\:\:\:\:\:\:\:\:\:\:\:\:\:\:\:\:\:\:\:\:\:\:\:\:\:\:\:\:\:\:\:\:\:\:\:\:\:\:\:\:\:\:\:\:\:\:\:\:\:\:\:\:\:\:\:\:\:\:\:\:\:\:\:\:\:\:\:\:\:\:\:\:\:\:$\\
\hline
\textbf{Repeat}\\
\:\:\:\:\:\textsf{Receive} $g^i$\\
\:\:\:\:\:$v \leftarrow \delta v - \eta g^i$\\
\:\:\:\:\:$\tilde{x} \leftarrow \tilde{x} + \delta v$\\
\textbf{Until forever}
\end{tabular}
\end{algorithm}

\newpage
\section{Experiments - additional material}

\subsection{Data preprocessing}

For the \textit{ImageNet} experiment, we re-size each RGB image so that the smallest dimension is $256$ pixels. We also re-scale each pixel value to the interval $[0,1]$. We then extract random crops (and their horizontal flips) of size $3\times 221\times 221$ pixels and present these to the network in mini-batches of size $128$. 

For the \textit{CIFAR} experiment, we use the original RGB image of size $3 \times 32 \times 32$. As before, we re-scale each pixel value to the interval $[0,1]$. We then extract random crops (and their horizontal flips) of size $3\times 28\times 28$ pixels and present these to the network in mini-batches of size $128$. 

The training and test loss and the test error are only computed from the center patch ($3 \times 28 \times 28$) for the \textit{CIFAR} experiment and the center patch ($3 \times 221 \times 221$) for the \textit{ImageNet} experiment.

\subsection{Data prefetching (Sampling the dataset by the local workers)}
We will now explain precisely how the dataset is sampled by each local worker as uniformly and efficiently as possible. The general parallel data loading scheme on a single machine is as follows: we use $k$ CPUs, where $k=8$, to load the data in parallel. Each data loader reads from the memory-mapped (mmap) file a chunk of $c$ raw images (preprocessing was described in the previous subsection) and their labels (for \textit{CIFAR} $c = 512$ and for \textit{ImageNet} $c = 64$).  For the \textit{CIFAR}, the mmap file of each data loader contains the entire dataset whereas for \textit{ImageNet}, each mmap file of each data loader contains different $1/k$ fractions of the entire dataset. A chunk of data is always sent by one of the data loaders to the first worker who requests the data. The next worker requesting the data from the same data loader will get the next chunk. Each worker requests in total $k$ data chunks from $k$ different data loaders and then process them before asking for new data chunks. Notice that each data loader cycles through the data in the mmap file, sending consecutive chunks to the workers in order in which it receives requests from them. When the data loader reaches the end of the mmap file, it selects the address in memory uniformly at random from the interval $[0,s]$, where $s = (\textsf{number of images in the mmap file} \text{\:\:modulo\:\:} \textsf{mini-batch size})$, and uses this address to start cycling again through the data in the mmap file. After the local worker receives the $k$ data chunks from the data loaders, it shuffles them and divides it into mini-batches of size $128$.

\subsection{Learning rates}

In Table~\ref{tab:learningrate} we summarize the learning rates $\eta$ (we used constant learning rates) explored for each method shown in Figure~\ref{fig:CIFAR}. For all values of $\tau$ the same set of learning rates was explored for each method.
%\vspace{-0.1in}
\begin{table}[h]
\center
\setlength{\tabcolsep}{4pt}
\caption{Learning rates explored for each method shown in Figure~\ref{fig:CIFAR} (\textit{CIFAR} experiment).}
\begin{tabular}{c|c|}
  \cline{2-2} 
\multirow{2}{*}{} & $\eta$\\
  \hline
\multicolumn{1}{|c|}{\multirow{1}{*}{EASGD}}& $\{0.05,0.01,0.005\}$\\
  \hline
\multicolumn{1}{|c|}{\multirow{1}{*}{EAMSGD}}&  $\{0.01,0.005,0.001\}$\\
\hline
\multicolumn{1}{|c|}{\multirow{1}{*}{DOWNPOUR}}& \\
\multicolumn{1}{|c|}{\multirow{1}{*}{ADOWNPOUR}}& $\{0.005,0.001,0.0005\}$\\
\multicolumn{1}{|c|}{\multirow{1}{*}{MVADOWNPOUR}}& \\
\hline
\multicolumn{1}{|c|}{\multirow{1}{*}{MDOWNPOUR}}&  $\{0.00005,0.00001,0.000005\}$\\
\hline
\hline
\multicolumn{1}{|c|}{\multirow{1}{*}{SGD, ASGD, MVASGD}}& $\{0.05,0.01,0.005\}$\\
\hline
\multicolumn{1}{|c|}{\multirow{1}{*}{MSGD}}& $\{0.001,0.0005,0.0001\}$\\
\hline
\end{tabular} 
\label{tab:learningrate}
\end{table} 

In Table~\ref{tab:learningrate2} we summarize the learning rates $\eta$ (we used constant learning rates) explored for each method shown in Figure~\ref{fig:CIFAR2}. For all values of $p$ the same set of learning rates was explored for each method.
% \vspace{-0.1in}
\begin{table}[htp!]
\center
\setlength{\tabcolsep}{4pt}
\caption{Learning rates explored for each method shown in Figure~\ref{fig:CIFAR2} (\textit{CIFAR} experiment).}
\begin{tabular}{c|c|}
  \cline{2-2} 
\multirow{2}{*}{} & $\eta$\\
  \hline
\multicolumn{1}{|c|}{\multirow{1}{*}{EASGD}}& $\{0.05,0.01,0.005\}$\\
  \hline
\multicolumn{1}{|c|}{\multirow{1}{*}{EAMSGD}}&  $\{0.01,0.005,0.001\}$\\
\hline
\multicolumn{1}{|c|}{\multirow{1}{*}{DOWNPOUR}}& $\{0.005,0.001,0.0005\}$\\
\hline
\multicolumn{1}{|c|}{\multirow{1}{*}{MDOWNPOUR}}& $\{0.00005,0.00001,0.000005\}$\\
\hline
\hline
\multicolumn{1}{|c|}{\multirow{1}{*}{SGD, ASGD, MVASGD}}& $\{0.05,0.01,0.005\}$\\
\hline
\multicolumn{1}{|c|}{\multirow{1}{*}{MSGD}}& $\{0.001,0.0005,0.0001\}$\\
\hline
\end{tabular} 
\label{tab:learningrate2}
\end{table}

In Table~\ref{tab:learningrate3} we summarize the initial learning rates $\eta$ we use for each method shown in Figure~\ref{fig:ImageNet}. For all values of $p$ the same set of learning rates was explored for each method. We also used the rule of the thumb to decrease the initial learning rate twice, first time we divided it by $5$ and the second time by $2$, when we observed that the decrease of the online predictive (training) loss saturates. 
\vspace{-0.1in}
\begin{table}[htp!]
\center
\setlength{\tabcolsep}{4pt}
\caption{Learning rates explored for each method shown in Figure~\ref{fig:ImageNet} (\textit{ImageNet} experiment).}
\begin{tabular}{c|c|}
  \cline{2-2} 
\multirow{2}{*}{} & $\eta$\\
  \hline
\multicolumn{1}{|c|}{\multirow{1}{*}{EASGD}}& $0.1$\\
  \hline
\multicolumn{1}{|c|}{\multirow{1}{*}{EAMSGD}}&  $0.001$\\
\hline
\multicolumn{1}{|c|}{\multirow{1}{*}{DOWNPOUR}}&  for $p=4$: $0.02$\\
\multicolumn{1}{|c|}{\multirow{1}{*}{}}& for $p=8$: $0.01$\\
\hline
\hline
\multicolumn{1}{|c|}{\multirow{1}{*}{SGD, ASGD, MVASGD}}& $0.05$\\
\hline
\multicolumn{1}{|c|}{\multirow{1}{*}{MSGD}}& $0.0005$\\
\hline
\end{tabular} 
\label{tab:learningrate3}
\end{table}

\subsection{Comparison of \textit{SGD}, \textit{ASGD}, \textit{MVASGD} and \textit{MSGD}}

\begin{figure}[h]
  \center
\includegraphics[trim=0cm 0cm 0cm 0cm,clip,width = 1.56in]{./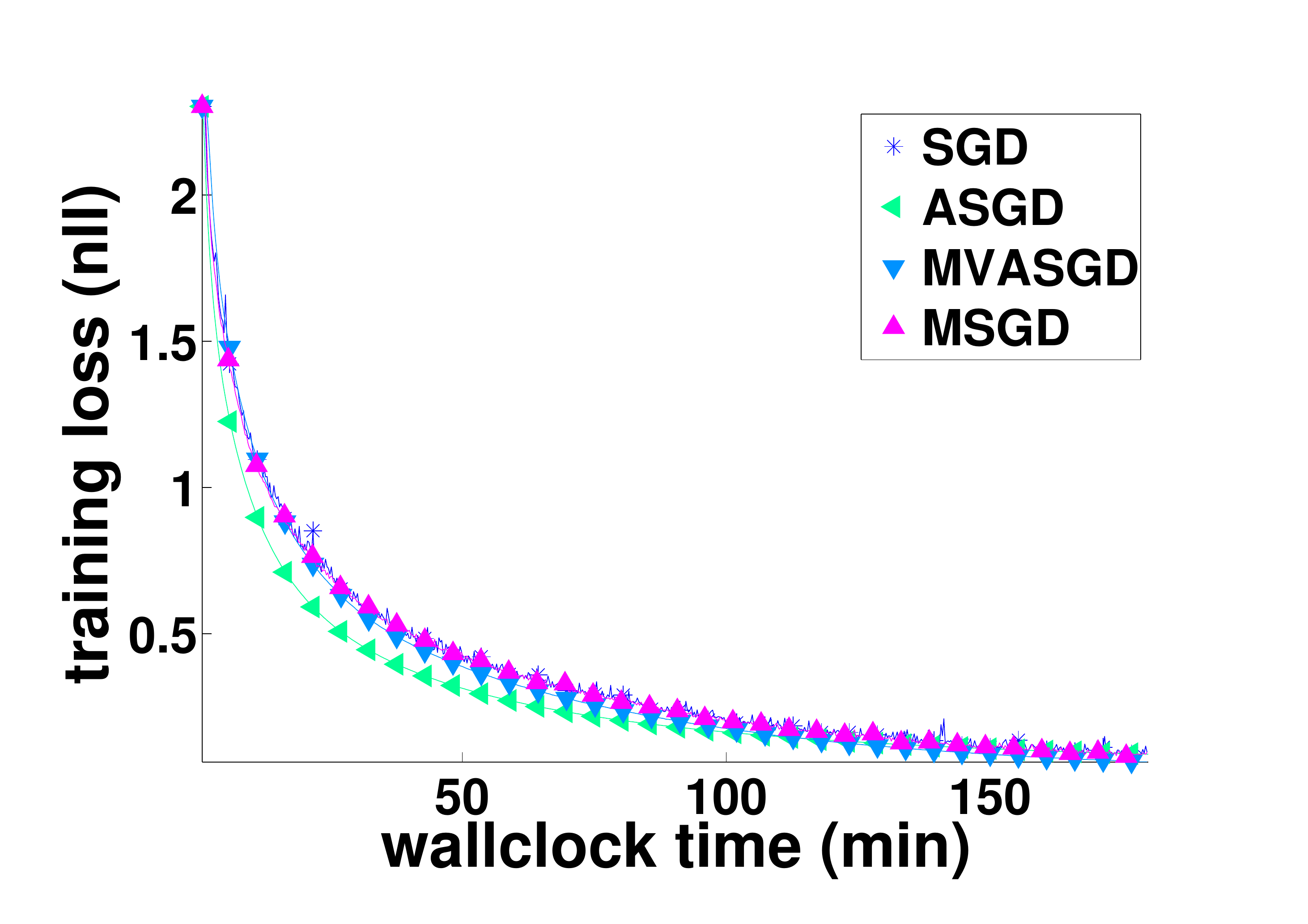}
\hspace{-0.28in}\includegraphics[trim=0cm 0cm 0cm 0cm,clip,width = 1.56in]{./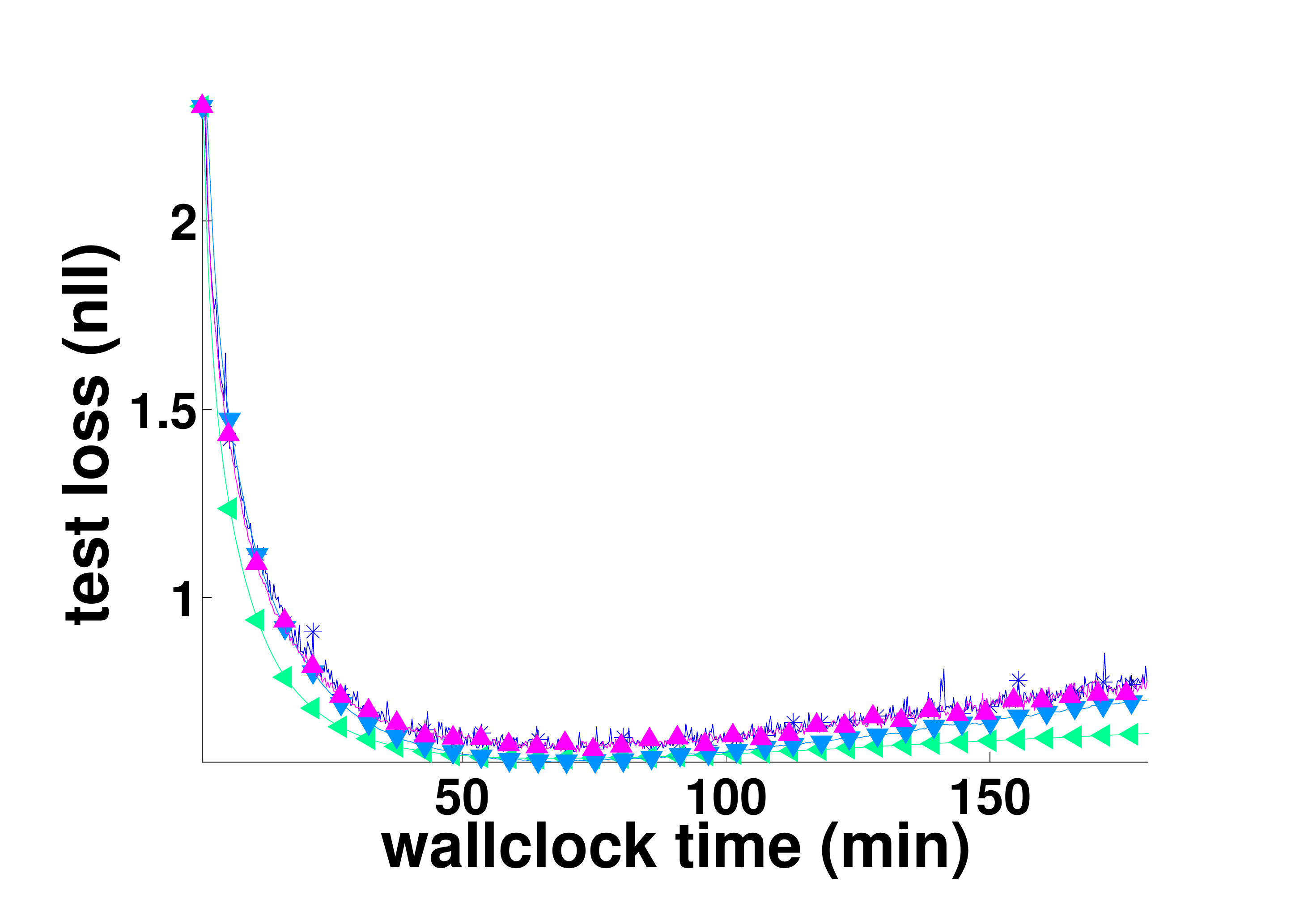} 
\hspace{-0.28in}\includegraphics[trim=0cm 0cm 0cm 0cm,clip,width = 1.56in]{./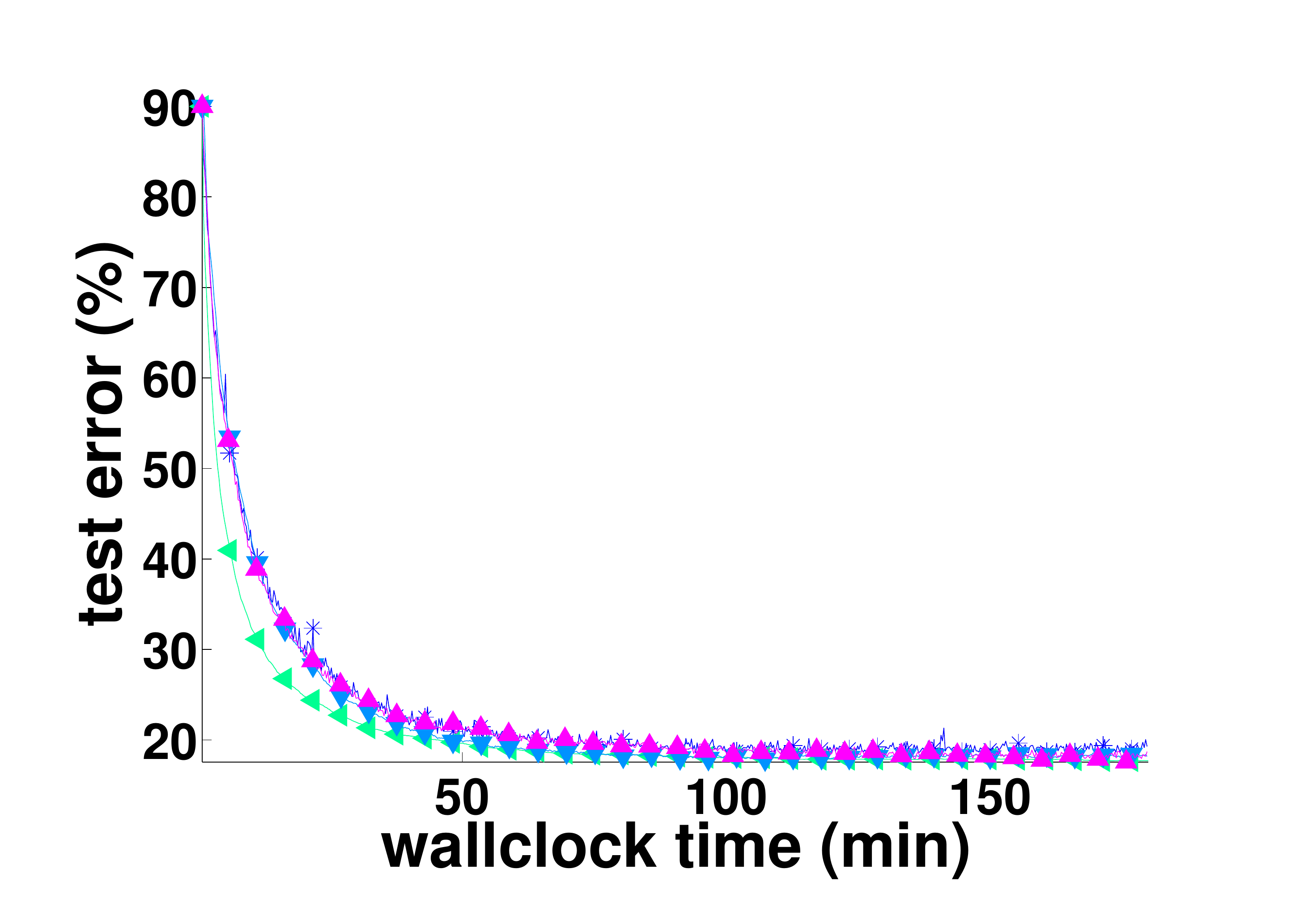}
\hspace{-0.28in}\includegraphics[trim=0cm 0cm 0cm 0cm,clip,width = 1.56in]{./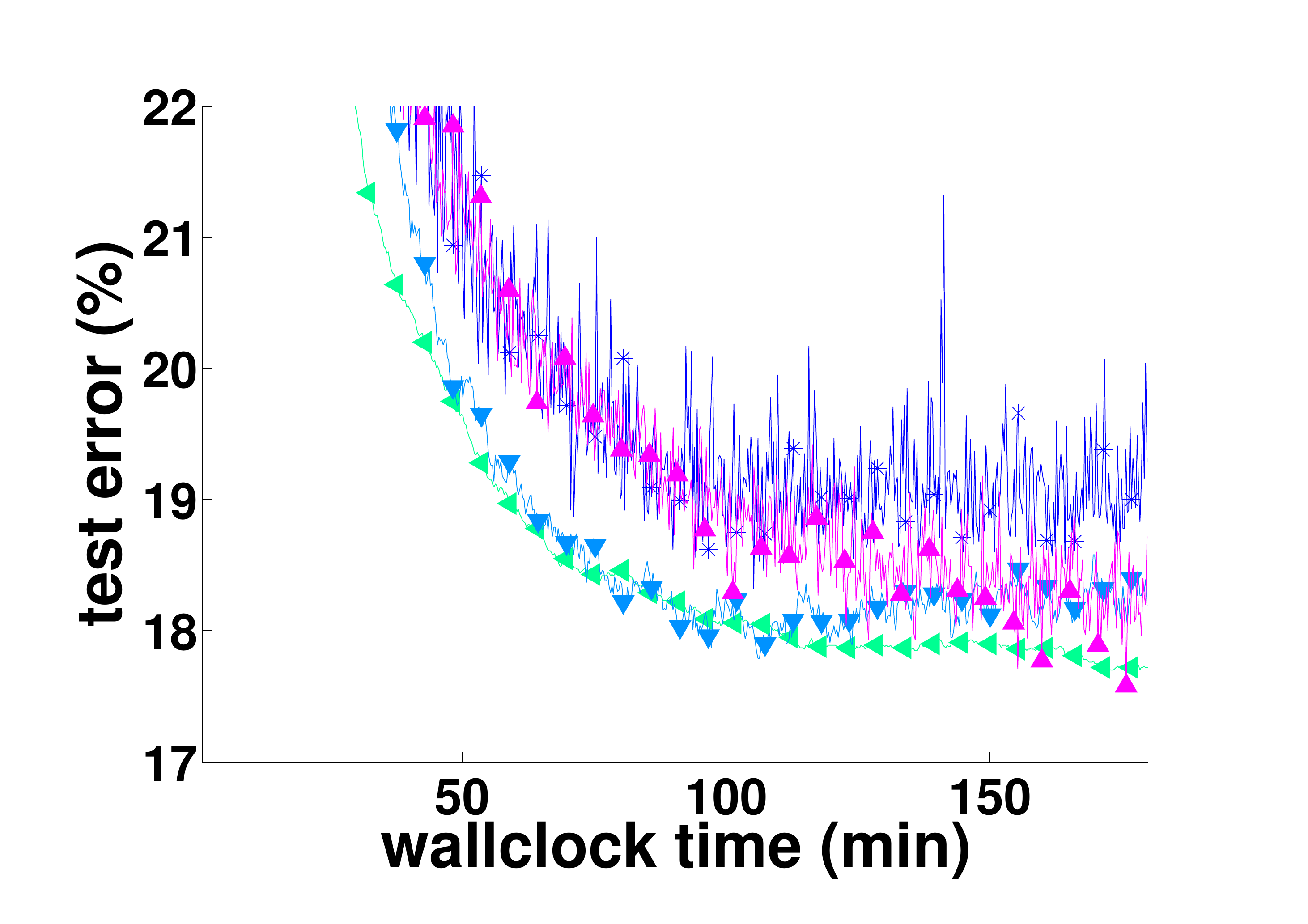}
\vspace{-0.2in}
\caption{Convergence of the training and test loss (negative log-likelihood) and the test error (original and zoomed) computed for the center variable as a function of wallclock time for \textit{SGD}, \textit{ASGD}, \textit{MVASGD} and \textit{MSGD} ($p = 1$) on the \textit{CIFAR} experiment.}
\label{fig:CIFAR_supp}
\vspace{-0.1in}
\end{figure}

\begin{figure}[htp!]
  \center
\includegraphics[trim=0cm 0cm 0cm 0cm,clip,width = 1.56in]{./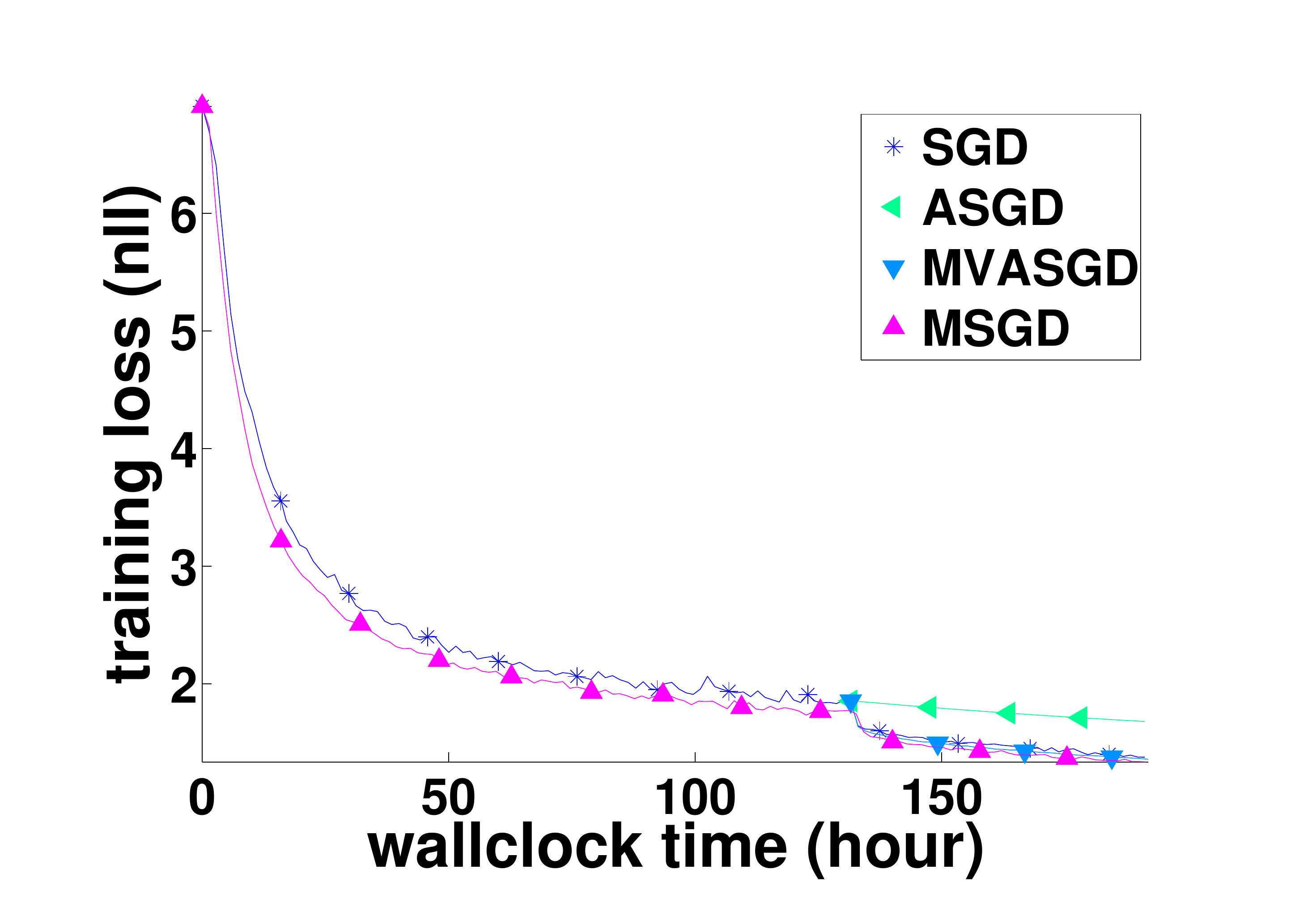}
\hspace{-0.28in}\includegraphics[trim=0cm 0cm 0cm 0cm,clip,width = 1.56in]{./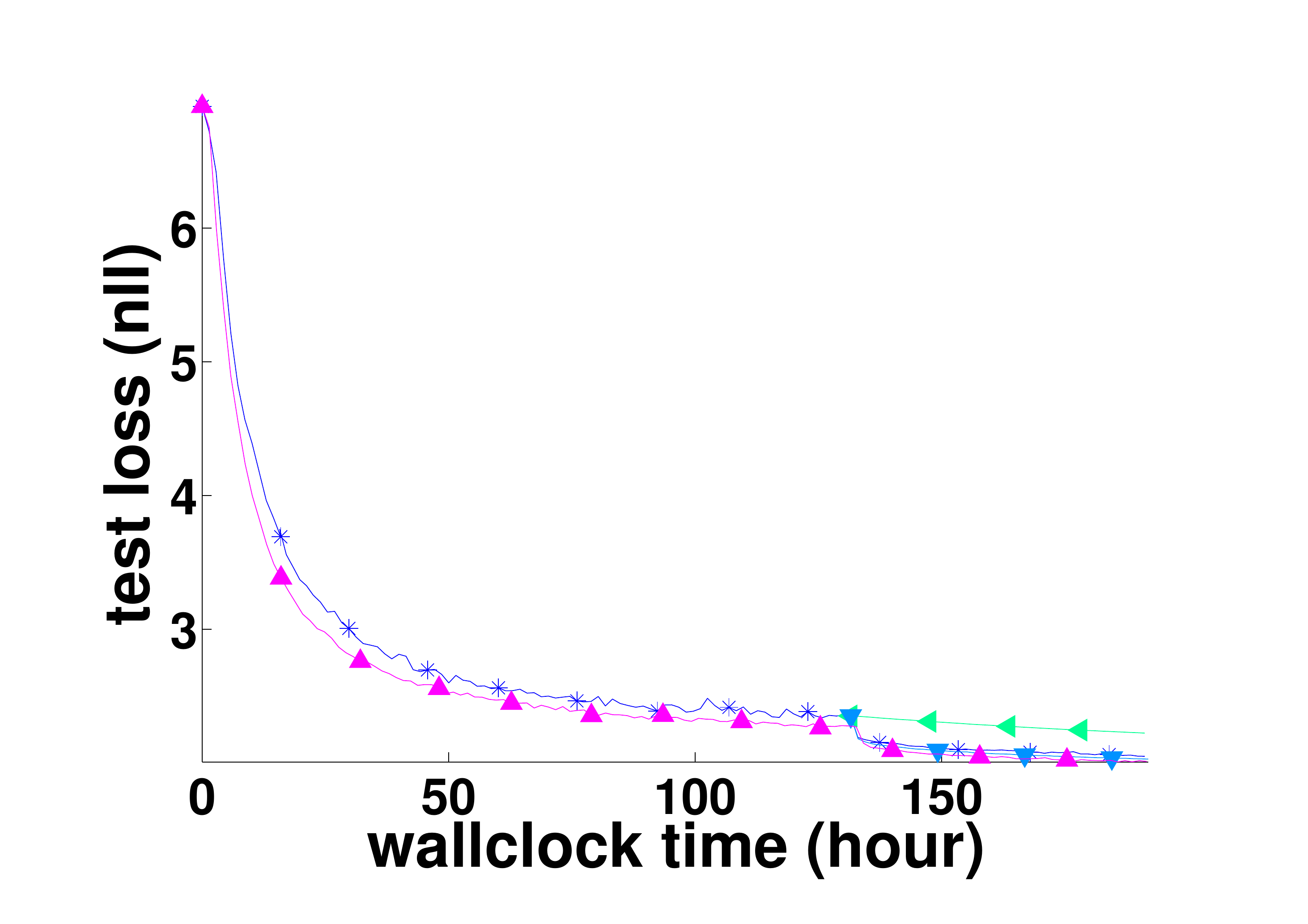} 
\hspace{-0.28in}\includegraphics[trim=0cm 0cm 0cm 0cm,clip,width = 1.56in]{./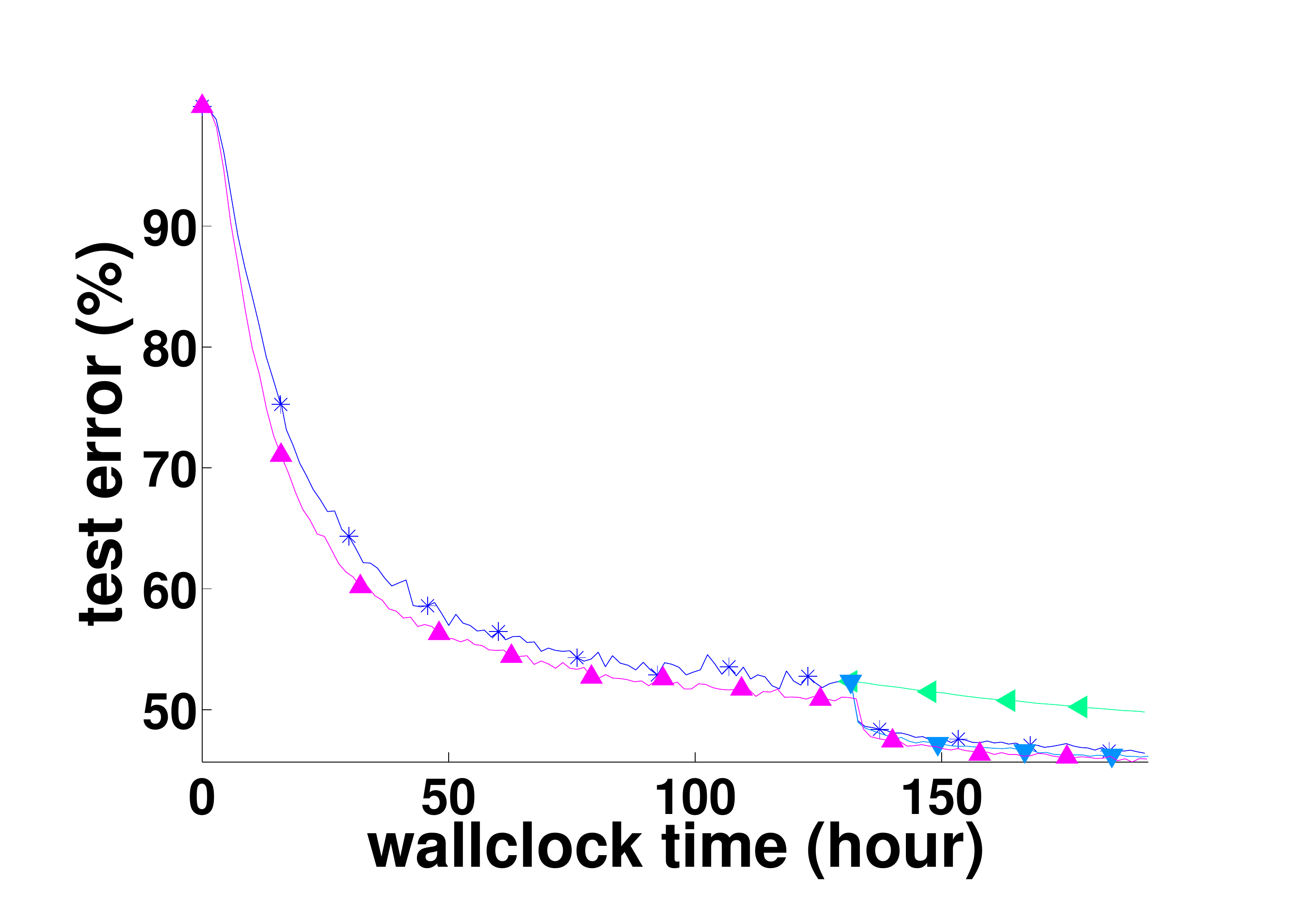}
\hspace{-0.28in}\includegraphics[trim=0cm 0cm 0cm 0cm,clip,width = 1.56in]{./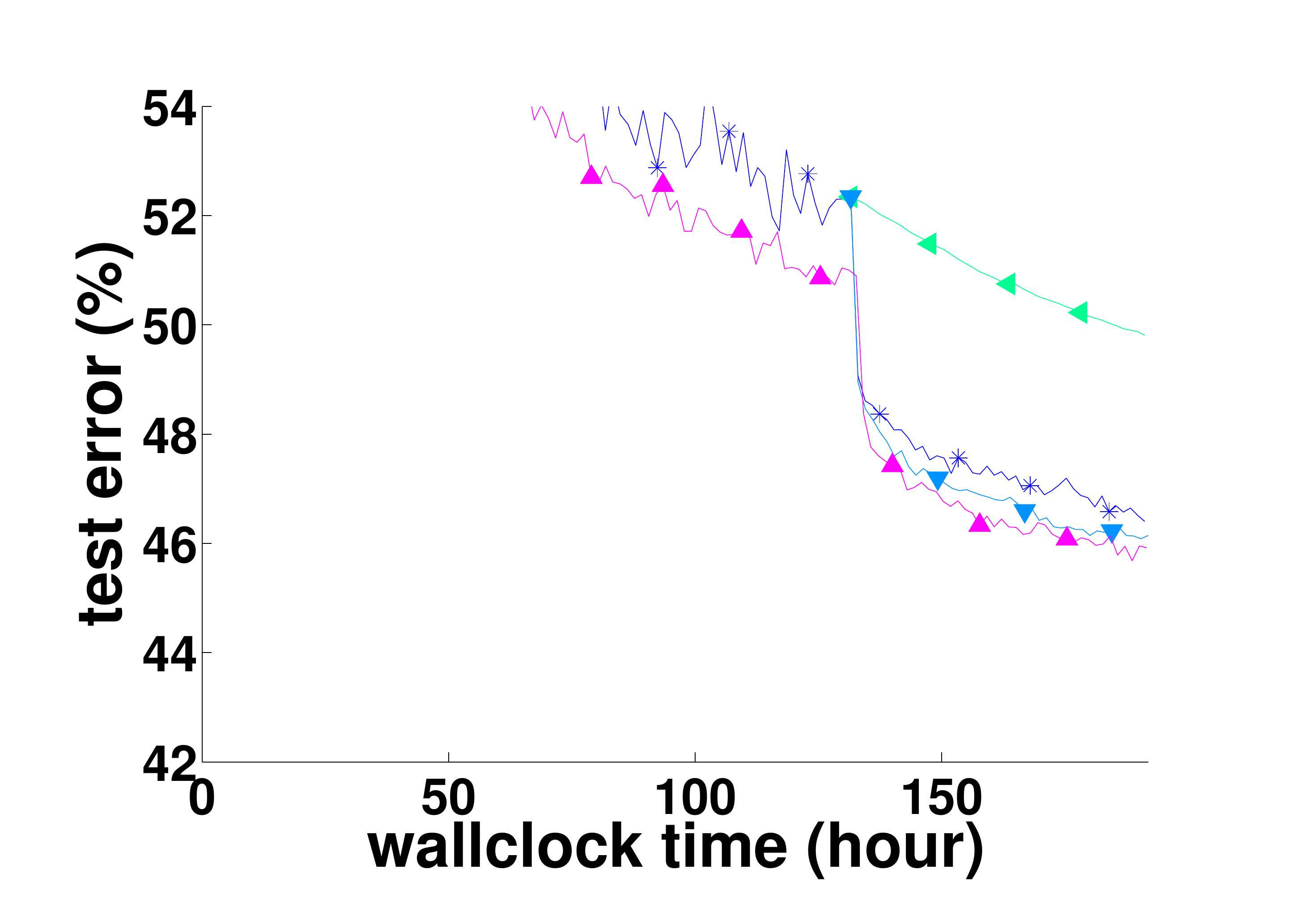}
\vspace{-0.2in}
\caption{Convergence of the training and test loss (negative log-likelihood) and the test error (original and zoomed) computed for the center variable as a function of wallclock time for \textit{SGD}, \textit{ASGD}, \textit{MVASGD} and \textit{MSGD} ($p = 1$) on the \textit{ImageNet} experiment.}
\label{fig:ImageNet_supp}
\vspace{-0.1in}
\end{figure}

Figure~\ref{fig:CIFAR_supp} shows the convergence of the training and test loss (negative log-likelihood) and the test error computed for the center variable as a function of wallclock time for \textit{SGD}, \textit{ASGD}, \textit{MVASGD} and \textit{MSGD} ($p=1$) on the \textit{CIFAR} experiment. For all \textit{CIFAR} experiments we always start the averaging for the $ADOWNPOUR$ and $ASGD$ methods from the very beginning of each experiment. For all \textit{ImageNet} experiments we start the averaging for the $ASGD$ at the same time when we first reduce the initial learning rate. 

Figure~\ref{fig:ImageNet_supp} shows the convergence of the training and test loss (negative log-likelihood) and the test error computed for the center variable as a function of wallclock time for \textit{SGD}, \textit{ASGD}, \textit{MVASGD} and \textit{MSGD} ($p=1$) on the \textit{ImageNet} experiment.

\subsection{Dependence of the learning rate}
\label{sec:deplr}
This section discusses the dependence of the trade-off between exploration and exploitation on the learning rate. We compare the performance of respectively \textit{EAMSGD} and \textit{EASGD} for different learning rates $\eta$ when $p=16$ and $\tau = 10$ on the \textit{CIFAR} experiment. We observe in Figure~\ref{fig:CIFAR3} that higher learning rates $\eta$ lead to better test performance for the \textit{EAMSGD} algorithm which potentially can be justified by the fact that they sustain higher fluctuations of the local workers. We conjecture that higher fluctuations lead to more exploration and simultaneously they also impose higher regularization. This picture however seems to be opposite for the \textit{EASGD} algorithm for which larger learning rates hurt the performance of the method and lead to overfitting. Interestingly in this experiment for both \textit{EASGD} and \textit{EAMSGD} algorithm, the learning rate for which the best training performance was achieved simultaneously led to the worst test performance.

\begin{figure}[h!]
\vspace{-0.1in}
  \center
\includegraphics[trim=0cm 0cm 0cm 0cm,clip,width = 1.56in]{./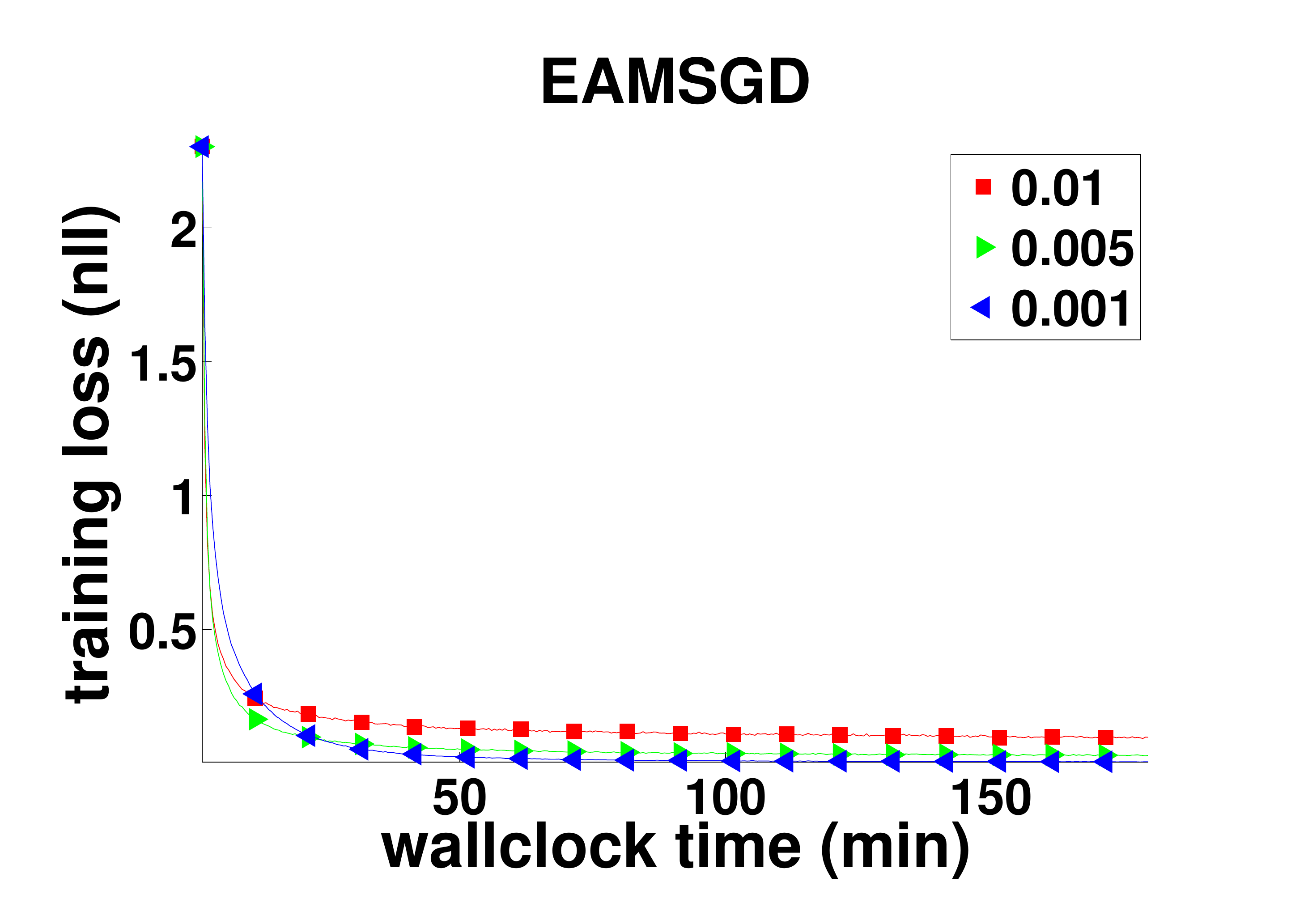}
\hspace{-0.28in}\includegraphics[trim=0cm 0cm 0cm 0cm,clip,width = 1.56in]{./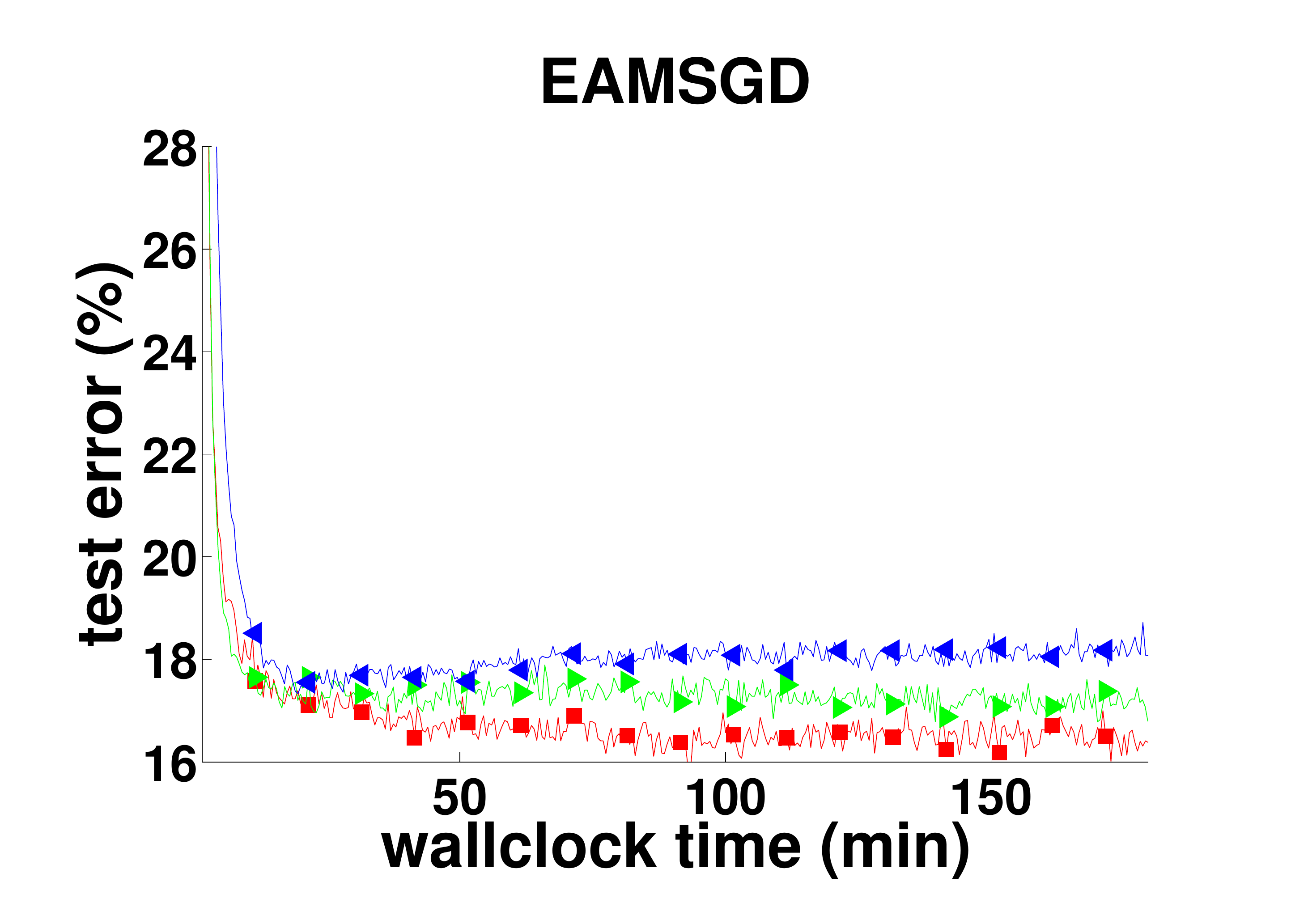}
\hspace{-0.28in}\includegraphics[trim=0cm 0cm 0cm 0cm,clip,width = 1.56in]{./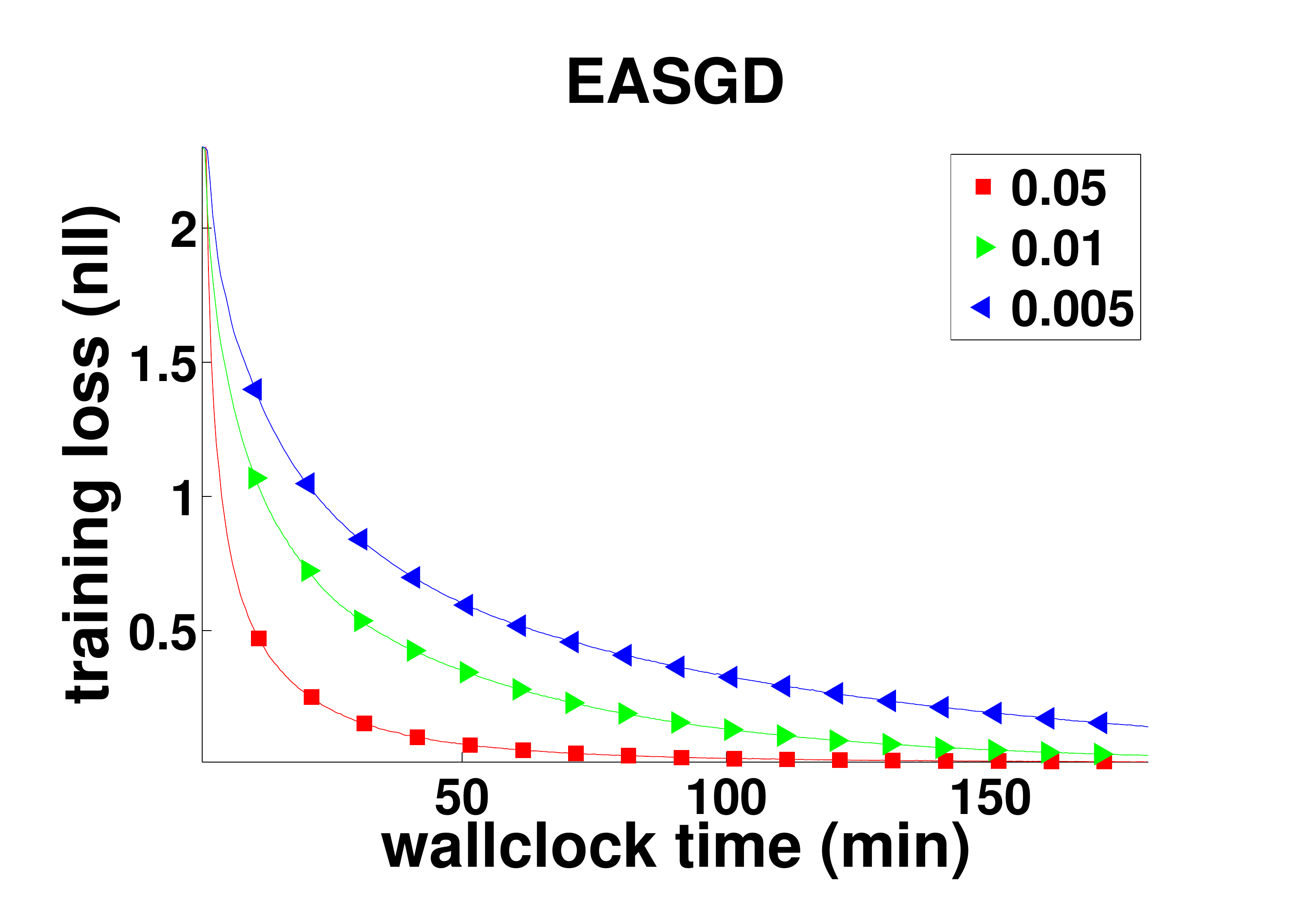} 
\hspace{-0.28in}\includegraphics[trim=0cm 0cm 0cm 0cm,clip,width = 1.56in]{./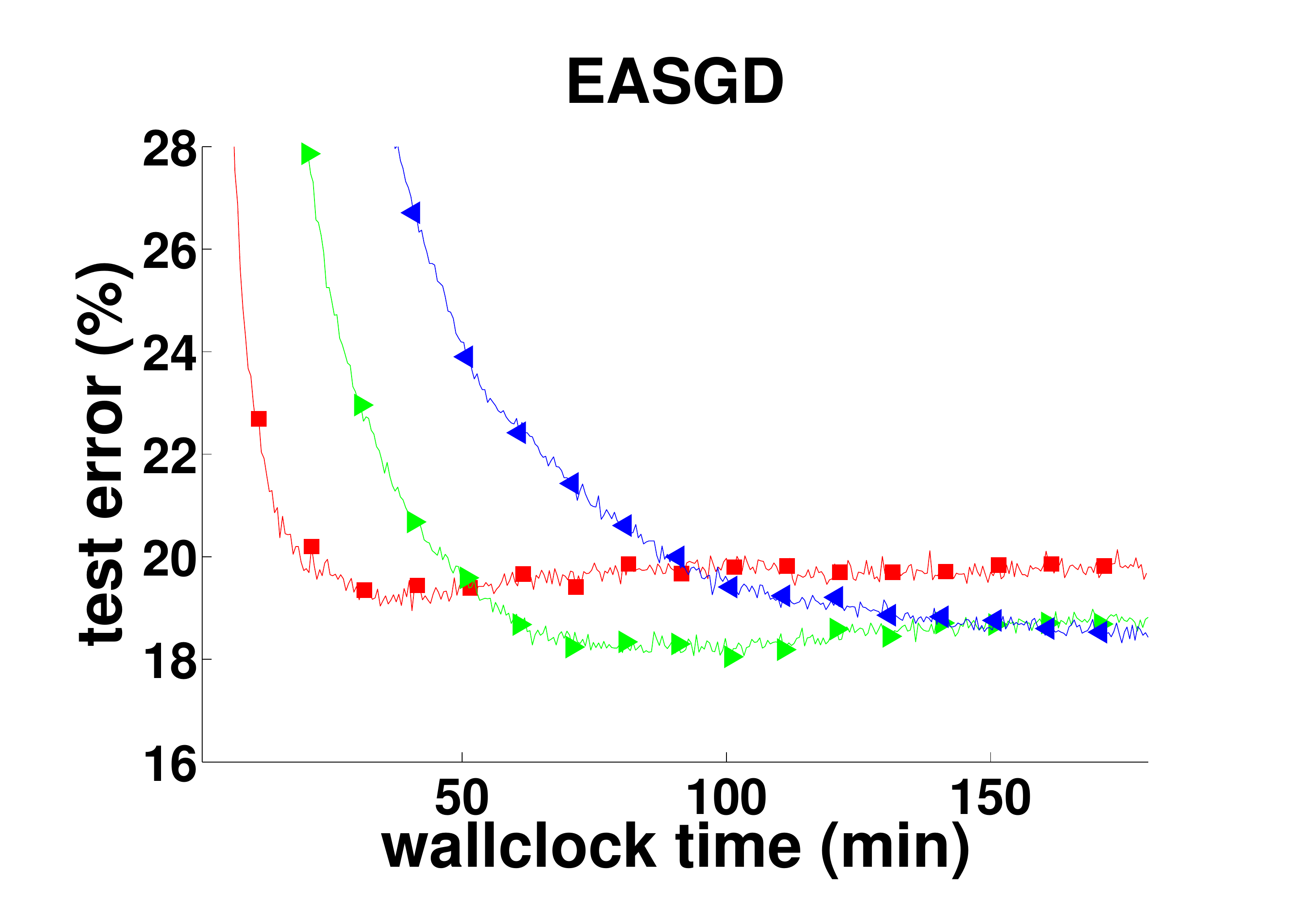}
\vspace{-0.2in}
\caption{Convergence of the training loss (negative log-likelihood, original) and the test error (zoomed) computed for the center variable as a function of wallclock time for \textit{EAMSGD} and  \textit{EASGD} run with different values of $\eta$ on the \textit{CIFAR} experiment. $p=16$, $\tau = 10$.}
\label{fig:CIFAR3}
\vspace{-0.1in}
\end{figure}
% \caption{Convergence of the training and test loss (negative log-likelihood) and the test error computed for the center variable as a function of wallclock time for \textit{EAMSGD} run with different values of $\eta$ on the \textit{CIFAR} experiment. $p=16$, $\tau = 10$.}

\subsection{Dependence of the communication period}
\label{sec:deptau}
%The results so far suggest that more exploration lead to better test performance. It should be remembered however that there is a trade-off between exploration and exploitation. This trade-off heavily depends on the communication period $\tau$, and the learning rate $\eta$. We discuss the dependence on the communication period now and we defer the discussion of the dependence on the learning rate to the Supplement. 
This section discusses the dependence of the trade-off between exploration and exploitation on the communication period. We have observed from the \textit{CIFAR} experiment that \textit{EASGD} algorithm exhibits very similar convergence behavior when $\tau=1$ up to even $\tau=1000$, whereas \textit{EAMSGD} can get trapped at worse energy (loss) level for $\tau=100$. This behavior of \textit{EAMSGD} is most likely due to the non-convexity of the objective function. Luckily, it can be avoided by gradually decreasing the learning rate, i.e. increasing the penalty term $\rho$ (recall $\alpha = \eta \rho$), as shown in Figure~\ref{fig:CIFAR3_supp}. In contrast, the \textit{EASGD} algorithm
does not seem to get trapped at all along its trajectory. 
The performance of \textit{EASGD} is less sensitive to increasing the communication period compared to \textit{EAMSGD}, whereas for the \textit{EAMSGD} the careful choice of the learning rate for large communication periods seems crucial.

Compared to all earlier results, the experiment in this section is re-run three times with a new random\footnote{To clarify, the random initialization we use is by default in Torch's implementation.} seed and with faster cuDNN\footnote{\url{https://developer.nvidia.com/cuDNN}} package\footnote{\url{https://github.com/soumith/cudnn.torch}}. All our methods are implemented in Torch\footnote{\url{http://torch.ch}}. The Message Passing Interface implementation MVAPICH2\footnote{\url{http://mvapich.cse.ohio-state.edu}} is used for the GPU-CPU communication.

\begin{figure}[htp!]
\center
\includegraphics[trim=0cm 0cm 0cm 0cm,clip,width = 1.56in]{./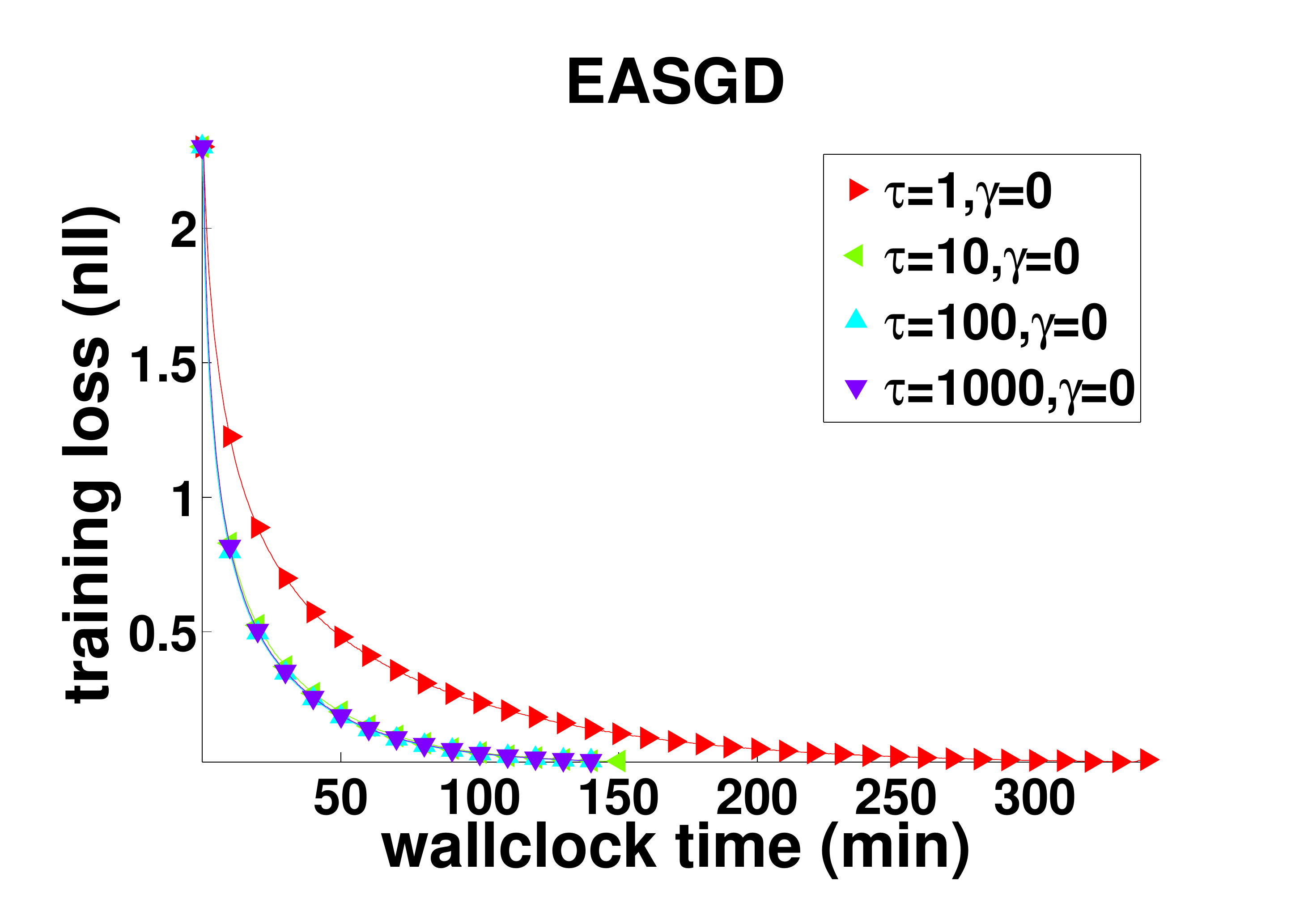}
\hspace{-0.28in}\includegraphics[trim=0cm 0cm 0cm 0cm,clip,width = 1.56in]{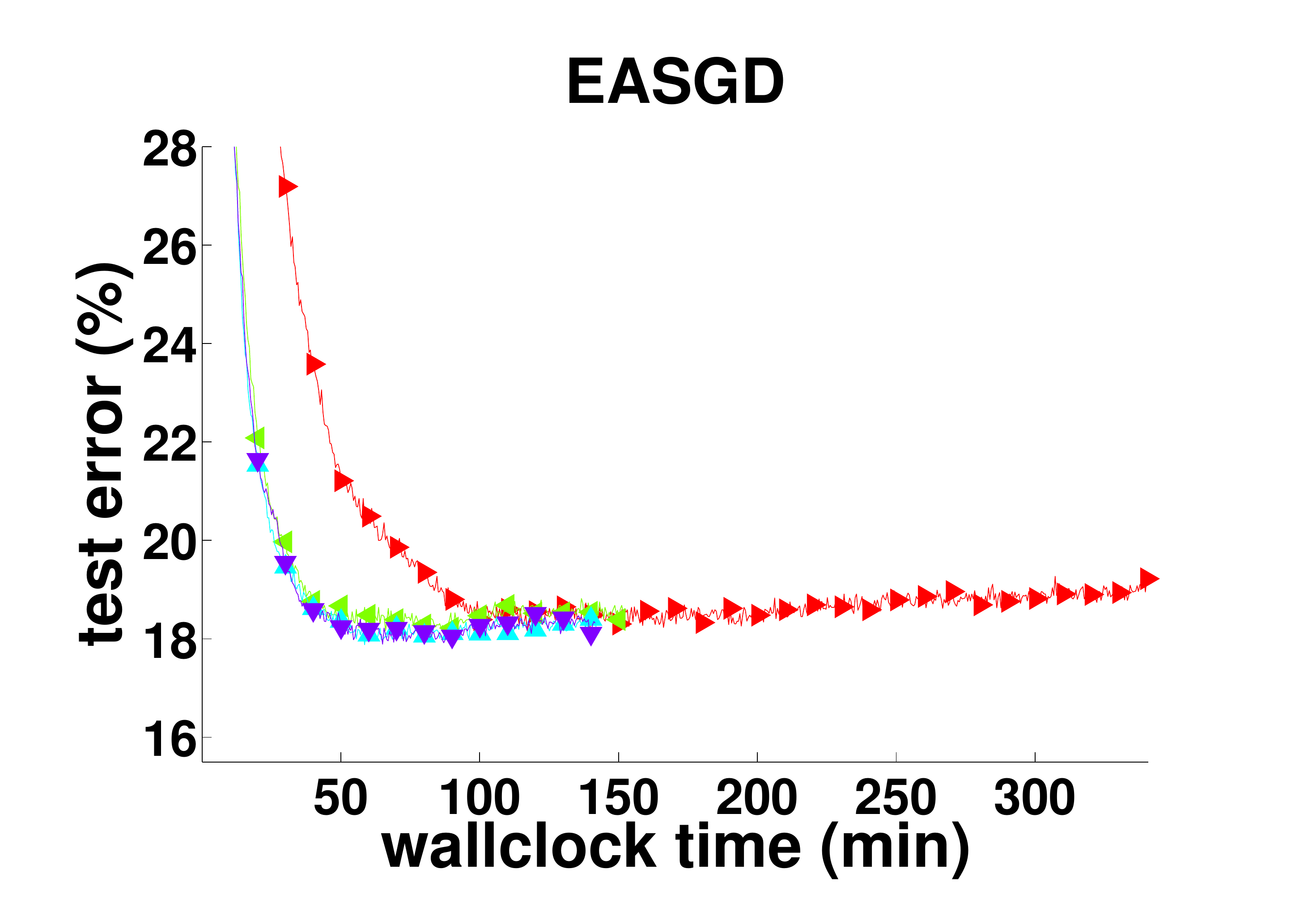}
\hspace{-0.28in}\includegraphics[trim=0cm 0cm 0cm 0cm,clip,width = 1.56in]{./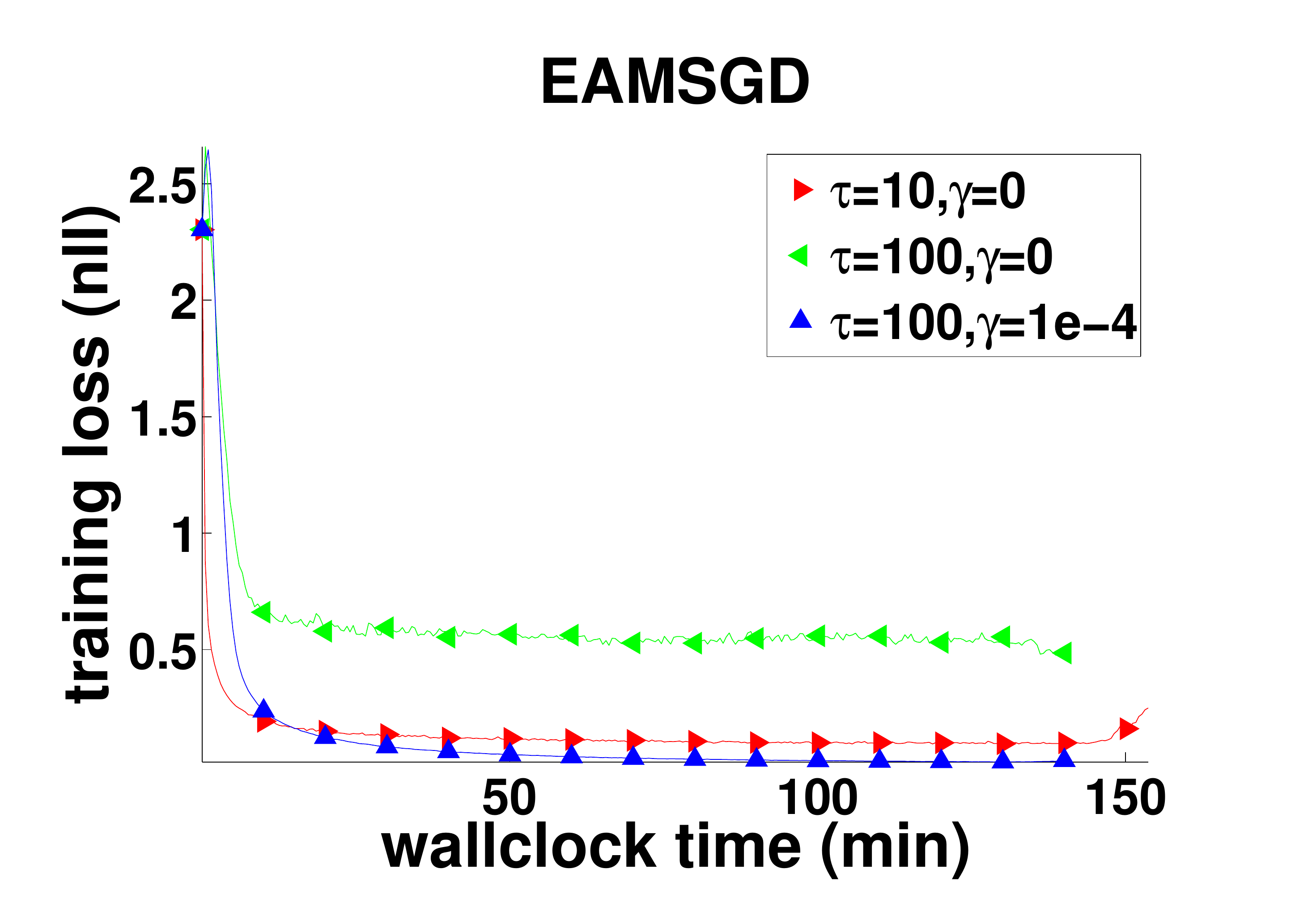} 
\hspace{-0.28in}\includegraphics[trim=0cm 0cm 0cm 0cm,clip,width = 1.56in]{./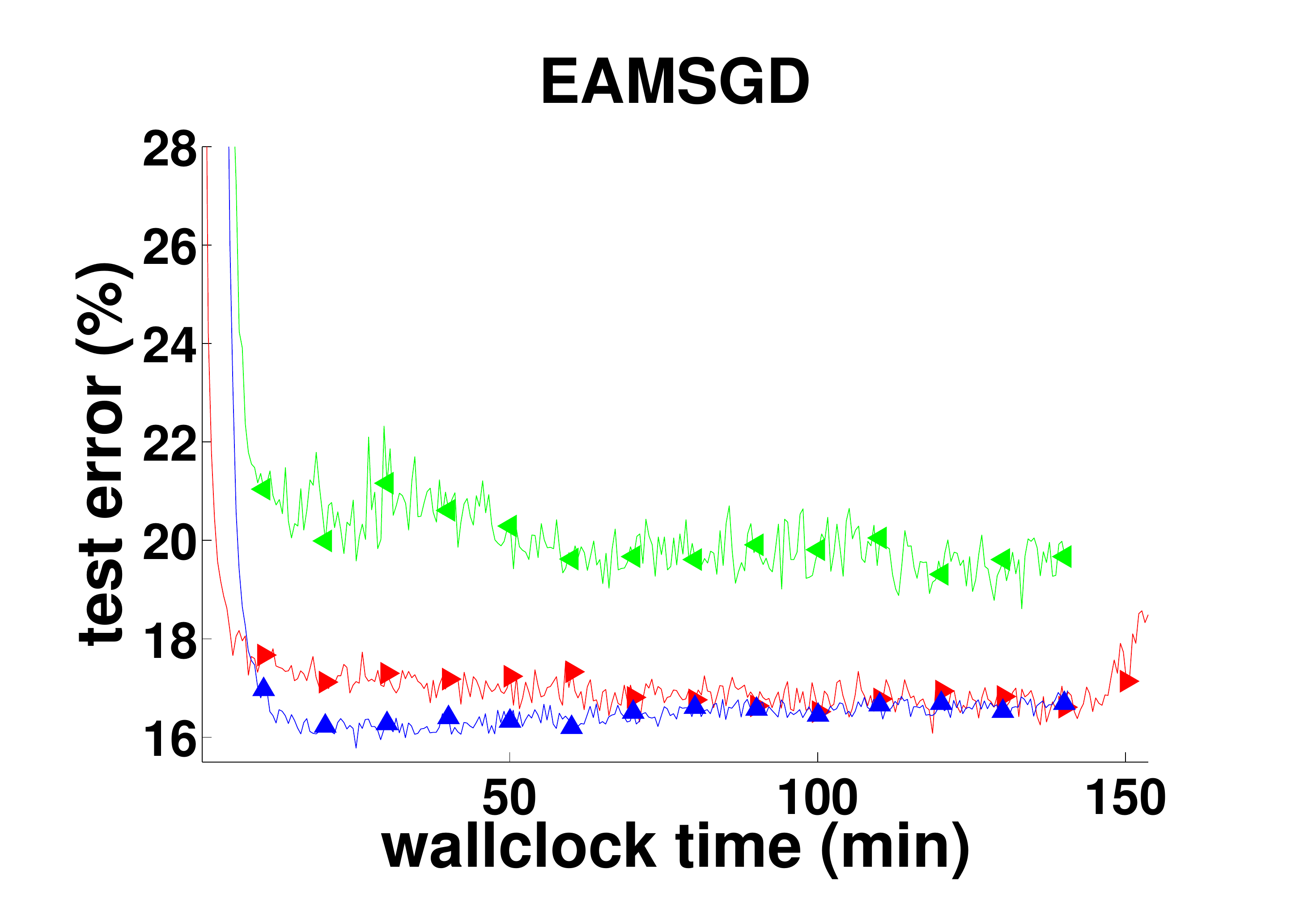}
\caption{Convergence of the training loss (negative log-likelihood, original) and the test error (zoomed) computed for the center variable as a function of wallclock time for \textit{EASGD} and \textit{EAMSGD} 
($p = 16,\eta=0.01,\beta=0.9,\delta=0.99$) on the \textit{CIFAR} experiment with various communication period $\tau$ and learning rate decay $\gamma$. The learning rate is decreased gradually over time based each local worker's own clock $t$ with $\eta_t=  \eta/(1+\gamma t)^{0.5}$.}
\label{fig:CIFAR3_supp}
\vspace{-0.1in}
\end{figure}

\subsection{Breakdown of the wallclock time}

In addition, we report in Table~\ref{tab:comptime} the breakdown of the total running time for \textit{EASGD} when $\tau=10$ (the time breakdown for \textit{EAMSGD} is almost identical) and  \textit{DOWNPOUR} when $\tau = 1$ into computation time, data loading time and parameter communication time. For the \textit{CIFAR} experiment the reported time corresponds to processing $400 \times 128$ data samples whereas for the \textit{ImageNet} experiment it corresponds to processing $1024 \times 128$ data samples. For $\tau=1$ and $p \in \{8,16\}$ we observe that the communication time accounts for significant portion of the total running time whereas for $\tau=10$ the communication time becomes negligible compared to the total running time (recall that based on previous results \textit{EASGD} and \textit{EAMSGD} achieve best performance with larger $\tau$ which is ideal in the setting when communication is time-consuming). 
% \textcolor{red}{To help interpret the wallclock time},
\begin{table}[h!]
\vspace{-0.10in}
\setlength{\tabcolsep}{1.75pt}
\begin{minipage}{.5\linewidth}
  \begin{center}
    \begin{tabular}{c|c|c|c|c|}
  \cline{2-5} 
\multirow{1}{*}{} & $p=1$ & $p=4$ & $p=8$ & $p=16$\\
    \hline
\multicolumn{1}{|c|}{\multirow{1}{*}{$\tau = 1$}} & $12/1/0$ & $11/2/3$ & $11/2/5$ & $11/2/9$\\
\hline
\hline
\multicolumn{1}{|c|}{\multirow{1}{*}{$\tau = 10$}} & NA & $11/2/1$ & $11/2/1$ & $12/2/1$\\
    \hline
    \end{tabular}
  \end{center}
\end{minipage}
\begin{minipage}{.5\linewidth}
  \begin{center}
    \begin{tabular}{c|c|c|c|}
  \cline{2-4} 
\multirow{1}{*}{} & $p=1$ & $p=4$ & $p=8$\\
    \hline
\multicolumn{1}{|c|}{\multirow{1}{*}{$\tau = 1$}} & $1248/20/0$ & $1323/24/173$ & $1239/61/284$\\
\hline
\hline
\multicolumn{1}{|c|}{\multirow{1}{*}{$\tau = 10$}} & NA & $1254/58/7$ & $1266/84/11$\\
    \hline
    \end{tabular}
  \end{center}
\end{minipage}
\caption{Approximate computation time, data loading time and parameter communication time [sec] for \textit{DOWNPOUR} (top line for $\tau=1$) and \textit{EASGD} (the time breakdown for \textit{EAMSGD} is almost identical) (bottom line for $\tau=10$). Left time corresponds to \textit{CIFAR} experiment and right table corresponds to \textit{ImageNet} experiment.}
\label{tab:comptime}
\vspace{-0.05in}
\end{table}

\subsection{Time speed-up}
%  (\textit{EASGD}, \textit{EAMSGD}, \textit{DOWNPOUR} and respectively \textit{MSGD} and \textit{MDOWNPOUR} , \textit{MSGD} for 
In Figure~\ref{fig:CIFARspeedup} and~\ref{fig:ImageNetspeedup}, we summarize the wall clock time needed to achieve the same level of the test error for all the methods in the \textit{CIFAR} and \textit{ImageNet} experiment as a function of the number of local workers $p$. For the \textit{CIFAR} (Figure~\ref{fig:CIFARspeedup}) we examined the following levels: $\{21\%,20\%,19\%,18\%\}$ and for the \textit{ImageNet} (Figure~\ref{fig:ImageNetspeedup}) we examined: $\{49\%,47\%,45\%,43\%\}$. If some method does not appear on the figure for a given test error level, it indicates that this method never achieved this level. For the \textit{CIFAR} experiment we observe that from among \textit{EASGD}, \textit{DOWNPOUR} and \textit{MDOWNPOUR} methods, the \textit{EASGD} method needs less time to achieve a particular level of test error. We observe that with higher $p$ each of these methods does not necessarily need less time to achieve the same level of test error. This seems counter intuitive though recall that the learning rate for the methods is selected based on the smallest achievable test error. For larger $p$ smaller learning rates were selected than for smaller $p$ which explains our results. Meanwhile, the \textit{EAMSGD} method achieves significant speed-up over other methods for all the test error levels. For the \textit{ImageNet} experiment we observe that all methods outperform \textit{MSGD} and furthermore with $p=4$ or $p=8$ each of these methods requires less time to achieve the same level of test error. The \textit{EAMSGD} consistently needs less time than any other method, in particular \textit{DOWNPOUR}, to achieve any of the test error levels. 
\begin{figure}[h]
\vspace{-0.15in}
  \center
\includegraphics[trim=0cm 0cm 0cm 0cm,clip,width = 1.56in]{./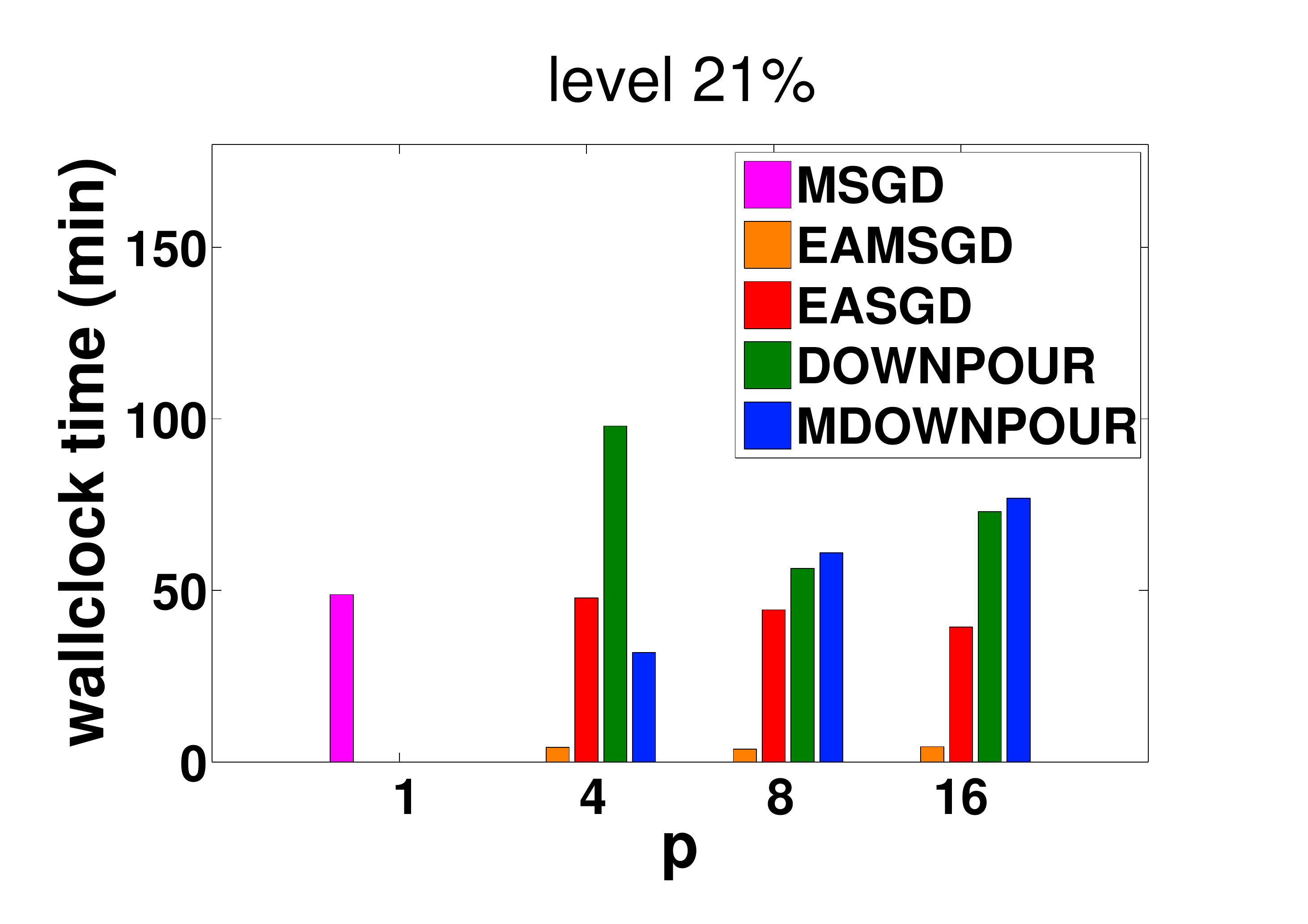}
\hspace{-0.28in}\includegraphics[trim=0cm 0cm 0cm 0cm,clip,angle=0,width = 1.56in]{./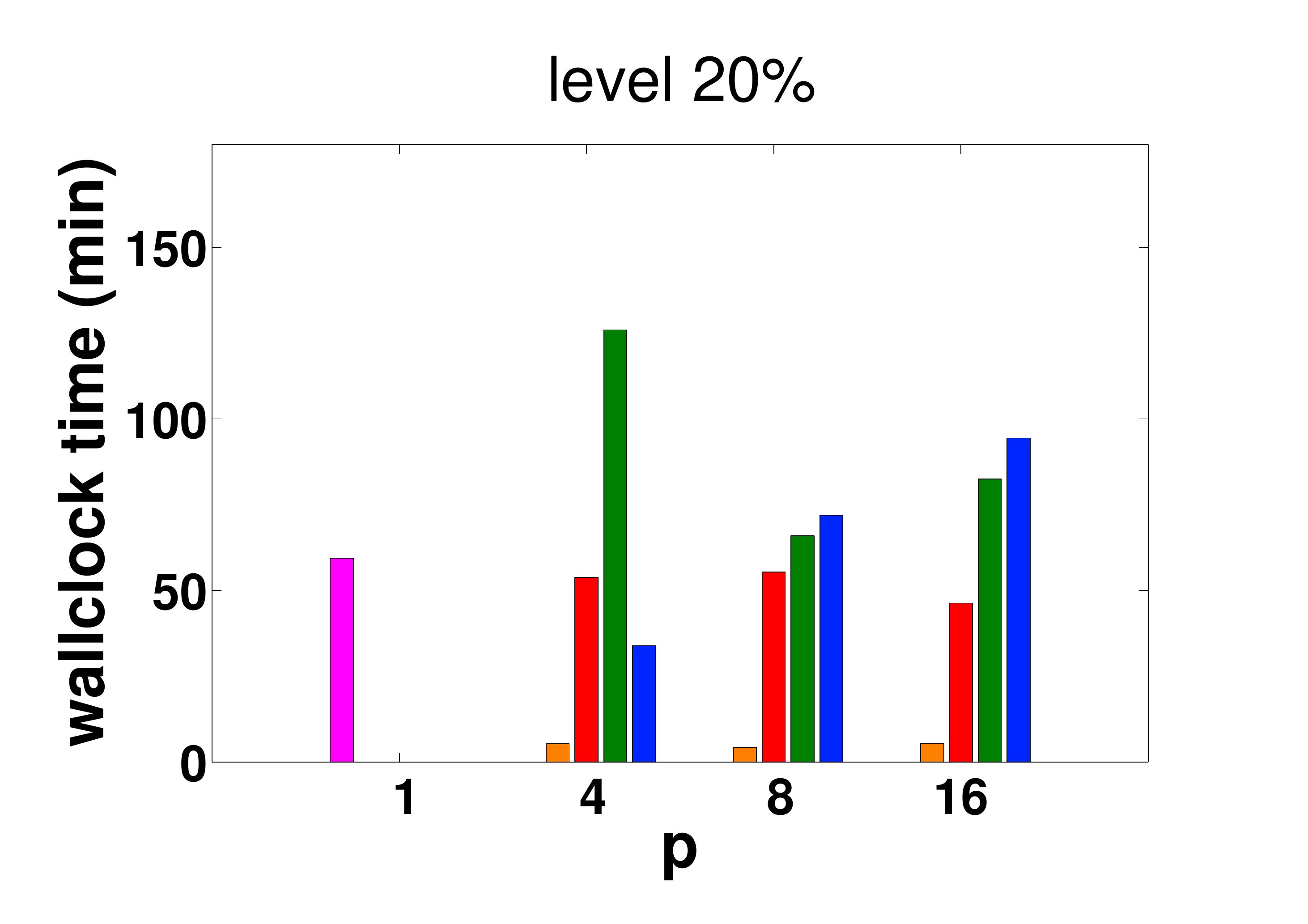} 
\hspace{-0.28in}\includegraphics[trim=0cm 0cm 0cm 0cm,clip,angle=0,width = 1.56in]{./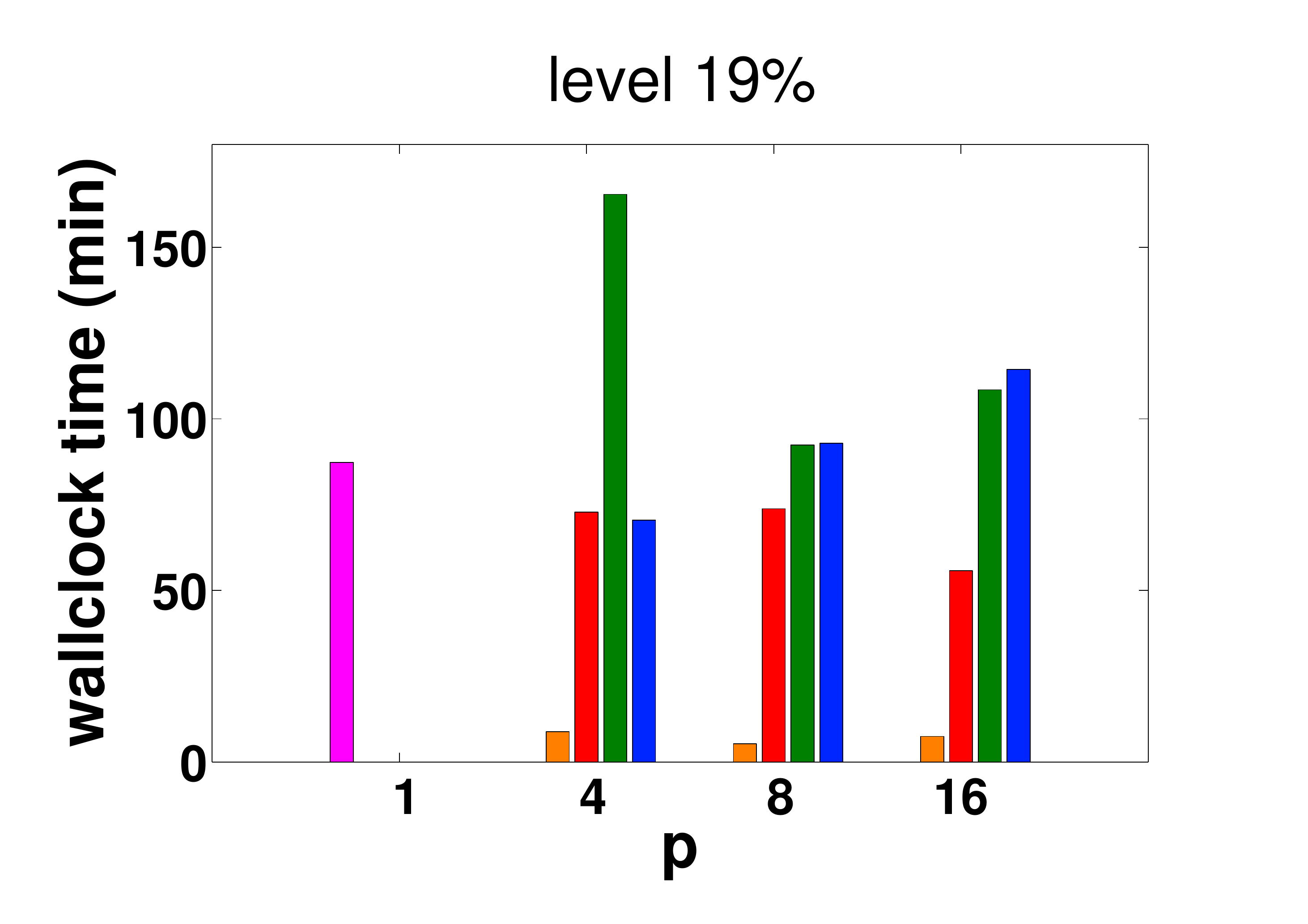}
\hspace{-0.28in}\includegraphics[trim=0cm 0cm 0cm 0cm,clip,angle=0,width = 1.56in]{./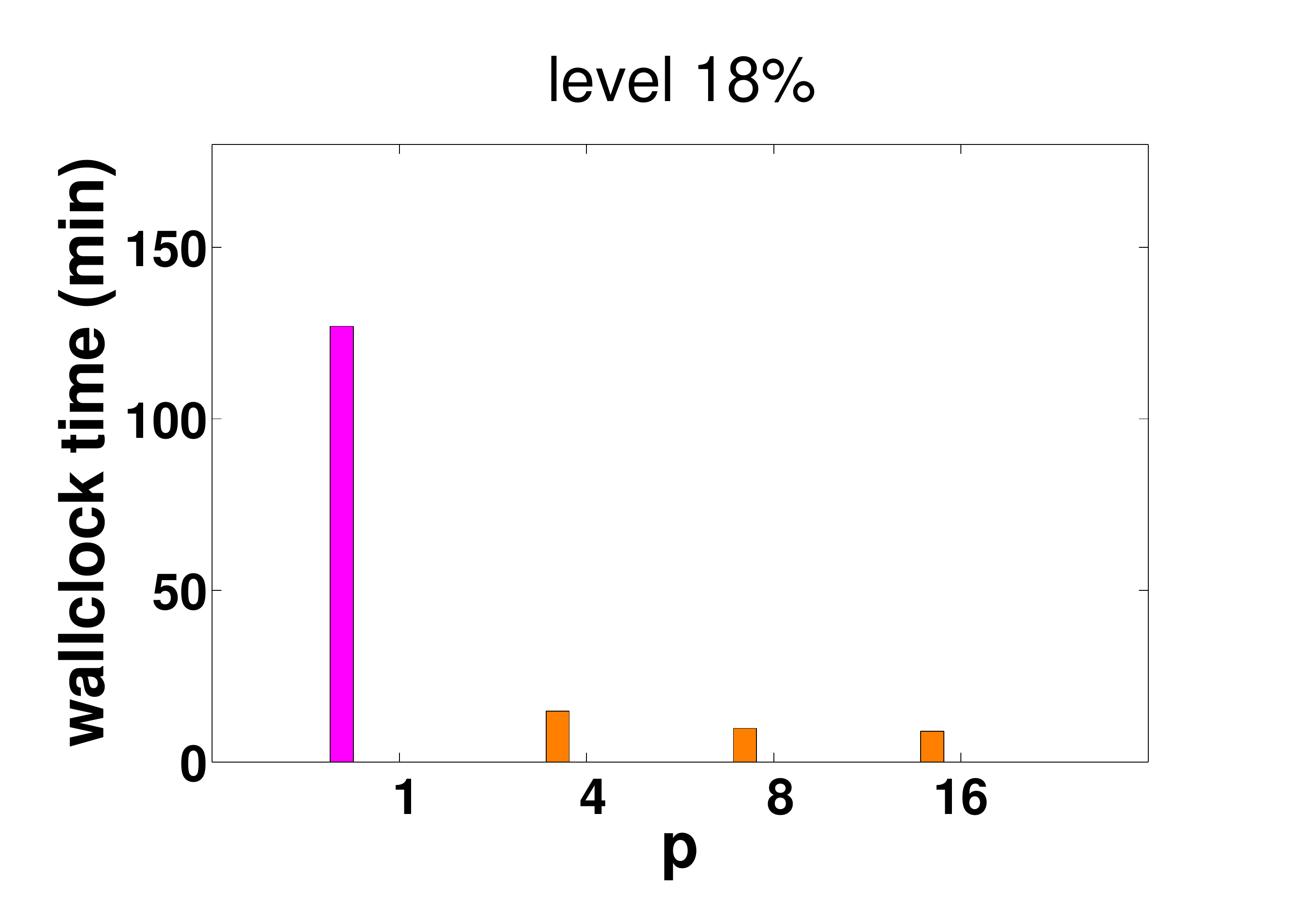}
\vspace{-0.3in}
\caption{The wall clock time needed to achieve the same level of the test error $thr$ as a function of the number of local workers $p$ on the \textit{CIFAR} dataset. From left to right:  $thr = \{21\%,20\%,19\%,18\%\}$. Missing bars denote that the method never achieved specified level of test error.}.
\label{fig:CIFARspeedup}
\vspace{-0.20in}
\end{figure}

\begin{figure}[htp!]
\vspace{-0.25in}
  \center
\includegraphics[trim=0cm 0cm 0cm 0cm,clip,width = 1.56in]{./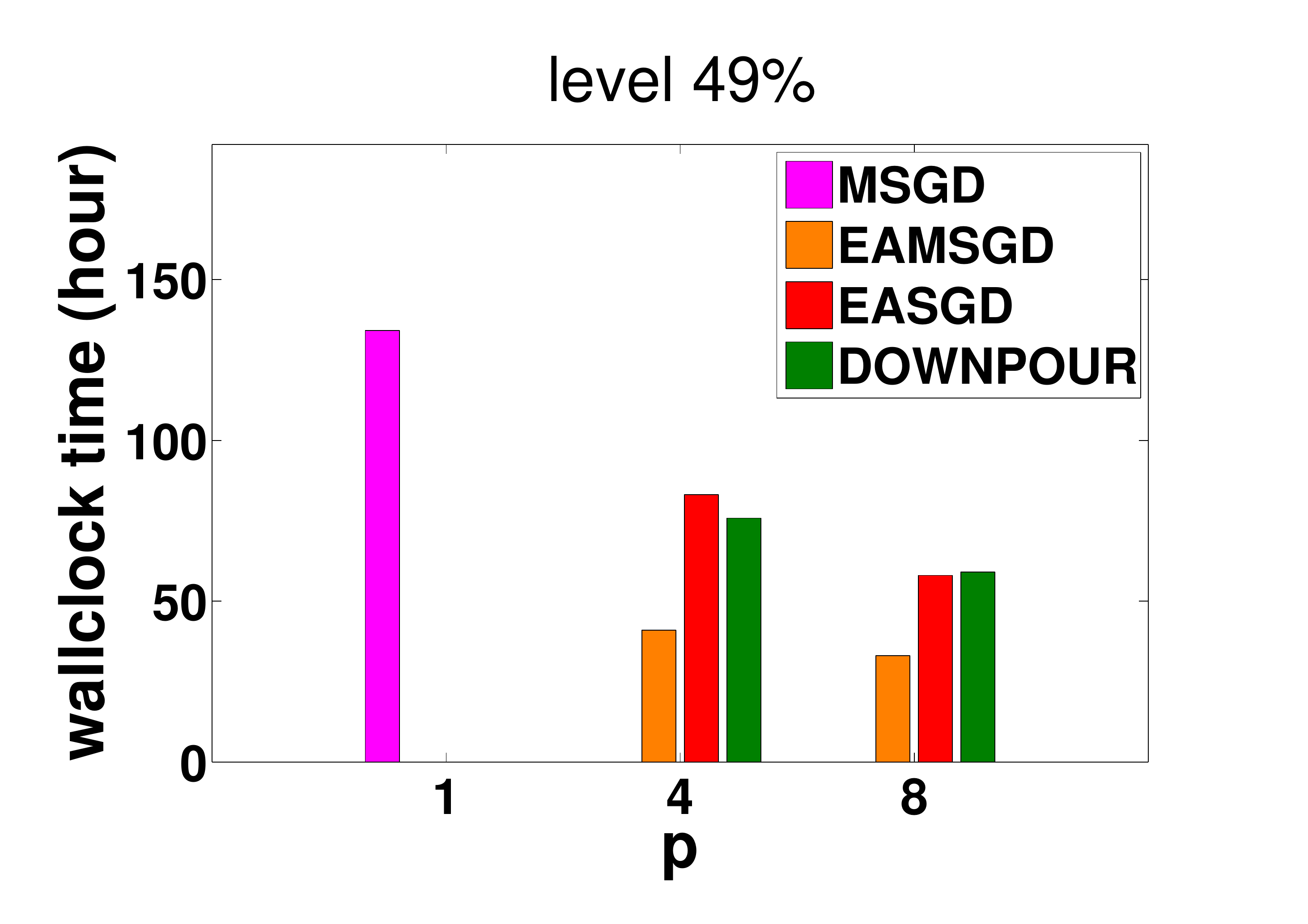}
\hspace{-0.28in}\includegraphics[trim=0cm 0cm 0cm 0cm,clip,angle=0,width = 1.56in]{./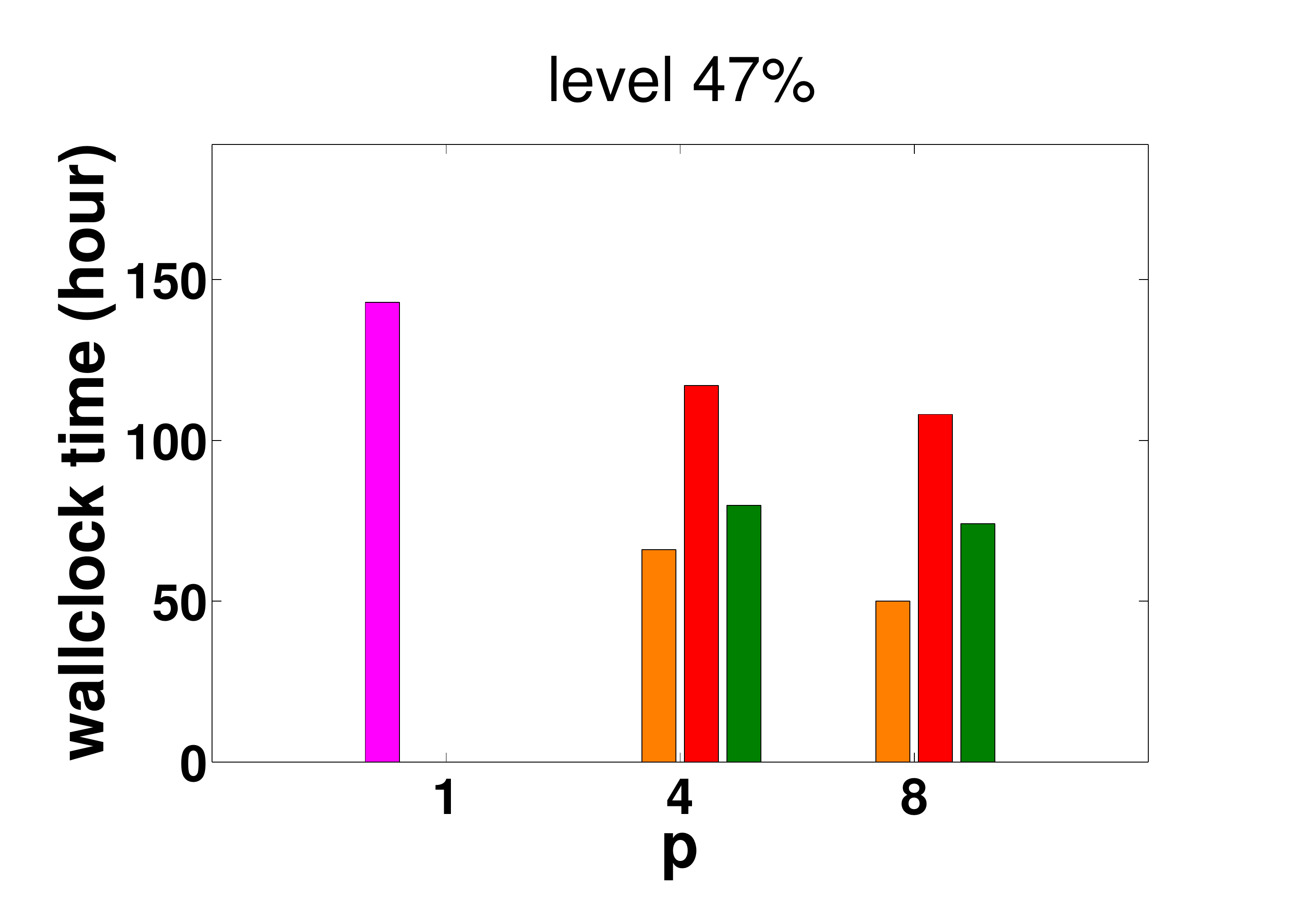} 
\hspace{-0.28in}\includegraphics[trim=0cm 0cm 0cm 0cm,clip,angle=0,width = 1.56in]{./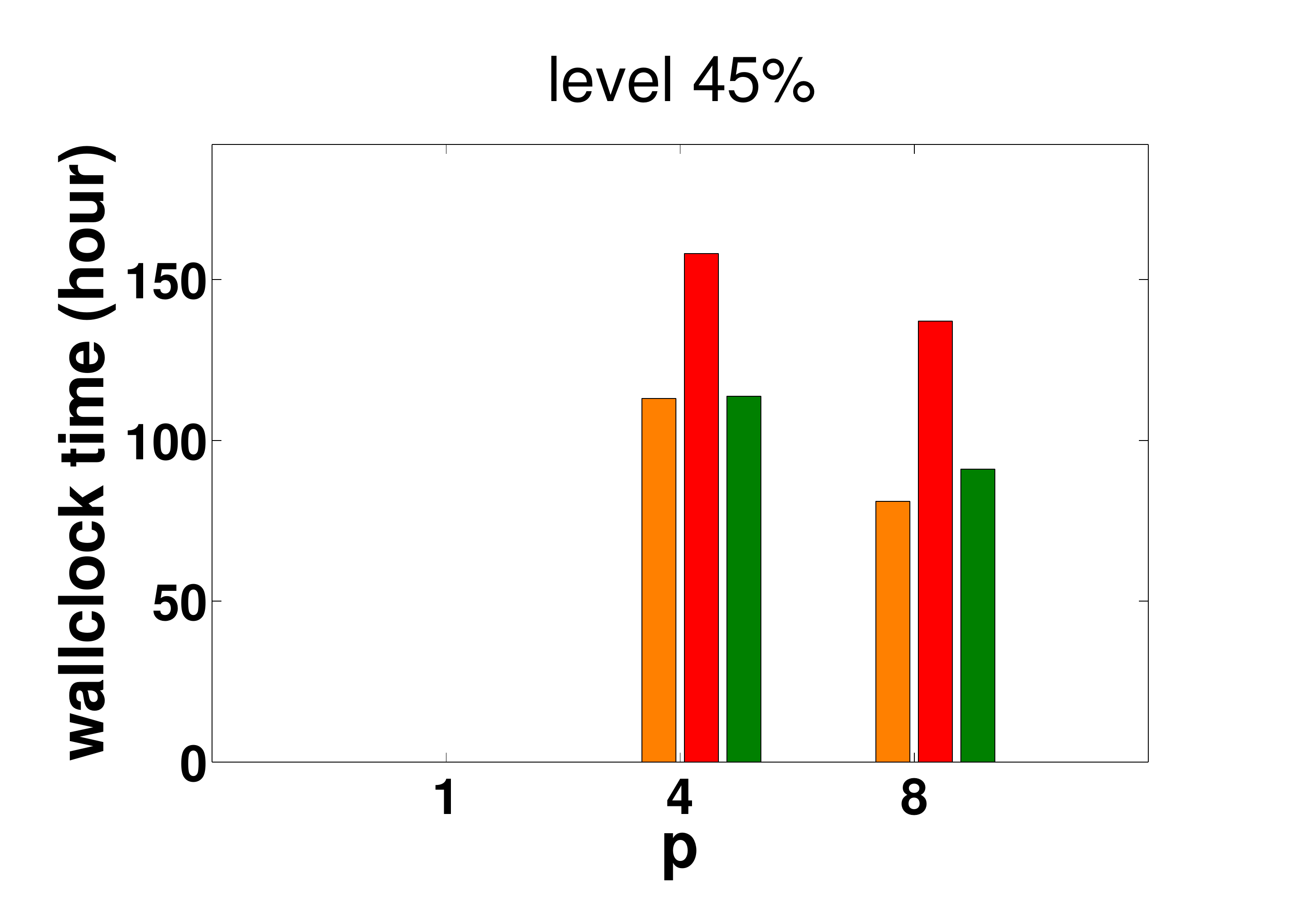}
\hspace{-0.28in}\includegraphics[trim=0cm 0cm 0cm 0cm,clip,angle=0,width = 1.56in]{./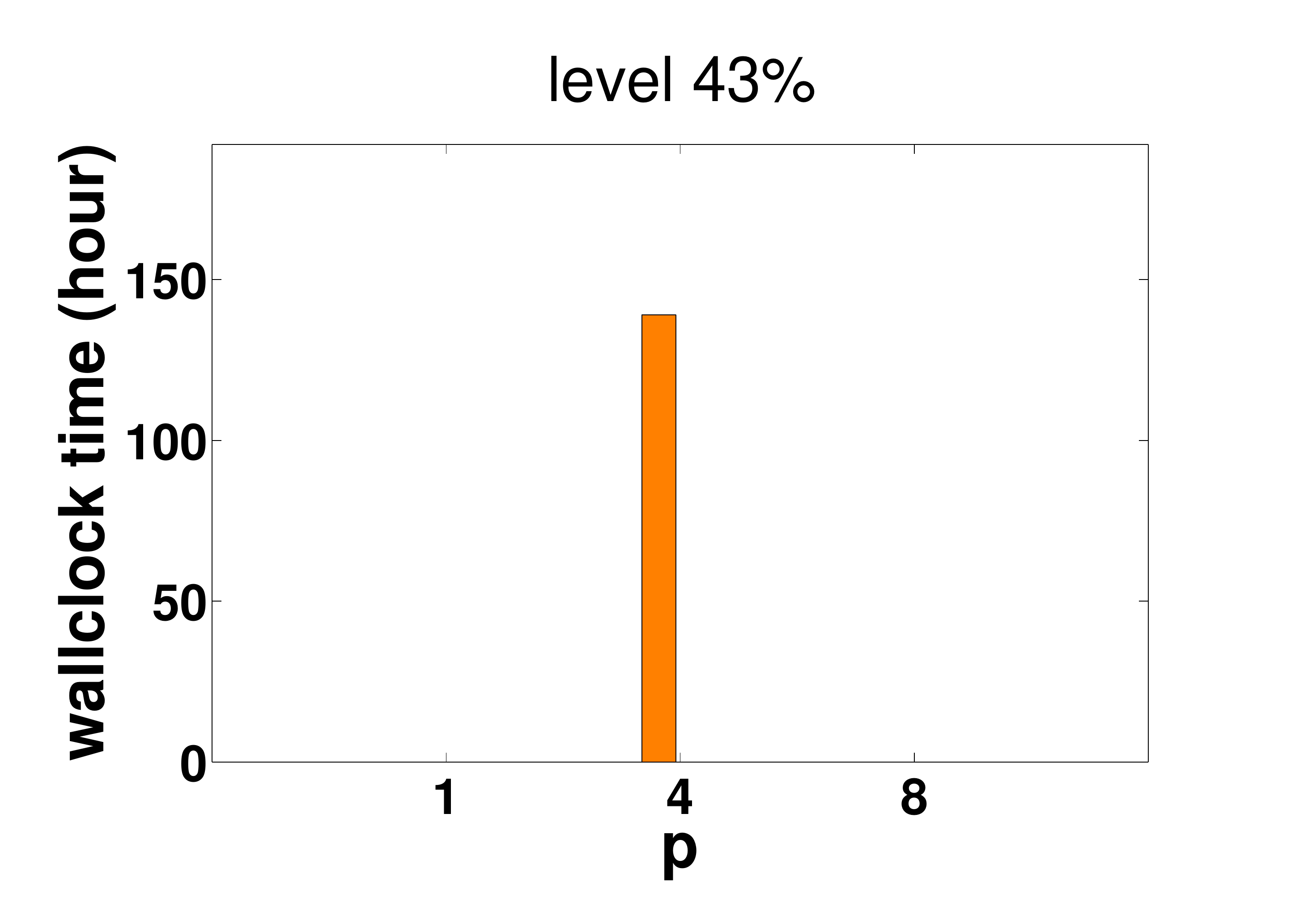}
\vspace{-0.3in}
\caption{The wall clock time needed to achieve the same level of the test error $thr$ as a function of the number of local workers $p$ on the \textit{ImageNet} dataset. From left to right:  $thr = \{49\%,47\%,45\%,43\%\}$. Missing bars denote that the method never achieved specified level of test error.}.
\label{fig:ImageNetspeedup}
\vspace{-0.1in}
\end{figure}

\end{document}